\title{Bounding Bloat in Genetic Programming}
\author{Benjamin Doerr$^\dagger$ \and Timo Kötzing$^\star$ \and J. A. Gregor Lagodzinski$^\star$ \and Johannes Lengler$^\diamond$}
\date{\today}
\newcommand{\order}{\textsc{Order}\xspace}
\newcommand{\majority}{\textsc{Majority}\xspace}
\newcommand{\tinit}{T_{\mathrm{init}}}
\newcommand{\gptree}{GP-tree\xspace}
\newcommand{\gptrees}{GP-trees\xspace}
\newcommand{\Tmax}{T_{\mathrm{max}}}
\newcommand{\Tmin}{T_{\mathrm{min}}}
\newcommand{\var}{x}
\newcommand{\nonvar}{\overline{x}}
\newcommand{\Pois}{\mathrm{Pois}}
\newcommand{\EE}{\mathbb{E}}
\newcommand{\eps}{\varepsilon}
\newcommand{\BigO}{\mathrm{O}}
\newcommand{\LittleO}{\mathrm{o}}
\newcommand{\Prob}[1]{\mathrm{Pr}[#1 ]}
\newcommand{\Ex}[1]{\mathbb{E}[#1 ]}
\newcommand{\event}[1]{\ensuremath{\mathcal{{#1}}}}
\newcommand{\statespace}[1]{\ensuremath{{\mathscr{{#1}}}}}
\newcommand{\filtration}[1]{\ensuremath{{\mathscr{{#1}}}}}
\newcommand{\assign}{\leftarrow}
\theoremstyle{plain}
\newtheorem{theorem}{Theorem}[section]
\newtheorem{corollary}[theorem]{Corollary}
\newtheorem{lemma}[theorem]{Lemma}
\theoremstyle{definition}
\newcommand{\ignore}[1]{}
\newcommand{\oneonegp}{(1+1)~GP\xspace}
\newcommand{\set}[2]{\{#1 \; | \; #2\}}
\newcommand{\N}{\mathbb{N}}
\begin{document}

\maketitle

\begin{abstract}
While many optimization problems work with a fixed number of decision variables and thus a fixed-length representation of possible solutions, genetic programming (GP) works on variable-length representations. A naturally occurring problem is that of bloat (unnecessary growth of solutions) slowing down optimization. Theoretical analyses could so far not bound bloat and required explicit assumptions on the magnitude of bloat.

In this paper we analyze bloat in mutation-based genetic programming for the two test functions \order and \majority. We overcome previous assumptions on the magnitude of bloat and give matching or close-to-matching upper and lower bounds for the expected optimization time.

In particular, we show that the \oneonegp takes (i) $\Theta(\tinit + n \log n)$ iterations with bloat control on \order as well as \majority; and (ii) $\BigO(\tinit \log \tinit + n (\log n)^3)$ and $\Omega(\tinit + n \log n)$ (and $\Omega(\tinit \log \tinit)$ for $n=1$) iterations without bloat control on \majority.\footnote{An extended abstract of the paper at hand has been published at GECCO 2017}
\end{abstract}

\section{Introduction}
\label{sec:intro}
While much work on nature-inspired search heuristics focuses on representing problems with strings of a fixed length (simulating a genome), genetic programming considers trees of variable size. One of the main problems when dealing with a variable-size representation is the problem of \emph{bloat}, meaning an unnecessary growth of representations, exhibiting many redundant parts and slowing down the search.

In this paper we study the problem of bloat from the perspective of run time analysis. We want to know how optimization proceeds when there is no explicit bloat control, which is a setting notoriously difficult to analyze formally: Previous works were only able to give results conditional on strong assumptions on the bloat (such as upper bounds on the total bloat), see~\cite{NguUrlWag:c:13:Foga} for an overview. The only exception is the very recent work~\cite{koetzing2018crossover} continuing the line of research presented here.

We use recent advances from drift theory as well as other tools from the analysis of random walks to bound the behavior and impact of bloat, thus obtaining unconditional bounds on the expected optimization time even when no bloat control is active.

%\begin{table*}[t]
%\caption{\textbf{Summary of best known bounds.} Note that $T_{\mathrm{max}}$ denotes the maximal size of the best-so-far tree in the run until optimization finished (we consider bounds involving $T_{\mathrm{max}}$ as conditional bounds).
%}\label{table:overviewResults}
%\begin{center}
%\renewcommand{\arraystretch}{1.5}
%\begin{tabular}{@{}llll@{}}
%\toprule
%\textbf{Problem} & \textbf{$k$} & \textbf{Without Bloat Control} & \textbf{With Bloat Control}\\
%\midrule
%\multirow{2}{*}{ORDER} 
%	& $1$
%		& $\BigO(nT_{\mathrm{max}})$, \cite{GPFOGA11} 
%		& $\Theta(\tinit  + n \log n)$, \cite{NeuGECCO12}\\
%	\cmidrule{2-4}
%	& $1 + \Pois(1)$
%		& $\BigO(nT_{\mathrm{max}})$, \cite{GPFOGA11}
%		& $\Theta(\tinit + n \log n)$, Theorem~\ref{thm:bloatControl}\\
%\midrule[\cmidrulewidth]
%\multirow{6}{*}{MAJORITY} 
%	& \multirow{3}{*}{$1$}
%		& $\BigO(\tinit \log \tinit + n \log^3 n)$, Theorem~\ref{thm:majority} 
%		& \multirow{3}{*}{$\Theta(\tinit  + n \log n)$, \cite{NeuGECCO12}}\\
%	&	& $\Omega(\tinit \log \tinit)$, $n=1$, Theorem~\ref{thm:majority_lbound} \\
%	&	& $\Omega(\tinit  + n \log n)$, Theorem~\ref{thm:majority_lbound} \\
%	\cmidrule{2-4}
%	& \multirow{3}{*}{$1 + \Pois(1)$}
%		& $\BigO(\tinit \log \tinit + n \log^3 n)$, Theorem~\ref{thm:majority} 
%		& \multirow{3}{*}{$\Theta(\tinit  + n \log n)$, Theorem~\ref{thm:bloatControl}}\\
%		&	& $\Omega(\tinit \log \tinit)$, $n=1$, Theorem~\ref{thm:majority_lbound}\\
%		&	& $\Omega(\tinit  + n \log n)$, Theorem~\ref{thm:majority_lbound}\\
%\bottomrule
%\end{tabular}
%\end{center}
%\end{table*}

\begin{table*}[t]
	\caption{\textbf{Summary of best known bounds.} Note that $T_{\mathrm{max}}$ denotes the maximal size of the best-so-far tree in the run until optimization finished (we consider bounds involving $T_{\mathrm{max}}$ as conditional bounds).
	}\label{table:overviewResults}
	{\renewcommand{\arraystretch}{1.4}
	\begin{tabularx}{\textwidth}{@{}l c X X@{}}
		\toprule
		\textbf{Problem} & \textbf{$k$} & \textbf{Without Bloat Control} & \textbf{With Bloat Control}\\
		\midrule
		\multirow{3}{*}{\order}
			& $1$ & $\BigO(nT_{\mathrm{max}})$, \cite{GPFOGA11} 
		& $\Theta(\tinit  + n \log n)$, \cite{NeuGECCO12} \\
			\cmidrule{2-4}
		& \multirow{2}{*}{$1 + \Pois(1)$}
		& \multirow{2}{*}{$\BigO(nT_{\mathrm{max}})$, \cite{GPFOGA11}}
		& $\Theta(\tinit + n \log n)$, \\
		& & & Theorem~\ref{thm:bloatControl}\\
		\midrule[\cmidrulewidth]
		\multirow{12}{*}{\majority} 
		& \multirow{6}{*}{$1$}
		& $\BigO(\tinit \log \tinit + n \log^3 n)$, Theorem~\ref{thm:majority} 
		& \multirow{6}{*}{$\Theta(\tinit  + n \log n)$, \cite{NeuGECCO12}}\\
		&	& $\Omega(\tinit \log \tinit)$, $n=1$,\\
		&	&  Theorem~\ref{thm:majority_lbound} \\
		&	& $\Omega(\tinit  + n \log n)$, \\
		&	& Theorem~\ref{thm:majority_lbound} \\
		\cmidrule{2-4}
		& \multirow{6}{*}{$1 + \Pois(1)$}
		& $\BigO(\tinit \log \tinit + n \log^3 n)$, Theorem~\ref{thm:majority} 
		& \multirow{6}{*}{\shortstack[l]{$\Theta(\tinit  + n \log n)$,\\ Theorem~\ref{thm:bloatControl}}}\\
		&	& $\Omega(\tinit \log \tinit)$, $n=1$,\\
		&	&  Theorem~\ref{thm:majority_lbound}\\
		&	& $\Omega(\tinit  + n \log n)$,\\
		&	& Theorem~\ref{thm:majority_lbound}\\
		\bottomrule
	\end{tabularx}
}
\end{table*}

Our focus is on mutation-based genetic programming (GP) algorithms, which has been a fruitful area for deriving run time results in GP. We will  be concerned with the problems \order and \majority as introduced in \cite{GoldbergO98}. This is in contrast to other theoretical work on GP algorithms which considered the PAC learning framework~\cite{DBLP:conf/gecco/KotzingNS11} or the Max-Problem~\cite{KoeSutNeuOre:c:12} as well as Boolean functions~\cite{MoraglioMM13,MambriniM14,MambriniO16}.

Individuals for \order and \majority are binary trees, where each inner node is labeled $J$ (short for \emph{join}, but without any associated semantics) and leaves are labeled with literal symbols; we call such trees \emph{\gptrees}. The set of literal symbols is $\set{\var_i}{i \leq n} \cup \set{\nonvar_i}{i \leq n}$, where $n$ is the number of variables. In particular, literal symbols are paired ($\var_i$ is paired with $\nonvar_i$). We say that in a \gptree $t$ a leaf $u$ comes \emph{before} a leaf $v$ if $u$ comes before $v$ in an in-order parse of the tree.

For the \order problem fitness is assigned to \gptrees as follows: we call a variable $i$ \emph{expressed} if there is a leaf labeled $\var_i$ and all leaves labeled $\nonvar_i$ do not come before that leaf. The fitness of a \gptree is the number of its expressed variables $i$.

For the \majority problem, fitness is assigned to \gptrees as follows. We call a variable $i$ \emph{expressed} if there is a leaf labeled $\var_i$ and there are at least as many leaves labeled $\var_i$ as there are leaves labeled $\nonvar_i$ (the positive instances are in the majority). Again, the fitness of a \gptree is the number of its expressed variables $i$.

A first run time analysis of genetic programming on \order and \majority was conducted in \cite{GPFOGA11}. This work considered the algorithm \oneonegp proceeding as follows. A single operation on a \gptree~$t$ chooses a leaf $u$ of $t$ uniformly at random and randomly either relabels this leaf (to a random literal symbol), deletes it (i.e. replacing the parent of $u$ with the sibling of $u$) or inserts a leaf here (i.e., replaces $u$ with an inner node with one randomly labeled child and $u$ as the other child, in random order). The \oneonegp is provided with a parameter $k$ which determines how many such operations make up an atomic mutation; in the simplest case with $k=1$, but a random choice of $k=1+\mathrm{Pois}(1)$ (where $\mathrm{Pois}(1)$ denotes the Poisson distribution with parameter $\lambda=1$) is also frequently considered. The \oneonegp then proceeds in generations with a simple mutation/selection scheme (see Algorithm~\ref{alg:gp}).

A straightforward version of bloat control for this algorithm was introduced in \cite{Luke:2002:GECCO} as \emph{lexicographic parsimony pressure}. Here the algorithm always prefers the smaller of two trees, given equal fitness. For this \cite{NeuGECCO12} was able to give tight bounds on the optimization time in the case of $k=1$: in this setting no new redundant leaves can be introduced. The hard part is now to give an analysis when $k = 1 + \mathrm{Pois}(1)$, where bloat can be reintroduced whenever a fitness improvement is achieved (without fitness improvements, only smaller trees are acceptable). With a careful drift analysis, we show that in this case we get an (expected) optimization time of $\Theta(\tinit + n \log n)$ (see Theorem~\ref{thm:bloatControl}). Previously, no bound was known for \majority and the bound of $\BigO(n^2 \log n)$ for \order required a condition on the initialization.

Without such bloat control it is much harder to derive definite bounds. From~\cite{GPFOGA11} we have the conditional bounds of $\BigO(nT_{\mathrm{max}})$ for \order using either $k=1$ or $k=1+\mathrm{Pois}(1)$, where $T_{\mathrm{max}}$ is an upper bound on the maximal size of the best-so-far tree in the run (thus, these bounds are conditional on these maxima not being surpassed). For \majority and $k=1$ \cite{GPFOGA11} gives the conditional bound of $\BigO(n^2 T_{\mathrm{max}} \log n)$. We focus on improving the bound for \majority and obtain a bound of $\BigO(\tinit \log \tinit + n \log^3 n)$ for both $k=1$ and $k=1+\mathrm{Pois}(1)$ (see Theorem~\ref{thm:majority}). The proof of this theorem requires significant machinery for bounding the extent of bloat during the run of the optimization.

The paper is structured as follows. In Section~\ref{sec:preliminaries} we will give a short introduction to the studied algorithm. In Section~\ref{sec:driftTheorems} the main tool for the analysis is explained, that is the analysis of drift. Here we state a selection of known theorems as well as a new one (Theorem~\ref{thm:Multi_Drift_Bounded_Step_Size}), which gives a lower bound conditional on a multiplicative drift with a bounded step size. %Due to the possibility of infinite \gptrees these drift theorems allow for an infinite search space. 
In Section~\ref{sec:bloatControl} we will study the case of bloat control given $k=1+\Pois(1)$ operations in each step. Subsequently we will study \majority without bloat control in Section~\ref{sec:majority}. Section~\ref{sec:conclusion} concludes this paper.

\section{Preliminaries}
\label{sec:preliminaries}
In this section we make the notions introduced in Section~\ref{sec:intro} more formal. We consider tree-based genetic programming, where a possible solution to a given problem is given by a syntax tree. The inner nodes of such a tree are labeled by function symbols from a set $F_S$ and the leaves of the tree are labeled by terminals from a set $T$.

We analyze the problems \order and \majority, whose only function is the join operator (denoted by $J$). The terminal set $X$ consists of $2n$ literals, where $\nonvar_i$ is the complement of $\var_i$:
\begin{itemize}
	\item $F_S \coloneqq \{J\}$, $J$ has arity $2$,
	\item $X\coloneqq \{\var_1, \nonvar_1, \dots , \var_n, \nonvar_n \}$.
\end{itemize}

For a given syntax tree $t$, the value of the tree is computed by parsing the tree in-order and generating the set $S$ of \emph{expressed} variables in this way. For \order a variable $i$ is expressed if a literal $\var_i$ is present in $t$ and there is no $\nonvar_i$ that is visited in the in-order parse before the first occurrence of $\var_i$. For \majority a variable $i$ is expressed if a literal $\var_i$ is present in $t$ and the number of literals $\var_i$ is at least the number of literals $\nonvar_i$.

In this paper we consider simple mutation-based genetic programming algorithms which use a modified version of the \emph{Hierarchical Variable Length} (HVL) operator (\cite{OReilly:thesis}, \cite{OReilly:1994:GPSAHC}) called \emph{HVL-Prime} as discussed in~\cite{GPFOGA11}. HVL-Prime allows to produce trees of variable length by applying three different operations: insert, delete and substitute (see Figure~\ref{fig:HVL}). Each application of HVL-Prime chooses one of these three operations uniformly at random, where $k$ denotes the number of applications of HVL-Prime we allow for each mutation.
\begin{figure*}[t]
	{\renewcommand{\arraystretch}{1.5}
		\begin{tabularx}{\textwidth}{|l X|}
			\hline 
			\multicolumn{2}{|>{\hsize=\dimexpr1\hsize+10\tabcolsep}X|}{\centering Given a GP-tree $t$, mutate $t$ by applying HVL-Prime. For each application, choose uniformly at random one of the following three options.}\\ 
			substitute	& Choose a leaf uniformly at random and substitute it with a leaf in $X$ selected uniformly at random. \\ 
			insert  & Choose a node $v \in X$ and a leaf $u \in t$ uniformly at random. Substitute $u$ with a join node $J$, whose children are $u$ and $v$, with the order of the children chosen uniformly at random. \\ 
			delete & Choose a leaf $u \in t$ uniformly at random. Let $v$ be the sibling of $u$. Delete $u$ and $v$ and substitute their parent $J$ by $v$.  \\ 
			\hline 
		\end{tabularx}
	}
	\caption{Mutation operator HVL-Prime.}
	\label{fig:HVL}
\end{figure*}
We associate with each tree $t$ the complexity $C$, which denotes the number of nodes $t$ contains. Given a function $F$, we aim to generate an instance $t$ maximizing $F$.

\begin{figure}[ht]
	\centering
	\includegraphics[width=.8\textwidth]{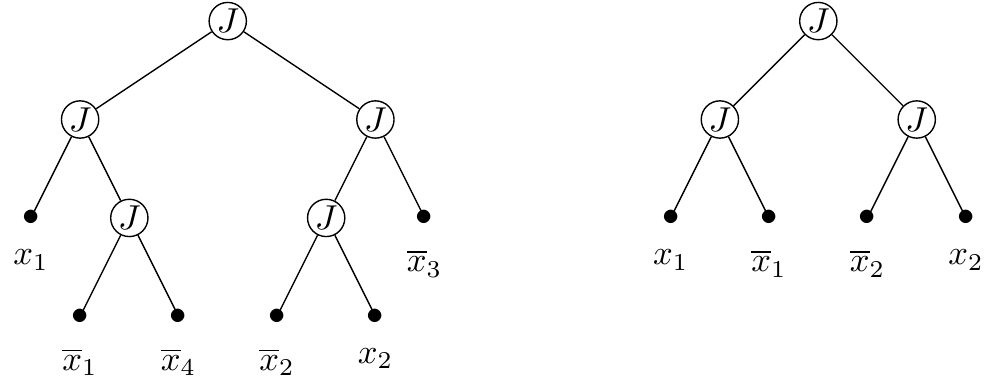}
	\caption{Two GP-trees with the same fitness. For \order the fitness is $1$ since only the first variable occurs with a non-negated literal first. For \majority the fitness is $2$, since the variable $1$ and $2$ have one literal $\var_i$ and also one literal $\nonvar_i$. However, the left one has complexity $11$ whereas the other has complexity $7$.}
	\label{fig:tree_examples}
\end{figure}
We consider two problems. The first one is the single problem of computing a tree $t$ which maximizes $F$. During an optimization run we can use the complexity $C$ to generate an order for solutions with the same fitness by preferring solutions with smaller complexity (see Figure~\ref{fig:tree_examples}). This gives us a way of breaking ties between solutions with the same fitness. Hence, the second problem consists of maximizing the multi-objective function given by $F$ and $C$.

Consequently, we study the following problems:
\begin{itemize}
	\item \order and \majority without bloat control, which consist of maximizing the given function without studying the complexity.
	\item \order and \majority with bloat control, which consist of maximizing the given function and preferring solutions with smaller complexity, if two solutions have the same function value.
\end{itemize}

In order to solve these problems we study the \oneonegp proceeding as follows. It starts with a given initial tree with $\tinit$ leaves and tries to improve its fitness iteratively. In each iteration, the number of mutation steps $k$ is chosen according to a fixed distribution; important options for this distribution is (i) constantly $1$ and (ii) $1 + \Pois(1)$, where $\Pois(\lambda)$ denotes the Poisson distribution with parameter $\lambda$. The choices for $k$ in the different iterations are independent. The \oneonegp then produces an offspring from the best-so-far individual by applying mutation $k$ times in a row; the offspring is discarded if its fitness is worse than the best-so-far, otherwise it is kept to replace the previous best-so-far. Recall that the fitness in the case with bloat control contains the complexity as a second order term. Algorithm~\ref{alg:gp} states the \oneonegp more formally.

%\begin{center}
%\begin{minipage}{.5\textwidth}
\begin{algorithm2e}[H]
	\caption{\oneonegp}
	\label{alg:gp}
	Let $t$ be the initial tree\;
	\While{optimum not reached}{%
		$t' \assign t$\;
		Choose $k$\;
		\For{$i = 1$ \KwTo $k$}{
			$t' \assign \mbox{mutate}(t')$\;
		}
		\lIf{$f(t') \geq f(t)$}{$t \assign t'$}
	}
\end{algorithm2e}
%\end{minipage}
%\end{center}

\section{Drift Theorems and Preliminaries}
\label{sec:driftTheorems}
In this section we collect theorems on stochastic processes that we will use in the proofs. We apply the standard Landau notation $\BigO(\cdot)$, $\LittleO(\cdot)$, $\Omega(\cdot)$, $\omega(\cdot)$, $\Theta(\cdot)$ as detailed in~\cite{CormenAlgorithms}. %We start with the bounds of Chernoff and Chernoff-Hoeffding.

\begin{theorem}[Chernoff Bound~\cite{dubhashi2009concentration}]\label{thm:Chernoff}
	Let $X_1, \ldots, X_n$ be independent random variables that take values in $[0,1]$. Let $X= \sum_{i=1}^n X_i$ and $\mu = \Ex{X}$. Then for all $0 \leq \delta \leq 1$,
	\[
	\Pr[X \leq (1-\delta)\mu] \leq e^{-\delta^2\mu/2}
	\]
	and
	\[
	\Pr[X \geq (1+\delta)\mu] \leq e^{-\delta^2\mu/3}.
	\]
\end{theorem}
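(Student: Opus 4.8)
The plan is to use the standard exponential-moment (Bernstein) method. I would first treat the upper tail. For any parameter $s>0$, Markov's inequality applied to the nonnegative variable $e^{sX}$ gives $\Pr[X \geq (1+\delta)\mu] = \Pr[e^{sX} \geq e^{s(1+\delta)\mu}] \leq e^{-s(1+\delta)\mu}\,\Ex{e^{sX}}$. By independence of the $X_i$, $\Ex{e^{sX}} = \prod_{i=1}^n \Ex{e^{sX_i}}$. Since each $X_i$ takes values in $[0,1]$ and $x \mapsto e^{sx}$ is convex, the pointwise bound $e^{sX_i} \leq 1 + (e^s-1)X_i$ holds, hence $\Ex{e^{sX_i}} \leq 1 + (e^s-1)\Ex{X_i} \leq \exp\!\big((e^s-1)\Ex{X_i}\big)$ using $1+y \leq e^y$. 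Multiplying over $i$ yields $\Ex{e^{sX}} \leq \exp((e^s-1)\mu)$, so $\Pr[X \geq (1+\delta)\mu] \leq \exp\!\big(\mu((e^s-1) - s(1+\delta))\big)$.

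Next I would optimize the exponent over $s$. It is minimized at $s = \ln(1+\delta) > 0$, which produces the classical form $\Pr[X \geq (1+\delta)\mu] \leq \big(e^\delta/(1+\delta)^{1+\delta}\big)^\mu$. It then remains to verify the scalar inequality $\delta - (1+\delta)\ln(1+\delta) \leq -\delta^2/3$ for $0 \leq \delta \leq 1$; this follows from a routine analysis of $g(\delta) := \delta - (1+\delta)\ln(1+\delta) + \delta^2/3$, for which $g(0)=0$ and $g'(\delta) = 2\delta/3 - \ln(1+\delta) \leq 0$ on $[0,1]$. Substituting gives the stated bound $\Pr[X \geq (1+\delta)\mu] \leq e^{-\delta^2\mu/3}$.

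For the lower tail I would run the symmetric argument with a negative exponent: for $s>0$, $\Pr[X \leq (1-\delta)\mu] = \Pr[e^{-sX} \geq e^{-s(1-\delta)\mu}] \leq e^{s(1-\delta)\mu}\,\Ex{e^{-sX}} \leq \exp\!\big(\mu((e^{-s}-1) + s(1-\delta))\big)$, where the last step uses the same convexity bound $\Ex{e^{-sX_i}} \leq \exp((e^{-s}-1)\Ex{X_i})$. Choosing $s = -\ln(1-\delta)$ gives $\Pr[X \leq (1-\delta)\mu] \leq \big(e^{-\delta}/(1-\delta)^{1-\delta}\big)^\mu$, and the claim follows from the elementary estimate $-\delta - (1-\delta)\ln(1-\delta) \leq -\delta^2/2$ for $0 \leq \delta < 1$, with the boundary case $\delta = 1$ handled by continuity (the bound there reads $e^{-1} \leq e^{-1/2}$).

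The only genuinely nontrivial ingredients are the two scalar inequalities comparing $e^\delta/(1+\delta)^{1+\delta}$ and $e^{-\delta}/(1-\delta)^{1-\delta}$ with $e^{-\delta^2/3}$ and $e^{-\delta^2/2}$ respectively; everything else is the textbook Chernoff machinery. Since the theorem is quoted as a known result, I would in fact just cite~\cite{dubhashi2009concentration} and omit the calculus verifications, rather than reproduce them in full.
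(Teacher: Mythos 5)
Your argument is the standard exponential-moment proof and it is correct: the optimization at $s=\ln(1+\delta)$ (resp.\ $s=-\ln(1-\delta)$) and the two scalar inequalities $\delta-(1+\delta)\ln(1+\delta)\le -\delta^2/3$ and $-\delta-(1-\delta)\ln(1-\delta)\le -\delta^2/2$ on $[0,1]$ all check out. The paper itself gives no proof of this theorem but simply cites~\cite{dubhashi2009concentration}, so your closing remark that one would just cite the reference matches exactly what the paper does.
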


%\begin{theorem}[Chernoff-Hoeffding Bound~\cite{dubhashi2009concentration}]\label{thm:Chernoff-Hoeffding}
%Let $X_1, \ldots, X_n$ be independent random variables, where $X_i$ has values in $[0,b_i]$. Let $X= \sum_{i=1}^n X_i$, let $\mu = \Ex{X}$, and let $b:= \sum_{i=1}^n b_i^2$. Then for all $t\geq 0$
%\[
%\Pr[X \geq \mu + t] \leq e^{-2t^2/b}
%\]
% and
%\[
%\Pr[X \leq  \mu -t] \leq e^{-2t^2/b}.
%\]
%\end{theorem}

We will apply a variety of drift theorems to derive the results of this paper. \emph{Drift}, in this context, describes the \emph{expected change} of the best-so-far solution within one iteration with respect to some \emph{potential}. In later proofs we will define potential functions on best-so-far solutions and prove bounds on the drift; these bounds then translate to expected run times with the use of the drift theorems from this section. We use formulations from~\cite{lengler2018drift} because they do not require finite search spaces, and they do not require that the potential forms a Markov chain. Instead, we will have random variables $Z_t$ (the current \gptree) that follow a Markov chain, and the potential is some function of $Z_t$. We start with a theorem for \emph{additive drift}.
\begin{theorem}[Additive Drift~\cite{HeYao:04:drift}, formulation of~\cite{lengler2018drift}]
\label{thm:additiveDrift}
Let $(Z_t)_{t \in\N_0}$ be random variables describing a Markov process with state space $\statespace{Z}$, and with a potential function $\alpha: \statespace{Z} \to {S} \subseteq [0,\infty)$, and assume $\alpha(Z_0)=s_0$. Let $T\coloneqq\inf\{t \in \N_0 \mid \alpha(Z_t)=0\}$ be the random variable that denotes the earliest point in time $t \geq 0$ such that $\alpha(Z_t) = 0$. If there exists $c > 0$ such that for all $z\in \statespace{Z}$ with $\alpha(z) > 0$ and for all $t\geq 0$ we have
\begin{equation}\label{eq:additivedrift}
\EE[\alpha(Z_{t+1})\mid Z_t=z]\le \alpha(z) - c,
\end{equation}
then
\[
\EE[T] \leq \frac{s_0}{c}.
\]
\end{theorem}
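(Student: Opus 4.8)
The plan is to analyze the \emph{stopped} process $\bigl(\alpha(Z_{t\wedge T})\bigr)_{t\in\N_0}$ and show it decreases in expectation by at least $c\cdot\Pr[T>t]$ in step $t$, and then to telescope. First I would fix $t\geq 0$ and condition on the history $Z_0,\dots,Z_t$. On the event $\{T\leq t\}$ we have $t\wedge T=(t+1)\wedge T=T$, hence $\alpha(Z_{(t+1)\wedge T})=\alpha(Z_{t\wedge T})$ and there is no change. On the complementary event $\{T>t\}$, which is measurable with respect to $\sigma(Z_0,\dots,Z_t)$ because $\{T>t\}=\bigcap_{i=0}^{t}\{\alpha(Z_i)>0\}$, we have $\alpha(Z_t)>0$, so the Markov property of $(Z_t)$ together with hypothesis \eqref{eq:additivedrift} yields $\EE[\alpha(Z_{t+1})\mid Z_0,\dots,Z_t]\leq \alpha(Z_t)-c$ on that event. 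Combining the two cases gives
\[
\EE[\alpha(Z_{(t+1)\wedge T})\mid Z_0,\dots,Z_t]\leq \alpha(Z_{t\wedge T})-c\cdot\mathbf{1}_{\{T>t\}},
\]
and taking expectations, $\EE[\alpha(Z_{(t+1)\wedge T})]\leq \EE[\alpha(Z_{t\wedge T})]-c\cdot\Pr[T>t]$.

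Next I would iterate this inequality from $0$ to $t-1$. Since $\alpha(Z_0)=s_0$ deterministically, this gives $\EE[\alpha(Z_{t\wedge T})]\leq s_0 - c\sum_{i=0}^{t-1}\Pr[T>i]$ for every $t$. As $\alpha$ takes values in $[0,\infty)$, the left-hand side is nonnegative, so $\sum_{i=0}^{t-1}\Pr[T>i]\leq s_0/c$ for all $t$. Letting $t\to\infty$ and using that all summands are nonnegative (monotone convergence), $\sum_{i=0}^{\infty}\Pr[T>i]\leq s_0/c$. Finally, the tail-sum identity $\EE[T]=\sum_{i=0}^{\infty}\Pr[T>i]$ for a nonnegative integer-valued $T$ (valid also if $\Pr[T=\infty]>0$, in which case both sides are $\infty$) yields $\EE[T]\leq s_0/c$; as a byproduct this forces $T<\infty$ almost surely.

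The argument is essentially routine; there is no single hard step, but the points that need care are: the measurability of $\{T>t\}$ with respect to the natural filtration, so that the drift hypothesis can be applied conditionally via the Markov property; the bookkeeping for the stopped process at times $\geq T$, where $\alpha(Z_{t\wedge T})=\alpha(Z_T)=0$ because $T$ is the first hitting time of $0$ and the infimum over integers is attained; and the passage to the limit in the telescoped sum. Working directly with the stopped expectations as above avoids any appeal to optional stopping or martingale convergence theorems.
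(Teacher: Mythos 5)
Your proof is correct. Note that the paper does not prove this statement at all -- it is imported as a known result, cited from He--Yao and the Lengler--Steger formulation -- so there is no internal proof to compare against; your stopped-process argument (telescoping the one-step decrease $c\cdot\Pr[T>t]$, using nonnegativity of $\alpha$ and the tail-sum formula for $\EE[T]$) is exactly the standard proof of this formulation, and your handling of the measurability of $\{T>t\}$, the Markov reduction to the hypothesis~\eqref{eq:additivedrift}, and the limit $t\to\infty$ is sound.
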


We will use the following \emph{variable drift theorem}, an extension of the variable drift theorem from \cite[Theorem~4.6]{Joh:th:10}.
\begin{theorem}[Variable Drift~\cite{lengler2018drift}]\label{thm:variableDrift}
Let $(Z_t)_{t \in\N_0}$ be a Markov chain with state space $\statespace{Z}$ and with a potential function $\alpha: \statespace{Z} \to {S} \subseteq \{0\} \cup [s_{\min},\infty)$ for some $s_{\min} >0$. Assume $\alpha(Z_0)=s_0$, and let $T \coloneqq\inf\{t \in \N_0 \mid \alpha(Z_t)=0\}$ be the random variable that denotes the first point in time $t \in \mathbb{N}$ for which $X_t = 0$. Suppose furthermore that there exists a positive, increasing function $h: [s_{\min},\infty) \rightarrow \mathbb{R}^+$ such that for all $z \in \statespace{Z}$ with $\alpha(z)> 0$ and all $t\geq 0$ we have
\[
\EE[\alpha(Z_{t+1})\mid Z_t=z]\le \alpha(z)-h(\alpha(z)).
\]
Then
\[
\EE[T] \leq \frac{1}{h(1)}+\int_{1}^{s_0} \frac{1}{h(u)}du.
\]
\end{theorem}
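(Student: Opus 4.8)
The plan is to derive this from the additive drift theorem (Theorem~\ref{thm:additiveDrift}) by a standard ``drift amplification'' rescaling of the potential. Concretely, I would define $g\colon S\to[0,\infty)$ by $g(0)=0$ and
\[
g(s)=\frac{1}{h(1)}+\int_{1}^{s}\frac{1}{h(u)}\,du\qquad\text{for }s\ge s_{\min},
\]
and work with the new potential $\beta\coloneqq g\circ\alpha$ on the same Markov chain $(Z_t)$. Since $h$ is positive and increasing, $g$ is strictly increasing and strictly positive on $[s_{\min},\infty)$ (the additive offset $1/h(1)$ is exactly what is needed for $g$ to stay positive down to the smallest attainable value and for $g(s_0)$ to reproduce the bound in the statement); in particular $g(\alpha(z'))=0$ if and only if $\alpha(z')=0$, so the hitting time $T$ is unchanged when measured with $\beta$ instead of $\alpha$, and $\beta(Z_0)=g(s_0)$.

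The heart of the argument is the pointwise estimate
\[
g(s)-g(s')\;\ge\;\frac{s-s'}{h(s)}\qquad\text{for all }s\in S\text{ with }s>0\text{ and all }s'\in S,
\]
proved by a case distinction on $s'$ relative to $s$. If $s'\ge s$, then $g(s')-g(s)=\int_{s}^{s'}\frac{1}{h(u)}\,du\le\frac{s'-s}{h(s)}$ because $h$ is increasing; if $s_{\min}\le s'\le s$, then symmetrically $g(s)-g(s')=\int_{s'}^{s}\frac{1}{h(u)}\,du\ge\frac{s-s'}{h(s)}$; and if $s'=0$ one additionally absorbs $\int_{0}^{1}\frac{1}{h(s)}\,du=\frac{1}{h(s)}\le\frac{1}{h(1)}$ into the offset term (together with $\frac1{h(s)}\le\frac1{h(u)}$ on $[1,s]$). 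Granting this, fix a state $z$ with $\alpha(z)=s>0$, take the conditional expectation over $Z_{t+1}$ of $g(s)-g(\alpha(Z_{t+1}))\ge(s-\alpha(Z_{t+1}))/h(s)$, and use linearity together with the hypothesis $\EE[\alpha(Z_{t+1})\mid Z_t=z]\le s-h(s)$ to get
\[
\EE[\,g(\alpha(Z_{t+1}))\mid Z_t=z\,]\;\le\;g(s)-\frac{h(s)}{h(s)}\;=\;g(s)-1 .
\]
Hence $\beta=g\circ\alpha$ satisfies the hypothesis of Theorem~\ref{thm:additiveDrift} with $c=1$, and that theorem gives $\EE[T]\le\beta(Z_0)/1=g(s_0)=\frac{1}{h(1)}+\int_{1}^{s_0}\frac{1}{h(u)}\,du$, as claimed (the case $s_0<1$ being trivial since then $g(s_0)\le1/h(1)$).

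I expect the only genuine subtlety to lie in the bookkeeping at the two ``boundaries'': reconciling the additive constant $1/h(1)$ and the lower integration limit $1$ with the assumed range $\{0\}\cup[s_{\min},\infty)$ of $\alpha$, so that $g$ is genuinely nonnegative and increasing on all attainable values, and verifying that the $s'=0$ subcase of the pointwise inequality really goes through with this particular choice of $g$. The measure-theoretic hygiene — that the conditional expectations exist and that monotonicity of $h$ lets us pull $1/h(s)$ out of the integrals — is routine, and crucially no finiteness assumption on $\statespace{Z}$ is needed, since everything is reduced to the single invocation of Theorem~\ref{thm:additiveDrift} quoted above.
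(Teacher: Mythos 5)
The paper itself gives no proof of this theorem — it is imported from~\cite{lengler2018drift} — and your rescaling argument is exactly the standard proof of that result: define $g$ as the ``integrated inverse drift'', verify the pointwise inequality $g(s)-g(s')\ge (s-s')/h(s)$ using monotonicity of $h$, take conditional expectations to get additive drift $1$ for $g\circ\alpha$, and invoke Theorem~\ref{thm:additiveDrift}. The case analysis for $s'\ge s$ and $s_{\min}\le s'\le s$, the drift computation, and the reduction to additive drift are all correct, and no finiteness of $\statespace{Z}$ is needed, as you say.

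The one genuine soft spot is precisely the boundary issue you flagged and then dismissed. Your $s'=0$ case, your claim that $g$ is positive on all of $[s_{\min},\infty)$, and your final parenthetical ``the case $s_0<1$ is trivial'' all tacitly assume $s_{\min}\ge 1$: for $1\le u\le s$ you use $1/h(s)\le 1/h(1)$ and $1/h(s)\le 1/h(u)$, which give $g(s)\ge s/h(s)$ only when $s\ge 1$. If $s_{\min}<1$ and states in $[s_{\min},1)$ are attainable, $g$ can be negative there (take $h(u)=\delta u$, so $g(s)=(1+\ln s)/\delta$), the inequality $g(s)\ge s/h(s)$ fails, and in fact the theorem as transcribed in the paper is false: with state space $\{0,\eps\}$, $\eps<1/e$, $s_0=\eps$, and a jump to $0$ with probability $\delta$ (else stay), the drift hypothesis holds with $h(u)=\delta u$, yet $\EE[T]=1/\delta$ while the claimed bound $1/h(1)+\int_1^{s_0}du/h(u)=(1+\ln\eps)/\delta$ is negative. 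The source statement in~\cite{lengler2018drift} anchors both the additive constant and the integral at $s_{\min}$ rather than at $1$; with that (or with the reading $s_{\min}=1$, which is how the paper applies the theorem, with $X_{\min}=1$), your proof is complete — replace $1$ by $s_{\min}$ throughout your definition of $g$ and the $s'=0$ case, and drop the ``$s_0<1$ is trivial'' remark, which as stated is not a valid argument.
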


The most important special case is for \emph{multiplicative drift}, which was developed in~\cite{doerr2013adaptive}. We again give the version from~\cite{lengler2018drift} 
\begin{theorem}[Multiplicative Drift~\cite{lengler2018drift}]\label{thm:multiplicativeDrift}
Let $(Z_t)_{t \in\N_0}$ be a Markov chain with state space $\statespace{Z}$ and with a potential function $\alpha: \statespace{Z} \to {S} \subseteq \{0\} \cup [s_{\min},\infty)$ for some $s_{\min} >0$, and assume $\alpha(Z_0)=n$. Let $T \coloneq \inf\{t \in \N_0:\alpha(Z_t)=0\}$ be the random variable that denotes the first point in time $t \in \mathbb{N}$ for which $X_t = 0$. Assume that there is $\delta>0$ such that for all $z \in \statespace{Z}$ with $\alpha(z)> 0$ and for all $t\geq 0$ we have
\[
\EE[\alpha(Z_{t+1})\mid Z_t=z]\le  (1-\delta)\alpha(z).
\]
Then for all $k >0$
\[
\Pr\left[T > \left\lceil \frac{\log(n/s_{\min}) +k}{\delta} \right\rceil\right] \leq e^{-k},
\]
and
\[
\EE\left[T\right] \leq \frac{1+\log(n/s_{\min})}{\delta}.
\]

\end{theorem}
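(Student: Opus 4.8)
The plan is to prove the two assertions separately: the bound on $\EE[T]$ falls out almost directly from the Variable Drift Theorem (Theorem~\ref{thm:variableDrift}), while the tail bound — where the real work lies — I would obtain by truncating the process at the hitting time $T$ and then applying Markov's inequality.

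First, the expectation bound. Rescaling the potential by the factor $1/s_{\min}$ (which changes neither $\statespace{Z}$, nor $(Z_t)_{t}$, nor $\delta$, nor $T$, and turns the range of the potential into $\{0\}\cup[1,\infty)$ with initial value $n/s_{\min}$), the multiplicative drift hypothesis reads exactly $\EE[\alpha(Z_{t+1})\mid Z_t=z]\le \alpha(z)-h(\alpha(z))$ for the positive, increasing function $h(u)\coloneqq\delta u$. Assuming $\alpha(Z_0)>0$ (otherwise $T=0$ and there is nothing to show, and then also $\alpha(Z_0)\ge s_{\min}$), Theorem~\ref{thm:variableDrift} gives $\EE[T]\le \frac1{h(1)}+\int_1^{n/s_{\min}}\frac{du}{h(u)}=\frac1\delta+\frac1\delta\ln(n/s_{\min})=\frac{1+\log(n/s_{\min})}{\delta}$.

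For the tail bound I would work with the stopped potential $\widetilde\alpha_t\coloneqq\alpha(Z_{\min\{t,T\}})$. On the event $\{t<T\}$ we have $\alpha(Z_t)>0$, so the hypothesis applies and, by the Markov property, $\EE[\widetilde\alpha_{t+1}\mid Z_0,\dots,Z_t]\le(1-\delta)\widetilde\alpha_t$; on $\{t\ge T\}$ both sides equal $0$. Taking expectations and iterating from $\widetilde\alpha_0=\alpha(Z_0)=n$ yields $\EE[\widetilde\alpha_t]\le(1-\delta)^t n\le e^{-\delta t}n$. Since the range of $\alpha$ avoids the interval $(0,s_{\min})$, the event $\{T>t\}$ forces $\widetilde\alpha_t=\alpha(Z_t)\ge s_{\min}$, so Markov's inequality gives $\Pr[T>t]\le \EE[\widetilde\alpha_t]/s_{\min}\le e^{-\delta t}\,n/s_{\min}$. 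Substituting the integer $t=\lceil(\log(n/s_{\min})+k)/\delta\rceil\ge(\log(n/s_{\min})+k)/\delta$ makes the right-hand side at most $e^{-\log(n/s_{\min})-k}\cdot n/s_{\min}=e^{-k}$, which is the claim. (As an alternative to invoking Theorem~\ref{thm:variableDrift} for the expectation, one can instead recover a bound of the right order on $\EE[T]$ by summing this tail estimate over $t$, using $\Pr[T>t]\le1$ for $t\lesssim\log(n/s_{\min})/\delta$ and the exponential bound beyond, at the cost of a slightly weaker additive constant.)

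The step I expect to be the main obstacle is precisely this truncation argument together with the exploitation of the gap in the state space. The drift hypothesis is assumed only at states with positive potential, so one may not simply write $\EE[\alpha(Z_{t+1})]\le(1-\delta)\EE[\alpha(Z_t)]$ for the unstopped process; the stopped potential $\widetilde\alpha_t$ is the correct object because it is frozen at $0$ from time $T$ on, and the drift inequality for it holds unconditionally. And the passage from $\EE[\widetilde\alpha_t]\le e^{-\delta t}n$ to an exponentially small bound on $\Pr[T>t]$ is possible only because $S\subseteq\{0\}\cup[s_{\min},\infty)$ lets us replace the event $\{\widetilde\alpha_t>0\}$ by $\{\widetilde\alpha_t\ge s_{\min}\}$ before applying Markov. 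The remaining ingredients — iterating conditional expectations, the estimate $1-\delta\le e^{-\delta}$, and plugging in the value of $t$ — are routine.
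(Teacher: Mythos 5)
Your proposal is correct. Note that the paper does not prove Theorem~\ref{thm:multiplicativeDrift} at all --- it is quoted from the literature (\cite{lengler2018drift}) --- so there is no in-paper proof to compare against; your argument is the standard one for multiplicative drift: the expectation bound via Theorem~\ref{thm:variableDrift} with $h(u)=\delta u$ after normalizing $s_{\min}$ to $1$, and the tail bound by freezing the potential at the hitting time, iterating $\EE[\widetilde\alpha_{t+1}]\le(1-\delta)\EE[\widetilde\alpha_t]$, and applying Markov's inequality, where the gap $S\subseteq\{0\}\cup[s_{\min},\infty)$ is exactly what turns the event $\{T>t\}$ into $\{\widetilde\alpha_t\ge s_{\min}\}$. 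Both steps are sound, including the use of the stopped process to avoid invoking the drift hypothesis at states of zero potential.
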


For bloat estimation we need a lower bound drift theorem in the regime of weak additive drift. A related theorem (Theorem~\ref{driftthm:additivetail}) follows from Theorem 10 and 12 in~\cite{kotzing2016concentration}. Theorem~\ref{driftthm:additivetail} is not directly applicable to our situation, since it gives only tight bounds in the regime of strong drift. Nevertheless, we can use it to prove lower bounds on the tail probabilities for the regime of weak drift, see Theorem~\ref{lem:weakdrift} below. 
\begin{theorem}[{Strong Additive Drift, Lower Tail Bound, follows from~\cite[Theorem 10,12]{kotzing2016concentration}}]
\label{driftthm:additivetail}
Let $(Z_t)_{t \in\N_0}$ be random variables describing a Markov process with state space $\statespace{Z}$, and with a potential function $\alpha: \statespace{Z} \to {S} \subseteq \N$, and assume $\alpha(Z_0)=s_0$. Suppose further that there exist $\delta, \rho, r >0$ such that for all $z \in \statespace{Z}$ such that $\alpha(z) > 0$, all $k\in \N_0$, and all $t \geq 0$,
\begin{enumerate}
\item $\Pr[|X_{t} - X_{t+1}| > k\mid Z_t=z] \leq \frac{r}{(1+\delta)^{k}}.$
\item $\EE[X_t - X_{t+1} \mid Z_t = z] \leq \rho.$
\end{enumerate}
Then, for all $x \geq 0$, if $T\coloneqq\inf\{t \in \N_0 \mid \alpha(Z_t)=0\}$ is the random variable that denotes the earliest point in time $t \geq 0$ such that $\alpha(Z_t) = 0$.
\begin{align}\label{drifteq:addtail3}
\Pr\left[T \leq \frac{s_0-x}{\rho}\right] \leq \exp\left\{-\frac{\delta x}{8} \, \min\Big\{1, \frac{\delta^2\rho x}{32rs_0}\Big\}\right\} .
\end{align}
\end{theorem}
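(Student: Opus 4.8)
The quickest route is to apply Theorems~10 and~12 of~\cite{kotzing2016concentration} directly: our two hypotheses are precisely the assumptions of those theorems (a geometrically decaying tail on the step size together with a one-sided bound $\rho$ on the drift towards $0$), so the plan is to invoke them with the obvious parameter identification and to rewrite the resulting estimate in the form~\eqref{drifteq:addtail3}. For concreteness, here is the argument those theorems package. First, put $D_t := \alpha(Z_t) - \alpha(Z_{t+1})$ and $W_t := s_0 - \alpha(Z_t)$, so that $W_0 = 0$, $W_{t+1} - W_t = D_t$, and $0 \le W_t \le s_0$ for all $t$ (using $\alpha \ge 0$), and $T = \inf\{t : W_t = s_0\}$. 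Conditionally on $Z_t = z$ with $\alpha(z) > 0$ we are given $\EE[D_t \mid Z_t = z] \le \rho$ and $\Pr[|D_t| > k \mid Z_t = z] \le r(1+\delta)^{-k}$ for all $k \in \N_0$, so it suffices to bound the probability that the random walk $(W_t)$, started at $0$, with conditional drift at most $\rho$ and geometrically tailed increments, reaches level $s_0$ within $t^\ast := (s_0 - x)/\rho$ steps.

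The main step is an exponential-supermartingale (Bernstein-type) estimate. From the geometric tail one extracts moment bounds $\EE[|D_t|^m \mid Z_t = z] \le c_1\, r\, m!\, \delta^{-m}$ for all $m \ge 1$ and a universal constant $c_1$, and hence, for all $0 \le \lambda \le c_0\delta$ with $c_0$ small enough for the relevant geometric series to converge,
\[
\EE[e^{\lambda D_t}\mid Z_t = z] \le \exp\!\left(\lambda\rho + c_2\, r\, \lambda^2\, \delta^{-2}\right),
\]
where the linear term uses the drift bound and the quadratic term the moment bounds (two-sidedness of $D_t$ is harmless, since negative values only decrease $e^{\lambda D_t}$). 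Consequently the stopped process $M_t := \exp\!\big(\lambda W_{t\wedge T} - (\lambda\rho + c_2 r\lambda^2\delta^{-2})(t\wedge T)\big)$ is a non-negative supermartingale with $M_0 = 1$ for the natural filtration of $(Z_t)$, and the maximal inequality for non-negative supermartingales (Ville's inequality) gives $\Pr[\sup_{t\ge 0} M_t \ge a] \le 1/a$ for all $a > 0$.

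To finish I would convert this into a tail bound for $T$ and optimize $\lambda$. On $\{T \le t^\ast\}$ we have $W_T = s_0$ and $T \le t^\ast \le s_0/\rho$, hence $M_T \ge \exp(\lambda x - c_2 r\lambda^2 s_0\rho^{-1}\delta^{-2})$; since this puts $\{T\le t^\ast\}$ inside the event that $\sup_{t\ge 0}M_t$ exceeds that value, we obtain $\Pr[T \le t^\ast] \le \exp(-\lambda x + c_2 r\lambda^2 s_0\rho^{-1}\delta^{-2})$. The exponent on $(0, c_0\delta]$ is minimized either at the interior point $\lambda \propto x\rho\delta^2/(r s_0)$ — admissible exactly in the ``weak-drift'' regime where this is at most $c_0\delta$, i.e., $\delta\rho x = \BigO(r s_0)$ — which gives a bound that is quadratic in $x$ in the exponent, or, outside that regime, at the boundary $\lambda = c_0\delta$, which gives $\exp(-\Omega(\delta x))$. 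Combining the two branches yields a bound of the shape $\exp\!\big(-\Omega(\delta x)\cdot\min\{1,\ \Omega(\delta^2\rho x/(r s_0))\}\big)$, and carrying out the constant bookkeeping (which is precisely the content of~\cite[Theorems~10 and~12]{kotzing2016concentration}) gives the explicit form~\eqref{drifteq:addtail3}.

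The only delicate point, and where I expect the real work to sit, is the exponential-moment estimate in the second paragraph: one has to produce an MGF bound with explicit constants on an explicit interval $\lambda\in[0,c_0\delta]$ from a purely geometric tail and a \emph{one-sided} drift bound (we get no separate control of $\EE[D_t^2]$ or of $|D_t|$). Once that is in place, the supermartingale property, Ville's inequality, and the one-parameter optimization are routine. Since this delicate point is exactly what Theorems~10 and~12 of~\cite{kotzing2016concentration} encapsulate, in the write-up I would simply invoke them with the parameters above rather than reprove the MGF estimate from scratch.
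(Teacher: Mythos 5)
Your proposal is correct and matches the paper's approach: the paper gives no separate proof of this theorem but derives it directly by invoking Theorems~10 and~12 of~\cite{kotzing2016concentration}, exactly as you do, and your sketch of the exponential-supermartingale machinery is just an (accurate) unpacking of what those cited theorems already encapsulate.
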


The next theorem gives a lower bound on hitting times of random walks even if we start close to the goal, provided that the drift towards the goal is weak. We remark that the statement on the expectation is similar to other lower bounds for additive drift~\cite{kotzing2016concentration}, but the existing tail bounds are tailored to the regime of strong drift, and are thus not tight in our case. We prove it by martingale theory.

\begin{theorem}[Weak Additive Drift, Lower Bounds]\label{lem:weakdrift}
For every $\delta,C >0$ there exists $\eps >0$ such that the following holds for all $N\geq 0$. Let $(Z_t)_{t \in\N_0}$ be random variables describing a Markov process with state space $\statespace{Z}$, and with a potential function $\alpha:  {\statespace{Z}} \to S \subseteq [0,\infty)$. We denote $X_t \coloneqq \alpha(Z_t)$. Assume $\alpha(Z_0)=s_0$ and that the following conditions hold for all $t\geq 0$ and all $z,z' \in \statespace{Z}$ such that $\alpha(z) \leq N$.
\begin{enumerate}[label=(\roman*)]
\item\label{it:drift} \emph{Weak Drift.} $\EE[X_{t}- X_{t+1} \mid Z_t =z] \leq C/N$.
\item\label{it:steps} \emph{Small Steps.} $\Pr[|X_{t} -X_{t+1}| \geq k \mid Z_t = z'] \leq (1+\delta)^{-k}$.
\item\label{it:increase} \emph{Initial Increase.} $\Pr[X_{t+1} > X_t \mid Z_t=z] \geq \delta$.
\end{enumerate}
Then for every $0\leq x < s_0 \leq \eps N$, if $T \coloneqq \min\{\tau \geq 0 \mid X_t \leq x\}$ is the hitting time of $\{0,1,\ldots,x\}$ for $X_t$, then
\begin{align}\label{eq:jhexpectation}
\EE[T] \geq \eps (s_0-x)N
\end{align}
and
\begin{align}\label{eq:jhtailbound}
\Pr[T \geq \eps N^2] \geq \frac{\eps}{N}.
\end{align}
\end{theorem}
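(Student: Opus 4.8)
The plan is to push everything down to the region $\{X_t\le N\}$, where conditions \ref{it:drift}--\ref{it:increase} are in force, and to dispose of the uncontrolled regime $\{X_t>N\}$ via the observation that an excursion above $N$ can only delay the hitting time $T$. Set $c:=C/N$, let $T_N:=\inf\{t: X_t>N\}$ and $\widetilde{T}:=\min(T,T_N)$, and note that by \ref{it:steps} every step has an exponentially decaying tail, so overshoots (of the barrier $\{0,\dots,x\}$, or of $N$) have expectation $\BigO(1/\delta)$. We may assume $T<\infty$ almost surely, since otherwise both \eqref{eq:jhexpectation} and \eqref{eq:jhtailbound} are immediate. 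The single technical device I would use is an exponential rescaling $\phi(y):=\lambda^{-1}(e^{\lambda y}-1)$: using \ref{it:drift} and \ref{it:steps} to bound the one-step moment generating function $\EE[e^{\lambda(X_{t+1}-X_t)}\mid Z_t=z]$ from below by $1$ whenever $\alpha(z)\le N$ (which holds once $\lambda$ exceeds a threshold that is $\BigO(c)$ by \ref{it:drift} together with a variance lower bound extracted from \ref{it:increase} and \ref{it:steps}), one fixes $\lambda=\Theta(c)=\Theta(1/N)$ so that $\phi(X_{\min(t,\widetilde{T})})$ is a submartingale. Since $\lambda N=\Theta(1)$, the map $\phi$ is within a constant factor of the identity on $[0,N]$, so bounds proved for $\phi(X_t)$ transfer back to $X_t$ up to constants.

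For \eqref{eq:jhexpectation} I would run an additive-drift argument of the type in Theorem~\ref{thm:additiveDrift}. The process $X_{\min(t,\widetilde{T})}+c\min(t,\widetilde{T})$ is a submartingale by \ref{it:drift} (legitimate because $X_t\le N$ for $t<\widetilde{T}$), so optional stopping gives $\EE[\widetilde{T}]\ge c^{-1}(s_0-\EE[X_{\widetilde{T}}])$; on the event $\{T\le T_N\}$ this already is the claimed bound up to the constant $C$. On $\{T_N<T\}$ the estimate of $\EE[X_{\widetilde{T}}]$ loses $\approx N$, but there the strong Markov property lets the walk restart at $X_{T_N}$, which is $\ge N$ up to an $\BigO(1/\delta)$ overshoot, and before it can reach $\{0,\dots,x\}$ it must traverse the entire interval $[x,N]$ downward again; decomposing the trajectory into its successive maximal stretches below $N$, applying the additive-drift estimate on each, and telescoping the boundary values (whose mismatches are exactly the $\BigO(1/\delta)$ overshoots at level $N$) shows that such a re-descent costs $\Omega(N^2/C)\ge\Omega((s_0-x)N/C)$ additional steps. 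Combining the two cases gives $\EE[T]\ge\Omega((s_0-x)N/C)$, i.e.\ \eqref{eq:jhexpectation} for a small enough $\eps=\eps(C,\delta)$. (Because the potential takes integer values in our application, $s_0-x\ge1$, so the hypothesis $s_0\le\eps N$ automatically forces $N$ large, which absorbs the constant-size corrections.)

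For \eqref{eq:jhtailbound} I would argue in two steps. A gambler's-ruin estimate from the submartingale $\phi(X_{\min(t,\,T\wedge T_h)})$ with ceiling $h:=\lfloor\eta N\rfloor$ ($\eta$ a small constant, so that $h\le N$ and \ref{it:drift} still applies) yields, after optional stopping, $\Pr[T_h<T]\ge(\phi(s_0)-\phi(x))/(\phi(h)+\BigO(1/\delta))=\Omega((s_0-x)/N)=\Omega(1/N)$, where \ref{it:increase} (with integrality and \ref{it:steps}) is again used to know $s_0-x\ge1$ and to control $\phi$. Conditioned on reaching height $h$, I would then show that with probability at least $\tfrac12$ the walk does not reach $\{0,\dots,x\}$ within the next $\eps N^2$ steps: the drift contributes at most $c\cdot\eps N^2=C\eps N\le\eta N/4$ downward progress (by \ref{it:drift} and $\eps$ small), so a drop of $\ge h-x\ge\eta N/2$ would require fluctuations of order $\eta N$ over $\eps N^2$ steps, and a standard maximal inequality for sums with sub-exponential increments (condition \ref{it:steps}; equivalently a blockwise use of Theorem~\ref{thm:Chernoff}) bounds that probability by $\exp(-\Omega(\eta^2/\eps))\le\tfrac12$; as before, going above $N$ only helps, so the estimate may be stopped at the next up-crossing of $N$. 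Multiplying the two probabilities and using the strong Markov property at $T_h$ gives $\Pr[T\ge\eps N^2]\ge\Omega(1/N)\ge\eps/N$ for $\eps$ small.

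The hard part will be the control of the regime $X_t\ge N$ in the expectation bound: conditions \ref{it:drift} and \ref{it:increase} say nothing there, so one cannot in general bound $\Pr[T_N<T]$ away from $1$, and the argument really has to reopen the additive-drift estimate via the strong Markov property after every excursion above $N$ and verify that the $\BigO(1/\delta)$ boundary corrections at level $N$ do not pile up faster than the $\Omega(N)$ progress bought by each completed re-descent. Everywhere else the exponential tails of \ref{it:steps} make overshoots and boundary effects routine.
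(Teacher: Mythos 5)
Your overall architecture (reach level $\Theta(N)$ with probability $\Omega(1/N)$, then argue the walk needs $\Omega(N^2)$ steps to come back) matches the paper's proof of the tail bound, but your argument for the expectation bound has a genuine gap, and it sits exactly at the place you flag as ``the hard part''. Your accounting of the re-descent is purely first-moment: optional stopping on $X_t+ct$ stretch by stretch, with the excursions above $N$ entering only through the telescoped boundary mismatches of expected size $\BigO(1/\delta)$ each. Nothing in conditions (i)--(iii) bounds the \emph{number} of such excursions, and it can be of order $N$: take a process that below $N$ makes lazy $\pm 1$ steps with zero drift (each with probability $\tfrac{1}{2(1+\delta)}$) and above $N$ steps $-1$ with probability $\tfrac{1}{1+\delta}$. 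Starting near $N$, the expected number of up-crossings of $N$ before the walk reaches $x$ is $\Theta(N)$, so your total correction is $\Theta(N)$ --- the same order as the entire progress $N-x$ bought by the single completed descent --- and the telescoped lower bound degenerates; the fallback ``time is at least the number of excursions'' only yields $\Omega(N)$, far short of the $\Omega(N^2)$ you need in the case $\Pr[T_N<T]=\Omega((s_0-x)/N)$. In this example the re-descent time really is $\Theta(N^2)$, but the only way to see it is a second-moment argument: by (ii) and (iii) the per-step quadratic variation is between $\delta/2$ and $\BigO(1)$, so any fluctuation-driven descent of order $N$ against drift $C/N$ costs $\Omega(N^2)$ steps in expectation. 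Your sketch never invokes the quadratic variation in the expectation part (the exponential submartingale $\phi$ only delivers exit \emph{probabilities}), so the argument as proposed cannot be completed.

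This is precisely where the paper's proof differs: it never reasons about the level $N$ at all. It recentres to $\tilde X_t=X_t-x$, works on the interval $I=[1,\delta N/(4C)-x]$ (so conditions (i)--(iii) hold throughout and overshoots are controlled by (ii) applied only to states inside $I$), and applies optional stopping simultaneously to the linear process $\tilde X_t+Ct/N$ and to the quadratic process $\tilde X_t^2-Dt$, playing the right-exit probability $p_r$ off against $\EE[t_0]$: if $p_r$ is small the linear submartingale forces $\EE[t_0]=\Omega((s_0-x)N)$, and if $p_r$ is sizeable the quadratic supermartingale forces $\EE[t_0]=\Omega(p_r N^2)$, cf.~\eqref{eq:jh1}--\eqref{eq:jh2}. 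Adding this second-moment optional-stopping step is what would repair your two-case structure. Two smaller points: in your tail bound, the ``fixed window of $\eps N^2$ steps plus maximal inequality'' step is shaky, because the final descent begins at the \emph{last} down-crossing of $N$, which is not a stopping time, and a union bound over the up to $\eps N^2$ candidate windows multiplies a constant failure probability; the paper instead applies the hitting-time lower tail of Theorem~\ref{driftthm:additivetail} from height $\Theta(N)$, which is the cleaner patch. Finally, your undershoot control uses condition (ii) at states above $N$, which depends on reading the hypothesis as global in $z'$; the paper's proof only ever needs the hypotheses below $\delta N/(4C)$.
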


\begin{proof}
Note that for any constant $N_0 = N_0(\delta,C)$, the statement is trivial for all $N \leq N_0$ if $\eps$ is sufficiently small. Hence, we may always assume that $N$ is large compared to $\delta$ and $C$. 

Without loss of generality, we may assume that $|\EE[X_{t+1}- X_t \mid Z_t = z]| \le C/N$, which stronger than~\ref{it:drift}. If this does not hold a priori, then we may couple the process $X_t$ to a process $X_{t}'$ which makes the same step as $X_t$ (i.e., $X_{t+1}-X_t = X_{t+1}'-X_t'$), with one exception: if $\EE[X_{t}- X_{t+1} \mid Z_t = z] < -C/N$ at any point in time, then with some (additional) probability $p_t$ we choose $Z_{t+1}$ such that $X_{t+1}$ is smaller, thus increasing the drift. More precisely, we choose $p_t$ in such a way that $-C/N \leq \EE[X_{t}- X_{t+1} \mid Z_t = z] \leq C/N$. Then $X_t' \leq X_t$ for all $t\geq 0$, so it suffices to prove the statement for $X_t'$. To keep notation simple, we will assume that we do not need to modify $X_t$ in the remainder.

We rescale $\tilde X_t \coloneqq X_t-x$ and consider the drift of $\tilde X_t^2$. 
Let $p_i \coloneqq \Pr[X_{t+1}-X_t = i \mid Z_t = z]$ for all $i\in \mathbb Z$. Then  
\begin{align*}
\EE[\tilde X_{t+1}^2-\tilde X_t^2 \mid Z_t = z] & = \sum_{i\in \mathbb Z} p_i (\tilde X_t+i)^2-\tilde X_t^2 = \sum_{i\in \mathbb Z} p_i (2\tilde X_t i + i^2) \\ 
&= 2 \tilde X_t \EE[X_{t+1}-X_t \mid Z_t = z] + \sum_{i\in \mathbb Z} p_i i^2. 
\end{align*}
Note that we have $\sum_{i\in \mathbb Z} p_i i^2 \ge p_1 \ge \delta$ by~\ref{it:drift} and 
$\sum_{i\in \mathbb Z} p_i i^2 \le \sum_{i\in \mathbb Z} (1+\delta)^{|i|} i^2 \in \BigO(1)$ by~\ref{it:steps}.
Together with Condition \ref{it:drift}, we have for all $0 \leq \tilde X_t \leq \delta N/(4C)$,  
\begin{align}
\delta/2 \le \EE[\tilde X_{t+1}^2-\tilde X_t^2 \mid Z_t = z] \le \BigO(1). 
\label{eq:squaredrift}
\end{align}

Let $t_0$ be the (random) time when the process $\tilde X_t$ (started at $\tilde X_0 = s_0-x$)  for the first time leaves the interval $I=[1,\delta N/(4C)-x]$ on either side. We note that $t_0 \le T$ holds. Let $p_\ell$ and $p_r$ be the probabilities that the process leaves the interval on the left (that is, at $0$ or lower) and on the right (that is, at $\lfloor\delta N/(4C)-x\rfloor+1$ or higher), respectively. By~\ref{it:steps} if the process leaves $I$ on the right side, then the expectation of $\tilde X_t$ in this case is at most $\delta N/(2C)$; recall that we assumed $N$ to be large. Similarly, if it leaves $I$ on the left, then the expectation of $\tilde X_t$ is at least $-D$ for some constant $D>0$.

By~\eqref{eq:squaredrift} there is a constant $D>0$ such that the process $Y_t \coloneqq \tilde X_t^2 - Dt$ has a negative drift in the interval $I$. Hence, using that $t_0$ is a stopping time we obtain from the optional stopping theorem~\cite{grimmett2001probability}
\begin{align}\label{eq:jh1}
(s_0-x)^2 = \EE[Y_0] \ge \EE[Y_{t_0}] & \ge p_r  \left(\frac{\delta N}{4C}-x\right)^2 - p_\ell D - D \EE[t_0] \nonumber \\
&\ge p_r  \left(\frac{\delta N}{8C}\right)^2 - D- D \EE[t_0] .
\end{align}
Similarly, we regard the process $U_t = \tilde X_t + Ct/N$. By~\ref{it:drift} it has a non-negative drift for $t < t_0$. Hence, we obtain
\begin{equation}
  s_0-x = \EE[U_0] \le \EE[U_{t_0}] \le p_r  \frac{\delta N}{2C} + \frac{C \EE[t_0]}{N}.
\label{eq:jh2}
\end{equation}
This yields a lower bound of $p_r \delta N^2/(2C) \ge (s_0-x)N - C \EE[t_0]$ for $p_r$. Together with~\eqref{eq:jh1} we obtain
\begin{equation}
  \EE[t_0]
  \ge \frac{(s_0-x)\left(\delta N/(2C) - 16(s_0-x)\right)-16D}{16D+\delta/2},
\label{eq:jhresult}
\end{equation}
which proves the bound on the expectation~\eqref{eq:jhexpectation} since $s_0-x \leq \eps N$.

For the tail bound~\eqref{eq:jhtailbound} we reverse the previous argument. By~\eqref{eq:squaredrift} the process $U_t \coloneqq \tilde X_t^2 - \delta t/2$ has a non-negative drift in the interval $I$. If $\tilde X_t$ leaves $I$ on the right side then due to~\ref{it:steps} the expectation of $\tilde X_t^2$ is at most $(\delta N/(2C))^2$. Hence, by the optional stopping theorem
\begin{align*}
(s_0-x)^2 = \EE[U_0] \le \EE[U_{t_0}] & \le p_r  \left(\frac{\delta N}{2C}\right)^2 - \frac{\delta}{2} \EE[t_0] \stackrel{\eqref{eq:jhexpectation}}{\le} p_r\left(\frac{\delta N}{2C}\right)^2 - \frac{\delta}{2}\eps (s_0-x)N.
\end{align*}
Solving for $p_r$ shows that $p_r \in \Omega(1/N)$ whenever $s_0-x \leq \delta\eps/4 \, N$. Note that we may assume the latter condition by decreasing the $\eps$ in the theorem. (Despite the formulation, it is obviously sufficient to prove~\eqref{eq:jhexpectation} for $\eps$ and~\eqref{eq:jhtailbound} for $\eps' \coloneqq \delta\eps/4$.) Then with probability $\Omega(1/N)$ we have $X_t > \delta N/(4C)$ for some $t\geq 0$. However, starting from this $X_t$ by Theorem~\ref{driftthm:additivetail} with probability $\Omega(1)$ we need at least $\Omega(N^2)$ additional steps to return to $x < \eps N$ if $\eps < \delta/(4C)$. This proves~\eqref{eq:jhtailbound}.
\end{proof}

For our lower bounds we need the following new drift theorem, which allows for non-monotone processes (in contrast to, for example, the lower bounding multiplicative drift theorem from \cite{witt:2013:cpc}), but requires an absolute bound on the step size.

\begin{theorem}[Multiplicative Drift, lower bound, bounded step size]\label{thm:Multi_Drift_Bounded_Step_Size}
	Let $(Z_t)_{t \in\N_0}$ be random variables describing a Markov process with state space $\statespace{Z}$ with a potential function $\alpha: \statespace{Z} \to {S} \subseteq (0,\infty)$, for which we assume $\alpha(Z_0)=s_0$. Let $\kappa > 0$, $s_{\mathrm{min}} \geq \sqrt{2}\kappa$ and let $T \coloneqq \inf\{t \in \N_0 \mid \alpha(Z_t) \le s_{\mathrm{min}}\}$ be the random variable denoting the earliest point in time $t \geq 0$ such that $\alpha(Z_t) \leq s_{\mathrm{min}}$. If there exists a positive real $\delta > 0$ such that for all $z \in \statespace{Z}$ with $\alpha(z) > s_{\mathrm{min}}$ and all $t \geq 0$ it holds
	\begin{enumerate}
		\item $|\alpha(Z_t) - \alpha(Z_{t+1})| \leq \kappa$ , and
		\item $\Ex{\alpha(Z_t) - \alpha(Z_{t+1}) \mid Z_t = z} \leq \delta \alpha(z)$,
	\end{enumerate}
	then
	\[
	\Ex{T} \geq \frac{1+ \ln(s_0) - \ln(s_{\mathrm{min}})}{2\delta + \frac{\kappa^2}{s_{\mathrm{min}}^2 - \kappa^2}}.
	\]
\end{theorem}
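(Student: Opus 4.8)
The plan is to convert the multiplicative drift into an additive one by working with the logarithmic potential $g(Z_t) := \ln(\alpha(Z_t)) - \ln(s_{\mathrm{min}})$, which is nonnegative precisely on the states with $\alpha(Z_t) > s_{\mathrm{min}}$, so that $T$ is the hitting time of $\{g = 0\}$. First I would prove a bound of the form $\EE[g(Z_t) - g(Z_{t+1}) \mid Z_t = z] \le c$ for all $z$ with $\alpha(z) > s_{\mathrm{min}}$, where $c := 2\delta + \kappa^2/(s_{\mathrm{min}}^2 - \kappa^2)$, and then invoke the lower-bound counterpart of Theorem~\ref{thm:additiveDrift}: since $|g(Z_{t+1}) - g(Z_t)| \le \ln(s_{\mathrm{min}}/(s_{\mathrm{min}}-\kappa))$ whenever $\alpha(Z_t) > s_{\mathrm{min}}$, the process $g(Z_{\min(t,T)}) + c\cdot\min(t,T)$ is a submartingale with uniformly bounded increments up to time $T$, and (assuming, as we may, that $\EE[T]<\infty$) optional stopping gives $\EE[T] \ge (g(Z_0) - \EE[g(Z_T)])/c \ge (\ln s_0 - \ln s_{\mathrm{min}})/c$, using $g(Z_T) \le 0$.

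The heart of the matter is the one-step drift bound. Fix $z$ with $s := \alpha(z) > s_{\mathrm{min}}$. By the bounded-step assumption $\alpha(Z_{t+1}) \in [s-\kappa, s+\kappa]$, and since $s > s_{\mathrm{min}} \ge \sqrt 2\,\kappa$ the rescaled step $W := \alpha(Z_{t+1})/s$ satisfies $|W-1| \le \kappa/s \le 1/\sqrt 2 < 1$, so $\ln W$ is well defined and bounded and $\EE[g(Z_t) - g(Z_{t+1}) \mid Z_t = z] = -\EE[\ln W \mid Z_t = z]$; the drift assumption says $\EE[W \mid Z_t = z] \ge 1 - \delta$. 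Since $\ln$ is concave it lies above its chord on $[1-\kappa/s,\,1+\kappa/s]$, so $\ln W$ is bounded below by an affine function of $W$; taking expectations and substituting $\EE[W] \ge 1-\delta$ reduces the estimate to a one-variable calculation. Carrying it out --- using $\ln(1-y)\ge -y/(1-y)$ and the bounds $\kappa/s \le \kappa/s_{\mathrm{min}} \le 1/\sqrt 2$ (so that correction factors of the form $1/(1-(\kappa/s_{\mathrm{min}})^2)$ are at most $2$) and $\kappa^2/(s^2-\kappa^2) \le \kappa^2/(s_{\mathrm{min}}^2-\kappa^2) \le 1$ --- one gets $\EE[g(Z_t) - g(Z_{t+1}) \mid Z_t = z] \le 2\delta + \tfrac{\kappa^2}{2(s_{\mathrm{min}}^2-\kappa^2)} \le c$. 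I expect this estimate, carried out with a usable constant, to be the main obstacle: one has to exploit the bounded-step assumption just enough (its worst case being $\alpha(Z_{t+1})$ supported on $\{s-\kappa,\,s+\kappa\}$), combine it with the drift assumption, and keep all $s$-dependence controlled --- and this is exactly the point where $s_{\mathrm{min}} \ge \sqrt 2\,\kappa$ is needed, in fact at more than one place in the calculation.

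Finally, to replace the numerator $\ln s_0 - \ln s_{\mathrm{min}}$ by the stated $1 + \ln s_0 - \ln s_{\mathrm{min}}$, I would not throw away the absorbing step: the bounded-step assumption forces $\alpha(Z_{T-1}) \in (s_{\mathrm{min}},\, s_{\mathrm{min}}+\kappa]$, so $T$ is at least one more than the hitting time of $\{\alpha(Z_t) \le s_{\mathrm{min}}+\kappa\}$, and a careful reuse of the drift estimate on this event --- equivalently, recording the leading $1/h(s_{\mathrm{min}})$-type term of a variable-drift lower bound for the linear rate $h(u) = cu$, in analogy with the $1/h(1)$ term of Theorem~\ref{thm:variableDrift} --- supplies the extra unit. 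Securing this $+1$ is the second delicate point of the argument.
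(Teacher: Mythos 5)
Your core argument is sound and is essentially the paper's: both proofs pass to the logarithmic potential, turn the hypotheses into a one-step additive drift bound of the form $2\delta+\Theta\bigl(\kappa^2/(s_{\mathrm{min}}^2-\kappa^2)\bigr)$, and finish with an additive-drift/optional-stopping lower bound. The only real difference is in how the one-step estimate is executed: the paper bounds $\Ex{V_t-V_{t+1}}\le\ln\EE[X_t/X_{t+1}]$ by Jensen and then expands $\EE[X_t/X_{t+1}]$ over step sizes $i=0,\dots,\kappa$ (implicitly integer-valued), whereas you bound $-\EE[\ln W]$ directly by the chord of $\ln$ on $[1-\kappa/s,\,1+\kappa/s]$. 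Your variant goes through as sketched: the chord's slope is $\mathrm{artanh}(u)/u\le 1/(1-u^2)\le 2$ for $u=\kappa/s\le\kappa/s_{\mathrm{min}}\le 1/\sqrt2$, giving the $2\delta$ term, and the remaining term is $-\tfrac12\ln(1-u^2)\le\kappa^2/\bigl(2(s^2-\kappa^2)\bigr)$; it avoids the integrality assumption and even yields a slightly better constant.

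The genuine gap is exactly where you suspected, the extra $+1$, and your proposed repair does not work. Writing $T\ge 1+T'$ with $T'$ the hitting time of $\{\alpha\le s_{\mathrm{min}}+\kappa\}$ gains one round but costs $\ln(1+\kappa/s_{\mathrm{min}})$ in the numerator of the bound for $T'$; since the denominator can be as small as order $\kappa^2/s_{\mathrm{min}}^2$ (choose $\delta$ of that order), the loss is of order $s_{\mathrm{min}}/\kappa$, which can vastly exceed the gained $1$. More importantly, you should not try to force the $+1$: the paper obtains it only by defining $g(s)=1+\ln(s/s_{\mathrm{min}})$ and then invoking an additive-drift lower bound as if the process stopped at potential $0$, although it stops as soon as $V_T\le 1$; a careful application (your optional-stopping computation) yields only $\EE[T]\ge(V_0-\EE[V_T])/c\ge(\ln s_0-\ln s_{\mathrm{min}})/c$. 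Indeed, the statement with the $+1$ fails in corner cases: take $\kappa=1$, $s_{\mathrm{min}}=100$, $s_0=100.5$, a chain that deterministically decreases $\alpha$ by $1$ (and is absorbed below $s_{\mathrm{min}}$), and $\delta=1/s_0$; both hypotheses of Theorem~\ref{thm:Multi_Drift_Bounded_Step_Size} hold, the claimed bound is roughly $50$, but $T=1$. So your bound with numerator $\ln s_0-\ln s_{\mathrm{min}}$ is the correct, provable conclusion, and it is all that is needed where the theorem is used (in the proof of Theorem~\ref{thm:majority_lbound} the numerator is $\Theta(\log\tinit)$, so the additive $1$ is immaterial).
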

\begin{proof}
	We concatenate $\alpha$ with a second potential function $g$ turning the multiplicative bound of the expected drift into an additive bound enabling us to apply the additive drift theorem.
	Let 
	\[
	g(s) \coloneqq 1 + \ln \left( \frac{s}{s_{\mathrm{min}}}\right)
	\]
	and $g(0) \coloneqq 0$. Furthermore, let $X_t \coloneqq \alpha(Z_t)$ and $V_t \coloneqq g(X_t) = g(\alpha(Z_t))$. It follows that $V_t$ is a stochastic process over the search space $R=g(\alpha(\mathrm Z)) \cup \{ 0\}$. We observe that $T$ is also the first point in time $t \in \mathbb{N}$ such that $V_t \leq 1$. Since $s_{\mathrm{min}}$ is a lower bound on $X_t$, $s_{\mathrm{min}} - \kappa$ is a lower bound on $X_{t+1}$. Thus, $X_{t+1}>0$ as well as $V_{t+1} >0$. We derive
	\[
	V_t - V_{t+1} = \ln \left( \frac{X_t}{X_{t+1}}\right).
	\]
	Therefore, due to Jensen's inequality we obtain
	\[
	\Ex{V_t - V_{t+1} \mid Z_t = z} \leq \ln \left( \EE\left[\frac{X_t}{X_{t+1}} \,\middle|\, Z_t = z\right] \right).
	\]
	The value of $X_{t+1}$ can only be in a $\kappa$-interval around $X_{t}$ due to the bounded step size. For all $i\geq 0$ let $p_i$ be the probability that $X_t - X_{t+1} = i$ and let $q_i$ be the probability that $X_t - X_{t+1} = -i$. Let $z \in \statespace{Z}$ and $s \coloneqq \alpha(z)$. We note that $p_0 = q_0$ and obtain by counting twice the instance of a step size of $0$ 
	\begin{align*}
	\EE\left[ \frac{X_t}{X_{t+1}} \,\middle|\, Z_t = z\right] &\leq \left( \sum_{i=0}^{\kappa} \frac{s}{s-i} \, p_i + \frac{s}{s+i} \, q_i\right) = \left( \sum_{i=0}^{\kappa} s \,\frac{p_i (s+i ) + q_i (s-i)}{s^2-i^2}\right) \\
	&\leq \left( \sum_{i=0}^{\kappa} s \,\frac{p_i (s+i ) + q_i (s-i)}{s^2-\kappa^2}\right) = \left( \frac{s^2}{s^2 - \kappa^2} + \sum_{i=0}^{\kappa} \frac{s  (ip_i - iq_i)}{s^2-\kappa^2}\right),
	\end{align*}
	where the last equality comes from summing all non-zero probabilities for a step size, i.e. $\sum p_i + q_i = 1$. The same holds for $X_t$ since $s_{\mathrm{min}}\geq \sqrt{2} \kappa$. It follows that $X_t^2 - \kappa^2 \geq 1/2 X_t^2$ and this yields
	\begin{align*}
	\EE\left[ \frac{X_t}{X_{t+1}} \,\middle|\, Z_t = z\right] &\leq \left( \frac{s^2}{s^2 - \kappa^2}+ \frac{2}{s} \sum_{i=0}^{\kappa} i p_i - iq_i \right)= \left( 1 + \frac{\kappa^2}{s^2 - \kappa^2} + \frac{2}{s} \sum_{i=0}^{\kappa} i p_i - i q_i \right).
	\end{align*}
	Since the remaining sum in the log-term is the difference of $X_t$ and $X_{t+1}$ multiplied by the probability for the step size, we obtain
	\begin{align*}
	\Ex{V_t - V_{t+1} \mid X_t = s} &\leq  \ln \left( 1 + \frac{\kappa^2}{X_t^2 - \kappa^2} +  2 \EE \left[\frac{X_t - X_{t+1}}{X_t} \, \middle| \, Z_t = z \right] \right) \\
	&\leq  2 \EE \left[\frac{X_t - X_{t+1}}{X_t} \, \middle| \, Z_t = z \right] + \frac{\kappa^2}{X_t^2 - \kappa^2} \leq  2 \delta + \frac{\kappa^2}{X_t^2 - \kappa^2} .
	\end{align*}
	Finally, we apply the additive drift theorem and deduce
	\begin{align*}
	\Ex{T} & \geq \frac{V_0}{2 \delta + \frac{\kappa^2}{s_{\mathrm{min}}^2 - \kappa^2}} 
	= \frac{1 + \ln(s_0) - \ln(s_{\mathrm{min}})}{2\delta + \frac{\kappa^2}{s_{\mathrm{min}}^2 - \kappa^2}} . 
	\end{align*}\qedhere
\end{proof}

We conclude this section with the following lemma on the occupation probability of a random walk between two states.

\begin{lemma}\label{lem:twostates}
Let $\delta \geq 0$, and let $r \geq b \geq 0$. Consider a time-discrete random walk $(X_t)_{t\in\N}$ with two states $A$ and $B$, adapted to some filtration $\filtration{F}_t$. For any $t\geq 0$, let $S_t \coloneqq \min\{t' \geq 0  \mid  X_{t+t'} = A\}$ be the number of rounds to reach $A$ for the next time after $t$. Suppose that
\begin{enumerate}
\item $\Pr[X_{t+1} = B \mid \filtration{F}_t, X_{t}=A] \geq \delta$ for all $t\geq 0$.
\item There exists $s \geq 0$ such that for all $t\geq 0$,
\[
\Pr[S_t \geq s \mid \filtration{F}_t, X_t = B, X_{t-1}=A] \geq \frac{b}{s}.
\]
\end{enumerate}
Then, if $N_A(r) \coloneqq |\{1 \leq t \leq r \mid X_t=A\}|$ denotes how many of the first $r$ round we spend in $A$, we have
\[
\EE[N_A(r)] \leq \frac{2r}{b\delta},
\]
and
\[
\Pr\left[N_A(r) > \frac{4r}{b\delta}\right] \leq e^{-r/(2s)}.
\]

\end{lemma}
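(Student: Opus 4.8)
The plan is to decompose the trajectory of $(X_t)$ into \emph{blocks}, where one block is a maximal run of consecutive $A$-states followed immediately by the maximal run of consecutive $B$-states that succeeds it; up to a possibly incomplete initial block, the walk restricted to rounds $1,\dots,r$ is a concatenation of such blocks. Writing $N_A(r)$ as the sum of the $A$-lengths of the blocks meeting $\{1,\dots,r\}$, the estimate reduces to two facts: every $A$-length has conditional expectation at most $1/\delta$, and the expected number $M$ of blocks meeting $\{1,\dots,r\}$ is $\BigO(r/b)$. Multiplying these and bookkeeping the constants should give $\Ex{N_A(r)} \le 2r/(b\delta)$.

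For the first fact, fix the round at which an $A$-run starts; by the assumed exit probability $\ge \delta$ from $A$, the run terminates at each later round with conditional probability at least $\delta$, so its length is stochastically dominated by a $\mathrm{Geom}(\delta)$ variable and has conditional mean at most $1/\delta$. Summing over the random number of blocks is an optional-stopping (Wald-type) step: the event ``an $i$-th block meeting $\{1,\dots,r\}$ exists'' lies in the history before that block begins, so $\Ex{N_A(r)} \le \tfrac1\delta(1+\Ex{M})$. For the bound on $\Ex{M}$ I would use the assumed probability $\ge b/s$ that a freshly entered $B$-run has length at least $s$: call a block \emph{long} in that case. Long blocks are disjoint and each occupies at least $s$ of the rounds $1,\dots,r$, so the number of long blocks is at most $r/s$; a second Wald-type comparison of the number of long blocks to the total number of blocks, through the conditional success probability $b/s$, then yields $\Ex{M} = \BigO(r/b)$ and hence the expectation bound.

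For the tail bound I would instead introduce the stopping time $K$ equal to the number of blocks needed to collect $\lceil r/s\rceil+1$ long blocks; after $K$ blocks the walk has certainly passed round $r$, so $N_A(r)$ is at most the total $A$-length of the first $K$ blocks plus that of the initial partial block. It then suffices to show (a) among the first $\Theta(r/b)$ blocks there are at least $\lceil r/s\rceil+1$ long ones except with probability $e^{-\Omega(r/s)}$, by a lower-tail Chernoff bound for a sum of indicators that is adapted to the block filtration and has conditional means $\ge b/s$ (so $K = \BigO(r/b)$ with that probability); and (b) conditioned on $K = \BigO(r/b)$, the corresponding sum of $A$-run lengths exceeds $4r/(b\delta)$ only with probability $e^{-\Omega(r/b)} \le e^{-\Omega(r/s)}$, since it is stochastically a sum of $\BigO(r/b)$ independent $\mathrm{Geom}(\delta)$ variables and $b \le s$. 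Choosing the constants so that each term is at most $\tfrac12 e^{-r/(2s)}$ and taking a union bound gives the claimed tail.

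The step I expect to be the main obstacle is the boundary handling in the bound on $\Ex{M}$: a long $B$-run may straddle the endpoint $r$, contributing fewer than $s$ rounds to $N_B(r)=r-N_A(r)$ and hence not being ``charged'' against the budget $r/s$, even though it blocks the rest of $\{1,\dots,r\}$ off from further visits to $A$. Controlling this truncated block, together with the incomplete initial block, without giving up more than the slack hidden in the constants $2$ and $4$ is the delicate part, and is presumably why the hypothesis $r \ge b$ appears.
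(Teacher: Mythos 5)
Your excursion/block decomposition is a genuinely different route from the paper's. The paper partitions the first $r$ rounds into $\lceil r/s\rceil$ \emph{deterministic} time windows of length $s$ and shows that the number of $A$-rounds inside one window is stochastically dominated by a geometric variable with success probability $\delta b/s$: from $A$ the walk enters $B$ with probability at least $\delta$, and by hypothesis (2) the freshly started $B$-run swallows the entire remainder of the window with probability at least $b/s$. Summing $\lceil r/s\rceil$ such geometrics gives the expectation bound, and rewriting the tail of the geometric sum as a binomial lower tail, a single Chernoff application gives exactly the stated constants. Because the windows are fixed time intervals, the straddling/boundary problem you single out never arises, and the two hypotheses are combined in one domination step instead of in separate Wald and Chernoff steps for the $A$-run lengths and the block count.

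Your scheme does yield bounds of the same shape, and for the paper's application (where the constants are immaterial) it would suffice; but as a proof of the lemma as stated it has a quantitative gap beyond mere bookkeeping. For the expectation, your own fix for the straddling block (wait until $\lceil r/s\rceil+1$ long blocks have been collected) gives $\EE[M]\le(\lceil r/s\rceil+1)\,s/b+\BigO(1)$ and hence $\EE[N_A(r)]\le (r+2s)/(b\delta)+\BigO(1/\delta)$, which exceeds $2r/(b\delta)$ when $s$ is of order $r$ --- a regime the lemma allows (the paper's base case is $r=s$). For the tail, the two Chernoff steps pull against each other: to find $\lceil r/s\rceil+1$ long blocks among the first $Cr/b$ blocks with failure probability as small as $e^{-r/(2s)}$, the lower-tail exponent $(C-1)^2r/(2Cs)$ forces $C$ well above $2$ (roughly $C\ge 2.6$); but then the dominating sum of $Cr/b$ independent $\mathrm{Geom}(\delta)$ variables has mean $Cr/(b\delta)$, and its upper tail at the fixed threshold $4r/(b\delta)$ has exponent only about $(4-C)^2r/(8b)$, which for such $C$ drops below $r/(2s)$ whenever $b$ is close to $s$ (only $b\le s$ is guaranteed). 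So as sketched you obtain $\Pr[N_A(r)>C'r/(b\delta)]\le e^{-cr/s}$ with either $C'>4$ or $c<1/2$, not the stated pair. Either weaken the constants you aim for, or adopt the windowing trick, which is shorter and constant-exact.
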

We remark that Condition (2) cannot be replaced by the weaker condition $\EE[S_t \mid \filtration{F}_t, X_t = B, X_{t-1}=A] \geq b$, not even for the statement on the expectation. For example, for $r \gg b \gg 1$ set $S_t \coloneqq r^2$ with probability $b/r^2$, and $S_t \coloneqq 1$ otherwise. Then by a union bound, with probability $\Omega(1)$ we never observe $S_t = r^2$ in the first $r$ rounds, so $\EE[N_A(r)] \in \Omega(r)$.
\begin{proof}[Proof of Lemma~\ref{lem:twostates}]
We first consider the case that $r=s$. We claim that $N_A(s)$ is stochastically dominated by a geometric random variable~$\text{Geo}(p)$, where $p \coloneqq \delta b/s$. Consider the first $r=s$ rounds. By condition (2), whenever we enter $B$, we spend all the remaining rounds in $B$ with probability at least $b/s$. We pessimistically assume that we immediately return to $A$ otherwise. Then for $X_t =A$, one of the following three cases will happen.
\begin{enumerate}
\item $X_{t+1} = A$, with probability at most $1-\delta$.
\item $X_{t+1}=B$ and $X_{t+2} = A$, with probability at most $\delta (1-b/s)$.
\item $X_{t+1}=X_{t+2} = \ldots, X_r =b$, with probability at least $\delta b/s$.
\end{enumerate}
Hence, $N_A(s)$ is stochastically dominated by~$\text{Geo}(p)$ as claimed. In particular, $\EE[N_A(s)] \leq 1/p = s/(b\delta)$.

For the other case $r>s$, we split up the random walk into $k \coloneqq \lceil r/s\rceil$ phases of length $s$ each, which covers slightly more than $r$ rounds. Then in each phase we know that the expected number of rounds in $A$ is dominated by~$\text{Geo}(\delta b/s)$. Regarding the expectation, the total number of rounds in $A$ is at most $\EE[N_A(r)] \leq k\cdot s/(b\delta) \leq 2r/(b\delta)$. For the tail bound, we need to bound the probability $q \coloneqq \Pr[Y_1 + \ldots + Y_k > {4r}/(b\delta)]$, where the $Y_i$ are independent random variables with distribution~$\text{Geo}(p)$. We equivalently characterize $q$ by $q= \Pr[\text{Bin}({4r}/(b\delta), p) < k]$. Since $k < 2r/s = \tfrac12 {4rp}/(b\delta)$, from the Chernoff bound, Theorem~\ref{thm:Chernoff}, we deduce $q \leq e^{-(1/2)^2(4r/s)/2} = e^{-r/(2s)}$.
\end{proof}

\section{Results with Bloat Control}
\label{sec:bloatControl}
In this section we show the following theorem.
\begin{theorem}\label{thm:bloatControl}
The \oneonegp \emph{with bloat control} choosing $k = 1 + \Pois(1)$ on \order and  \majority takes $\Theta(\tinit + n \log n)$ iterations in expectation.
\end{theorem}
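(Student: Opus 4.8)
We prove matching upper and lower bounds separately. The two summands $\tinit$ and $n\log n$ reflect two genuinely different effects. Under bloat control the best-so-far tree can lose at most $k$ leaves per generation, and since $k=1+\Pois(1)$ has constant mean and the optimum (fitness $n$ and minimal complexity) has exactly $n$ leaves, reducing an initial tree of size $\tinit$ already costs $\Omega(\tinit)$ generations; separately, expressing the $n$ variables is a coupon-collector task of cost $\Theta(n\log n)$. For the upper bound the plan is to combine the number of leaves $L_t$ of the current tree with the fitness deficiency $n-f_t$ into one potential and to argue in two regimes: while $L_t=\omega(n)$ we only need an additive drift of order $\Omega(1)$ and apply Theorem~\ref{thm:additiveDrift}, costing $\BigO(\tinit)$; once $L_t=\BigO(n)$ we switch to a multiplicative drift of order $\Omega(\Phi_t/n)$ and apply Theorem~\ref{thm:multiplicativeDrift}, costing $\BigO(n\log n)$. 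Using this two-regime split instead of a single variable-drift argument (Theorem~\ref{thm:variableDrift}) is exactly what replaces the naive bound $\BigO(n\log\tinit)$ by $\BigO(\tinit+n\log n)$.

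\textbf{Upper bound.} Call a leaf \emph{safe} if the HVL-Prime deletion at that leaf does not decrease fitness. For both \order and \majority every $\nonvar_i$-leaf is safe, and a counting argument over the ``tight'' variables shows that the number of unsafe leaves is at most $(L_t+n)/2$, so a constant fraction of all leaves is safe whenever $L_t\ge 2n$. In that case a single-operation generation ($k=1$, probability $e^{-1}$) that deletes a safe leaf is always accepted (bloat control accepts the smaller tree of equal, or the strictly better, fitness), so with probability $\Omega(1)$ the tree loses a leaf. The only way $L_t$ grows is through a fitness improvement, of which there are at most $n$; since a single operation improves fitness with probability $\BigO(1/L_t)$ one gets $\EE[k\mid\text{the generation improves fitness}]=\BigO(1)$, hence the total leaf growth over the whole run is $\BigO(n)$ in expectation. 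Thus $\Phi_t\coloneqq L_t+C(n-f_t)$ for a suitable constant $C$ has drift $\le -\Omega(1)$ as long as $L_t\ge 2n$, and additive drift yields $\BigO(\Phi_0)=\BigO(\tinit+n)=\BigO(\tinit)$ generations to reach $L_t=\BigO(n)$, after which $L_t$ stays $\BigO(n)$ (the remaining re-bloat is $\BigO(n)$ in expectation). In the small regime, for each currently unexpressed variable $i$ we exhibit a single operation making progress on $i$ with probability $\Omega(1/n)$: for \majority, inserting an $\var_i$-leaf anywhere (the position is irrelevant) reduces $i$'s deficiency, and deleting one of its $\nonvar_i$-leaves does so as well, at rate $\Omega(\#\nonvar_i/n)$; for \order, once no $\nonvar_i$-leaf blocks $i$ the ``insert $\var_i$ anywhere'' move expresses $i$ at rate $\Omega(1/n)$, while the blocking $\nonvar_i$-leaves are safe and get deleted at rate proportional to their number. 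Accounting also for the $\BigO(n)$ surplus leaves via safe deletions, the refined potential satisfies $\EE[\Phi_{t+1}-\Phi_t\mid Z_t]\le -\Omega(\Phi_t/n)$, and Theorem~\ref{thm:multiplicativeDrift} gives $\BigO(n\log n)$ more generations until $\Phi_t=0$, i.e.\ the optimum.

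\textbf{Lower bound.} We show $\Omega(\tinit+n\log n)$ for initializations of fitness $0$ (for instance a caterpillar of $\tinit$ leaves all labelled $\nonvar_1$); the term $\Omega(\tinit)$ in fact holds whenever $\tinit\ge 2n$. For the $\tinit$ term, the optimum under bloat control has exactly $n$ leaves, so $\sum_{t<T}|L_{t+1}-L_t|\ge\tinit-n$; since $|L_{t+1}-L_t|\le k_t$ with $\EE[k_t]=2$ and the $k_t$ are independent of the history, Wald's identity gives $\EE[T]\ge(\tinit-n)/2$. For the $n\log n$ term we use that in each generation a fixed variable $i$ is ``addressed'' (some operation picks a literal $\var_i$ or $\nonvar_i$) only with probability $\BigO(k/n)$; a union bound over variables together with a Chernoff bound (Theorem~\ref{thm:Chernoff}) on the total number of addressing events shows that with constant probability some variable is never addressed within $cn\log n$ generations, hence never expressed, so $\EE[T]=\Omega(n\log n)$. (Alternatively one applies the multiplicative lower-bound drift theorem, Theorem~\ref{thm:Multi_Drift_Bounded_Step_Size}, to the potential $n-f_t$.)

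\textbf{Main obstacle.} The delicate points all concern the interplay of the two regimes under $k>1$. First, the potential must be $0$ \emph{exactly} at the optimum (fitness $n$ \emph{and} size $n$), even though a tree may have $L_t=n$ with fitness $<n$; this forces a careful accounting of safe versus critical leaves, and for \majority a finer deficiency potential than $n-f_t$. Second, and most importantly, every fitness improvement can re-introduce bloat, so the whole argument rests on the claim that the total re-bloat over the run is $\BigO(n)$ in expectation; this is precisely where $\EE[k\mid\text{improvement}]=\BigO(1)$ enters, and making this conditioning rigorous — the set of improving generations is not independent of the $k_t$ — is the technical heart of the proof. Third, keeping the logarithm at $\log n$ rather than $\log\tinit$ requires that the $\tinit$-sized bloat be cleared by \emph{additive} drift before the multiplicative phase begins, so the regime boundary $L_t\approx 2n$ must be placed where the per-variable expression probability stops degrading like $1/L_t$ (through \order blockers and surplus leaves) and becomes $\Omega(1/n)$.
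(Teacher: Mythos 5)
Your overall skeleton is essentially the paper's: a potential combining tree size with a weighted fitness deficit (the paper uses $g(t)=m(n-v(t))+s(t)-v(t)$ with $m=10$), constant drift from deleting redundant/safe leaves while such leaves abound (your bound ``unsafe $\le (L_t+n)/2$'' is exactly Lemma~\ref{lem:bound_crit_leav}(i)), drift $\Omega(\Phi_t/n)$ near the optimum, and the same lower-bound construction. The genuine gaps are in how you control the bloat that improving multi-operation steps re-introduce, which you yourself call the heart of the matter. First, ``$\EE[k\mid\text{improvement}]=\BigO(1)$'' does not follow from ``a single operation improves fitness with probability $\BigO(1/L_t)$'', and as a statement uniform over states it is false: if every unexpressed variable has deficit $d$ (many more $\nonvar_i$ than $\var_i$), an improving generation forces $k\ge d$, so this conditional expectation is at least $d$. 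Likewise, a bound on the \emph{total} re-bloat over the run cannot be plugged into Theorem~\ref{thm:additiveDrift} or~\ref{thm:multiplicativeDrift}, which require per-state drift bounds. The mechanism that actually works — and the one the paper uses in Lemma~\ref{lem:bounds_neg_drift_thm7} — is that an improving step gains at least the weight $C$ on the potential while losing at most $k$, so only the event $k>C$ contributes negatively, and $\EE[(k-C)^+]$ is negligibly small already for $C=10$; no statement about $\EE[k\mid\text{improvement}]$ is needed. Your potential is set up for this fix, but as written the step fails.

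Second, and more seriously, in the small regime the bound $\EE[\Phi_{t+1}-\Phi_t\mid Z_t]\le-\Omega(\Phi_t/n)$ is only asserted: when $\Phi_t=\LittleO(n)$, even a state-independent negative contribution of order $10^{-7}$ from improving multi-operation rounds swamps $\Omega(\Phi_t/n)$. One needs that the probability that a round improves fitness at all is itself $\BigO(k\Phi_t/n)$ (few missing variables to insert, few deletable negative literals, tree size $\BigO(n)$); this is precisely the refined second bound in the paper's Lemma~\ref{lem:bounds_neg_drift_thm7}, and it is absent from your plan. Third, your two-phase structure requires an argument that the tree stays of size $\BigO(n)$ throughout the second phase; ``expected re-bloat $\BigO(n)$'' does not give that. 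The paper sidesteps this by making its case distinction on the \emph{current state} ($r(t)<v(t)$ already forces $s(t)\le 5n$ by Lemma~\ref{lem:bound_crit_leav}) and merging the two drift regimes with the variable drift theorem using $h(x)=\min\{c,\,x/(c'n)\}$ — which, contrary to your parenthetical remark, yields $\BigO(\tinit+n\log n)$ and not $\BigO(n\log\tinit)$. A minor point on the lower bound: a union bound cannot show that some variable stays untouched with constant probability; use the standard coupon-collector lower-bound argument, as the paper does (your Wald-identity argument for the $\Omega(\tinit)$ term is fine).
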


\subsection{Lower Bound}

Regarding the proof of the lower bound, let $\tinit$ and $n$ be given. Let $t$ be a \gptree which contains $\tinit$ leaves labeled $\nonvar_1$. From a simple coupon collector's argument we get a lower bound of $\Omega(n \log n)$ for the run time to insert each $\var_i$. As an optimal tree cannot list any of the leaves in $t$ in addition to the expected number of deletions performed by \oneonegp being in $\BigO(1)$, we obtain a lower bound of $\tinit$ from the additive drift theorem (Theorem~\ref{thm:additiveDrift}).

\subsection{Upper Bound}
This section is dedicated to the proof of the upper bound. Let $t$ be a \gptree over $n$ variables and denote the number of expressed variables of $t$ by $v(t)$. We call the number of leaves of $t$ the \emph{size} of $t$ and denote it by $s(t)$. For a best-so-far \gptree of the \oneonegp we denote the size of the initial \gptree by $\tinit$. Both parameters $n$ and $\tinit$ are considered to be given.
The main difference to the case of only one mutation per iteration of the \oneonegp is that with more mutations in a single iteration the number of expressed variables can increase together with the introduction of a number of redundant leaves. The increased fitness will hinder the bloat control from rejecting the offspring even though the size could have increased by a large amount. 

In order to deal with this behavior we are going to partition the set of leaves by observing the change of fitness when deleting one leaf. For a redundant leaf, the fitness is not affected by deleting it. However, not every non-redundant leaf contributes an expressed variable, since the deletion of a leaf can also increase the fitness if it is a negative literal. Thus, we consider the following sets of leaves.
%\begin{itemize}
%	\item[$R(t)$:] \emph{Redundant leaves} $v$, where the fitness of $t$ is not affected by deleting $v$.
%	\item[$C^+(t)$:] \emph{Critical positive leaves} $v$, where the fitness of $t$ decreases by deleting $v$.
%	\item[$C^-(t)$:] \emph{Critical negative leaves} $v$, where the fitness of $t$ increases by deleting $v$.
%\end{itemize}
\begin{center}
\begin{tabularx}{.95\textwidth}{@{}l X}
	$R(t)$: & \emph{Redundant leaves} $v$, where the fitness of $t$ is not affected by deleting $v$. \\
	$C^+(t)$: & \emph{Critical positive leaves} $v$, where the fitness of $t$ decreases by deleting $v$.\\
	$C^-(t)$: & \emph{Critical negative leaves} $v$, where the fitness of $t$ increases by deleting $v$.
\end{tabularx}
\end{center}
We denote by $r(t)$, $c^+(t)$ and $c^-(t)$ the cardinality of $R(t)$, $C^+(t)$ and $C^-(t)$, respectively. Thus we obtain
\begin{equation}\label{eq:proof_thm7_st}
s(t)=r(t) + c^+(t) + c^-(t).
\end{equation}

The general idea of the proof is the following: We are going to construct a suitable potential function $g$ mapping a \gptree $t$ to a natural number in such a way that the optimum receives a value of $0$ and the function displays the fitness with respect to the number of expressed variables and the size in a proper way. For a best-so-far \gptree $t$ let $t'$ be the offspring of $t$ under the \oneonegp. By bounding the drift, i.e. the expected change $g(t) - g(t')$ denoted by $\Delta (t)$, we are going to obtain the bound for the optimization time due to Theorem~\ref{thm:variableDrift}.

Regarding the bound on the drift we already argued that the case of only one mutation in an iteration is beneficial, since either the amount of expressed variables of parent and offspring are the same or the offspring has exactly one more variable expressed. However, the case of at least two mutations in an iteration is problematic in the above mentioned sense. In order to deal with the negative drift (leading away from the optimum) introduced by the latter case, the positive drift due to the other case has to outweigh the negative drift. Therefore, we need to bound the drift in both cases carefully.

We observe that starting with a very big initial tree the algorithm will delete redundant leaves with a constant probability until most of the occurring variables are expressed. In this second stage the size of the tree is at most linear in $n$ and the algorithm will insert literals, which do not occur in the tree at all, with a probability of at least linear in $1/n$ until all variables are expressed. In order to obtain a better bound on the drift, we will split the second stage in two cases. Finally, by the law of total expectation we will obtain a bound on the drift due to the bounds under the mentioned cases.

%Let $D_1$ be the case, where the algorithm chooses to do exactly one operation in the observed mutation step, and $D_2$, where the algorithm chooses to do at least two operations in the observed mutation step. As we reasoned above, for $D_2$ we could
%
In order to deal with critical leaves, we are going to prove upper bounds on the number of these. In fact, there exists a strong correlation between critical and redundant leaves we are going to exploit frequently.

\begin{lemma}\label{lem:bound_crit_leav}
	Let $t$ be a \gptree, then for \order and \majority we have
	\begin{enumerate}[label=(\roman*)]
		\item $c^+(t) \leq r(t) + v(t)$,
		\item $c^-(t) \leq 2 r(t)$.
	\end{enumerate}%the number of critical positive leaves of $t$ is at most the sum of the number of redundant leaves and the number of expressed literals.
\end{lemma}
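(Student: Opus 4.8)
The plan is to prove both inequalities by a leaf-by-leaf accounting carried out \emph{separately for each variable index}. First I would record the structural fact that makes this legitimate: for both \order and \majority, whether a variable $j$ is expressed depends only on the leaves labeled $\var_j$ or $\nonvar_j$ (for \majority only on their number, for \order also on their in-order positions relative to one another), and deleting a leaf never changes the relative order of the surviving leaves in the in-order parse. Hence deleting a leaf that carries a literal of variable $i$ can change the expression status of variable $i$ only, so the sets $R(t),C^+(t),C^-(t)$ decompose as disjoint unions $\bigcup_i R_i$, $\bigcup_i C^+_i$, $\bigcup_i C^-_i$ over the variable indices, where each piece consists of leaves labeled $\var_i$ or $\nonvar_i$. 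Writing $r_i,c^+_i,c^-_i$ for their sizes and $v_i\in\{0,1\}$ for the indicator that $i$ is expressed, it then suffices to prove $c^+_i\le r_i+v_i$ and $c^-_i\le 2r_i$ for every $i$ and sum.

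Next I would carry out the case distinction for \majority. Let $a_i$ be the number of leaves labeled $\var_i$ and $b_i$ the number labeled $\nonvar_i$, so $i$ is expressed iff $a_i\ge\max(1,b_i)$. If $i$ is expressed, the only subcases producing a critical leaf are $a_i=1,b_i=0$, where the single $\var_i$-leaf is critical positive ($c^+_i=1=v_i$, $r_i=0$), and $a_i=b_i=m\ge1$, where all $m$ leaves $\var_i$ are critical positive while all $m$ leaves $\nonvar_i$ are redundant ($c^+_i=m=r_i$); in every other expressed subcase all $a_i+b_i$ leaves are redundant. If $i$ is not expressed, then $a_i=0$ or $a_i\ge1$ with $b_i>a_i$, and the only subcase producing a critical leaf is $b_i=a_i+1$, where the $a_i$ leaves $\var_i$ are redundant and the $a_i+1$ leaves $\nonvar_i$ are critical negative, giving $c^-_i=a_i+1\le 2a_i=2r_i$ since $a_i\ge1$; in the remaining not-expressed subcases all leaves are redundant. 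Combining the subcases yields $c^+_i\le r_i+v_i$ and $c^-_i\le 2r_i$.

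For \order I would argue directly that each variable contributes at most one critical leaf of each sign. A leaf whose deletion decreases the fitness must be the in-order-first literal of an expressed variable $i$, and it is critical only when the next $i$-literal is negative or absent; hence $c^+_i\le v_i\le r_i+v_i$. A leaf whose deletion increases the fitness must be a $\nonvar_i$-leaf of a non-expressed variable $i$ that still contains a $\var_i$-leaf, and it must be the \emph{unique} negative literal preceding the first $\var_i$-leaf in the in-order parse, so $c^-_i\le 1$; moreover for such a variable any $\var_i$-leaf is redundant (deleting it leaves the first $i$-literal negative, and $i$ was already not expressed), so $r_i\ge1\ge c^-_i$ and thus $c^-_i\le 2r_i$. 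Summing over $i$ finishes this case.

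The main obstacle I anticipate is the bookkeeping in the \majority case $a_i=b_i$: a careless count only sees arbitrarily many (namely $a_i$) critical positive leaves, and the point is to observe the matching $a_i$ redundant negative leaves so that $c^+_i$ is still absorbed by $r_i$; pinning down the constant $2$ in the $b_i=a_i+1$ subcase (which genuinely uses $a_i\ge1$) is the other spot requiring care. The structural reduction in the first paragraph — that deletions do not couple distinct variables, so the three sets really split per variable — is easy but essential to license the per-variable argument.
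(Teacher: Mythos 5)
Your proof is correct and follows essentially the same route as the paper's: a per-variable case analysis that charges critical positive leaves to same-variable redundant leaves (absorbing the lone-$\var_i$ case into the $v(t)$ term, which the paper does via its count of ``optimal'' leaves) and charges critical negative leaves to the matching redundant leaves, for both \order and \majority. Your explicit per-variable decomposition and the $a_i,b_i$ bookkeeping only make the same argument more formal.
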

\begin{proof}
	We proof both statements by observing the behavior of \order and \majority individually.
	\begin{enumerate}[wide, labelwidth=!, labelindent=0pt, parsep=0pt]
	\item[(i):] $\mbox{}$\\
		Let $opt(t)$ be the number of optimal leaves, i.e. positive leaves $\var_i$, where no additional instances of the variable $i$ are present in $t$. Obviously $opt(t) \leq v(t) \leq n$ holds. We observe		
		\[
		c^+(t) - v(t) \leq c^+(t) - opt(t),
		\] 
		thus it suffices to bound the number of non-optimal critical positive leaves.
		
		For \majority a variable $i$ can only contribute such a leaf, if the number of positive literals $\var_i$ equals the number of negative literals $\nonvar_i$. Since every such negative literal is a redundant leaf, we obtain $c^+(t) - opt(t) \leq r(t)$.
		
		For \order a variable $i$ can only contribute such a leaf, if the first occurrence of $i$ is a positive literal $\var_i$ and the second occurrence is a negative literal $\nonvar_i$. In this case the negative literal as well as every additional occurrence of a literal $\var_i$ is a redundant leaf. Therefore, we deduce $c^+(t) - opt(t) \leq r(t)$.
	\item[(ii):] $\mbox{}$\\
		For \majority a variable $i$ can only contribute a critical negative leaf if the number of positive literals $\var_i$ is $m$ and the number of negative literals $\nonvar_i$ is $m+1$ for some $m \geq 1$. In this case each negative literal is a critical negative leaf and each positive literal is a redundant leaf. We obtain $c^-(t) \leq 2 r(t)$.
				
		For \order a variable $i$ can only contribute a critical negative leaf if the first occurrence of $i$ is a negative literal and the second occurrence is a positive literal. In this case the first occurrence is a critical negative leaf and every additional occurrence afterwards is a redundant leaf. We obtain $c^-(t) \leq r(t)$. \qedhere
	\end{enumerate}
\end{proof}
In order to construct the mentioned potential function, we want to reward strongly an increase of fitness given by a decrease of the unexpressed variables. Furthermore, we want to reward a decrease of size but without punishing an increase of fitness. Here, we need to be careful with the weights for both changes since a strong reward for a decrease of size might result in a very big negative drift in case of at least two operations. In order to illustrate the choice for the weights, we will fix the weight $m \in \mathbb{R}_{>0}$ for a decrease of unexpressed variables only later on. Thus, we associate with $t$ the potential function 
\[
g(t) = m(n- v(t)) + s(t) - v(t).
\]
This potential is $0$ if and only if $t$ contains no redundant leaves and for each $i \leq n$ there is an expressed $\var_i$. Furthermore, by Lemma~\ref{lem:bound_crit_leav} $s(t) - v(t)$ is also $0$ since $r(t)$ is $0$.

Let $\event{D}_1$ be the event where the algorithm chooses to do exactly one operation in the observed mutation step, and $\event{D}_2$ where the algorithm chooses to do at least two operations in the observed mutation step.
Since the algorithm chooses in each step at least one operation, we observe
\begin{align*}
\Prob{\event{D}_1} &= \Prob{\Pois(1)=0}=\frac{1}{e} , \\
\Prob{\event{D}_2} &= 1- \frac{1}{e}. 
\end{align*}

Now we are going to derive bounds on the negative drift in the case $\event{D}_2$. These are going to be connected with bounds on the positive drift for $\event{D}_1$ by the law of total expectation.
Let $\event{E}$ be the event that $v(t') = v(t)$. As argued above, in the case $\event{E}$ the potential cannot increase even if $\event{D}_2$ holds. However, conditional on $\overline{\event{E}}$ the potential can increase yielding a negative drift.
%In particular, we observe a drift of at least $m$ for the increase of fitness counteracted by the possible increase of the size. The latter is at most the number of operations the algorithm does in the observed step, since every operation can increase the size by at most $1$.

\begin{lemma}\label{lem:bounds_neg_drift_thm7}
	For the expected negative drift measured by $g(t)$ conditional on $\event{D}_2$ holds
	\begin{equation*}
		\Ex{\Delta (t) \mid \event{D}_2} \geq - \frac{1}{e} \left( 2e - me  + \sum_{i=1}^{m} \frac{m-i}{(i-1)!} \right). %- \frac{1}{e} \left( 4 \cdot 10^{-7} \right).
	\end{equation*}
	In addition, if $s(t) > n/2$ holds, this bound is enhanced to
	\begin{equation*}
		\Ex{\Delta (t) \mid \event{D}_2} > - \frac{g(t)}{en} \left( \frac{1}{6m} + \frac{2}{3}\right) \left(2e - 5me + \sum_{i=1}^{m} \frac{i(m-i)}{(i-1)!}\right). %- \frac{7 g(t)}{10 en} \left( 4 \cdot 10^{-6} \right) 
		%\sum_{i=11}^{\infty} i (i-10) \frac{1}{(i-1)!}.
	\end{equation*}
\end{lemma}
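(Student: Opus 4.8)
The plan is to translate the drift $\Delta(t)=g(t)-g(t')$ into the per-iteration changes of the number of expressed variables and of the size, and then to do a case distinction that isolates the only source of negative drift. Writing $\Delta v \coloneqq v(t')-v(t)$ and $\Delta s \coloneqq s(t')-s(t)$ for the (possibly rejected) offspring $t'$, the definition $g(t)=m(n-v(t))+s(t)-v(t)$ gives $\Delta(t)=(m+1)\Delta v-\Delta s$ whenever $t'$ is accepted, and $\Delta(t)=0$ otherwise. I would split on the event $\event{E}=\{v(t')=v(t)\}$: under $\event{E}$ the lexicographic bloat control accepts $t'$ only if $s(t')\le s(t)$, so $\Delta(t)=-\Delta s\ge 0$ on acceptance and $\Delta(t)=0$ otherwise, i.e. $\event{E}$ never contributes negatively. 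Under $\overline{\event{E}}$, acceptance forces $v(t')>v(t)$ (bloat control rejects any fitness loss), i.e. $\Delta v\ge 1$; this is the only way the potential can increase, and hence the only source of negative drift.

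For the first inequality I would bound this bad case crudely. Conditioned on $\event{D}_2$ the iteration applies $k\ge 2$ HVL-Prime operations, and each single operation changes the number of leaves by at most one (insert $+1$, delete $-1$, substitute $0$), so $|\Delta s|\le k$. Combined with $\Delta v\ge 1$ in the bad case this yields the pointwise bound $\Delta(t)\ge (m+1)-k$, and therefore $\Delta(t)\ge -(k-m-1)^+$ in all cases. Taking conditional expectations, $\Ex{\Delta(t)\mid\event{D}_2}\ge -\Ex{(k-m-1)^+\mid\event{D}_2}\ge -\Ex{(1+\Pois(1)-m)^+}$ (the last step drops the conditioning on $\event{D}_2$, which is harmless since $(k-m-1)^+>0$ already implies $\event{D}_2$ for $m\ge1$, and weakens $(k-m-1)^+$ to $(k-m)^+$). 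A direct evaluation of the last expectation using $\sum_{i\ge 0}1/i!=e$ collapses it into the claimed closed form $-\tfrac1e\big(2e-me+\sum_{i=1}^m\tfrac{m-i}{(i-1)!}\big)$.

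The enhanced bound for $s(t)>n/2$ needs two additional ingredients. First, in this regime the bad case is rare: a single HVL-Prime operation expresses a previously unexpressed variable with probability $\BigO(g(t)/(mn))$. For insert and substitute this is because the new literal is drawn uniformly among the $2n$ literals, so picking a useful positive literal $\var_i$ (one of the $n-v(t)$ unexpressed variables, with $n-v(t)\le g(t)/m$) has probability $\Theta((n-v(t))/n)$; for delete a new variable can only be expressed by removing a critical negative leaf, of which there are $\BigO(r(t))$ by Lemma~\ref{lem:bound_crit_leav}, so the probability $\BigO(r(t)/s(t))$ is turned into $\BigO(g(t)/n)$ using $s(t)>n/2$ together with the correlations of Lemma~\ref{lem:bound_crit_leav}. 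A union bound over the $k$ operations, inserted into the Poisson computation above (the extra factor $k$ replaces the weights $\tfrac{m-i}{(i-1)!}$ by $\tfrac{i(m-i)}{(i-1)!}$), bounds the negative contribution by $\BigO(g(t)/n)$ times a Poisson sum. Second, in the $\event{E}$ part one no longer just drops the non-negative term $\Ex{(-\Delta s)^+\mathbf 1_{\event{E}\cap\mathrm{accept}}\mid\event{D}_2}$ but lower-bounds it via the probability of deleting a redundant leaf, again controlled through Lemma~\ref{lem:bound_crit_leav} and $s(t)>n/2$. Collecting the numerical constants from the per-operation probability into the factor $\tfrac1{6m}+\tfrac23$ yields the stated inequality.

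The main obstacle is the enhanced bound. The first inequality is essentially routine once the decomposition is set up: the only content is $\Delta(t)\ge(m+1)-k$ plus one Poisson moment. The work lies in (i) establishing the per-operation probability $\BigO(g(t)/(mn))$ for expressing a new variable, and in particular the delete route, where $r(t)/s(t)$ must be converted into $\BigO(g(t)/n)$ by exploiting the structural correlation between critical and redundant leaves from Lemma~\ref{lem:bound_crit_leav} and the size assumption; and (ii) book-keeping all the numerical constants so that they assemble exactly into $\tfrac1{6m}+\tfrac23$ and $2e-5me+\sum_{i=1}^m\tfrac{i(m-i)}{(i-1)!}$ — this is where most of the calculation effort goes, though it is mechanical.
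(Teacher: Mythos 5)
Your proposal is correct in substance and follows essentially the paper's own route: the negative drift arises only in rounds where the fitness strictly increases, the loss there is bounded pointwise by the number of operations minus $m$, the first bound is the resulting Poisson moment, and the enhanced bound comes from bounding the per-operation probability of expressing a new variable by $\frac{g(t)}{n}\left(\frac{1}{6m}+\frac{2}{3}\right)$ (insertion/relabeling of a missing positive literal, resp.\ deletion of a negative literal using $s(t)>n/2$ and $s(t)-v(t)\le g(t)$ -- the paper does not even need Lemma~\ref{lem:bound_crit_leav} here) followed by a union bound over the $k$ operations, which is exactly what puts the extra factor $i$ into the sum. Two small caveats: your justification for $\Ex{(k-m-1)^+ \mid \event{D}_2} \le \Ex{(k-m)^+}$ is misstated, since conditioning on $\event{D}_2$ \emph{increases} the expectation by the factor $1/\Prob{\event{D}_2}=e/(e-1)$, and it is only the extra unit gained by weakening $(k-m-1)^+$ to $(k-m)^+$ that compensates this (a short computation, valid for all $m\ge 1$); and the additional lower bound on the positive contribution under $\event{E}$ that you invoke for the enhanced bound is unnecessary -- the lemma only caps the negative part, the paper simply drops that non-negative term, and the stated constants arise purely from the per-operation probability multiplied by the Poisson sum.
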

\begin{proof}
	Concerning the drift conditional on $\event{D}_2$ we observe 
	\begin{equation}\label{eq:proof_thm7_negative_drift_total_exp}
		\Ex{\Delta (t) \mid \event{D}_2} \geq  - \Ex{- \Delta (t) \mid \overline{\event{E}}}~\Prob{\overline{\event{E}}},
	\end{equation}
	since the drift can be negative only in this case. In particular, we observe a drift of at least $m$ for the increase of fitness counteracted by the possible increase of the size. The latter is at most the number of operations the algorithm does in the observed step, because every operation can increase the size by at most $1$.
	
	Let $Y \sim \Pois(1)+1$ be the random variable describing the number of operations in a round. Note that, for all $i \geq 1$,
	\[
		\Prob{Y=i} = \frac{1}{e(i-1)!}.
	\]
	By this probability we obtain for the expected negative drift conditional on $\overline{\event{E}}$
	\begin{align*}
		\Ex{-\Delta(t) \mid \overline{\event{E}}} &= \sum_{i=0}^\infty \Ex{-\Delta(t) \mid Y = i, \overline{\event{E}}}~\Prob{Y=i \mid \overline{\event{E}}} \leq \sum_{i=0}^\infty (i-m) ~ \Prob{Y=i \mid \overline{\event{E}}}\\
		&\leq \sum_{i=m+1}^\infty (i-m) ~ \Prob{Y=i \mid \overline{\event{E}}}.
	\end{align*}
	Due to Bayes' theorem we derive
	\begin{equation}\label{eq:proof_thm7_expected_negative_drift}
		\nonumber
		\Ex{-\Delta(t) \mid \overline{\event{E}}} \leq \sum_{i=m+1}^\infty (i-m) ~ \Prob{\overline{\event{E}} \mid Y=i} ~\frac{\Prob{Y=i}}{\Prob{\overline{\event{E}}}},
	\end{equation}
	which yields the first bound due to inequality~\eqref{eq:proof_thm7_negative_drift_total_exp} by pessimistically assuming $\Prob{\overline{\event{E}} \mid Y=i} = 1$
	\begin{align}\label{eq:proof_thm7_bound_neg_drift_const}
		\nonumber
		\Ex{\Delta (t) \mid \event{D}_2} &\geq - \sum_{i=m+1}^\infty (i-m) ~ \Prob{Y=i} = - \frac{1}{e} \left( 2e - me  + \sum_{i=1}^{m} \frac{m-i}{(i-1)!} \right). %\\
		%&\geq  - \frac{1}{e} \left(  4 \cdot 10^{-7} \right) .
	\end{align}
	
	In order to obtain a better bound on the negative drift, we are going to bound the probability $\Prob{\overline{\event{E}} \mid Y=i}$ by a better bound than the previously applied bound of $1$.
	
	The event $\overline{\event{E}}$ requires a non-expressed variable in $t$ to become expressed in $t'$. There are $n-v(t)$ non-expressed variables in $t$. These can become expressed by either adding a corresponding positive literal or deleting a corresponding negative literal. There are $2n$ literals in total and due to $n-v(t) \leq g(t)/m$ adding such a positive literal has a probability of at most
	\[
		\frac{n-v(t)}{6n} \leq \frac{g(t)}{6mn}
	\]	
	per operation. Regarding the deletion of negative literals, there are at most $s(t) - v(t)$ negative literals. Hence, due to $s(t) - v(t) \leq g(t)$ and $s(t) > n/2$ the probability of deleting a negative literal is at most
	\[
		\frac{s(t) - v(t) }{3 s(t)} \leq \frac{2 g(t)}{3 n}
	\]
	per operation. Let $q_l$ be the probability that the $l$-th mutation leads an unexpressed variable to become expressed. We can bound the probability that $i$ operations lead to the expression of a previously unexpressed bound by pessimistically assuming that the mutation is going to be accepted. This yields by the union bound
	\begin{align*}
		\Prob{\overline{\event{E}} \mid Y=i} &\leq \bigcup_{l=1}^{i} q_l \leq \sum_{l=1}^{i} q_i = \frac{i g(t)}{n} \left( \frac{1}{6m} + \frac{2}{3}\right) .
	%	& < \frac{i g(t)}{ n} \frac{7}{10} .
	\end{align*}
	Therefore, we obtain due to inequality~\eqref{eq:proof_thm7_negative_drift_total_exp} an expected drift conditional on $\event{D}_2$ of
	\begin{align*}
		\Ex{\Delta(t) \mid \event{D}_2} &  
		> - \frac{g(t)}{en} \left( \frac{1}{6m} + \frac{2}{3}\right) \sum_{i=m+1}^{\infty} \frac{i(i-m)}{(i-1)!} \\
		&= - \frac{g(t)}{en} \left( \frac{1}{6m} + \frac{2}{3}\right) \left(2e - 5me + \sum_{i=1}^{m} \frac{i(m-i)}{(i-1)!}\right). %> - \frac{7 g(t)}{10 en} \left(4 \cdot 10^{-6} \right) .
	\end{align*}
\end{proof}
As a small spoiler for the choice of $m$, we will give the following Corollary on Lemma~\ref{lem:bound_crit_leav}.
\begin{corollary}\label{cor:bc_neg_drift}
	For $m=10$ we obtain the following bounds 
	\begin{equation*}
		\Ex{\Delta (t) \mid \event{D}_2} \geq - \frac{1}{e} \left( 4 \cdot 10^{-7} \right).
	\end{equation*}
		In addition, if $s(t) > n/2$ holds, this bound is enhanced to
	\begin{equation*}
		\Ex{\Delta (t) \mid \event{D}_2} > - \frac{7 g(t)}{10 en} \left( 4 \cdot 10^{-6} \right).
	\end{equation*}
\end{corollary}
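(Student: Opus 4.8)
The plan is that both inequalities follow by a purely numerical specialization of Lemma~\ref{lem:bounds_neg_drift_thm7}: one sets $m=10$ in the two bounds proved there and estimates the constants that appear. No new probabilistic argument is needed; the only genuine task is to over-estimate, in the correct direction, the two tail series that remain after the substitution.

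For the first bound, putting $m=10$ into $-\frac{1}{e}\big(2e-me+\sum_{i=1}^m \frac{m-i}{(i-1)!}\big)$ yields $-\frac{1}{e}\big(-8e+\sum_{i=1}^{10} \frac{10-i}{(i-1)!}\big)$, and, as already observed in the proof of Lemma~\ref{lem:bounds_neg_drift_thm7} (using $\sum_{i\ge 1}\frac{i-m}{(i-1)!}=(2-m)e$), the bracket equals the tail $\sum_{i=11}^{\infty}\frac{i-10}{(i-1)!}$, which in particular is positive. I would bound this tail by comparison with a geometric series: with $k=i-1$ the terms $\frac{k-9}{k!}$, $k\ge 10$, have leading term $1/10!$, and the ratio of consecutive terms $\frac{k-8}{(k-9)(k+1)}$ is largest at $k=10$, where it equals $\frac{2}{11}$; hence the tail is at most $\frac{11}{9\cdot 10!}<4\cdot 10^{-7}$, which gives the stated $\Ex{\Delta (t)\mid \event{D}_2}\ge -\frac{1}{e}(4\cdot 10^{-7})$.

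For the second bound, setting $m=10$ turns the prefactor $\big(\frac{1}{6m}+\frac{2}{3}\big)$ into $\frac{1}{60}+\frac{2}{3}=\frac{41}{60}<\frac{7}{10}$, and the remaining factor into the tail $\sum_{i=11}^{\infty}\frac{i(i-10)}{(i-1)!}$ that appears in the proof of Lemma~\ref{lem:bounds_neg_drift_thm7}. A geometric comparison again does the job: with $k=i-1$ the terms $\frac{(k+1)(k-9)}{k!}$, $k\ge 10$, have leading term $\frac{11}{10!}$, and the ratio of consecutive terms $\frac{(k+2)(k-8)}{(k+1)^2(k-9)}$ is largest at $k=10$, where it equals $\frac{24}{121}<\frac{1}{5}$; so the tail is at most $\frac{55}{4\cdot 10!}<4\cdot 10^{-6}$. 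Since the bound has the form $-\frac{g(t)}{en}\cdot(\text{positive prefactor})\cdot(\text{positive tail})$, replacing the prefactor by the larger $\frac{7}{10}$ and the tail by the larger $4\cdot 10^{-6}$ only makes the right-hand side smaller, so $\Ex{\Delta (t)\mid \event{D}_2}>-\frac{7g(t)}{10en}(4\cdot 10^{-6})$, as claimed.

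I expect no conceptual obstacle; the one point that needs attention is making sure the two geometric estimates genuinely dominate the respective tails (so that both inequalities face the right way), which is why I would explicitly record the leading term and a uniform bound on the ratio of consecutive terms for each series rather than merely quoting a decimal value. Everything else is routine arithmetic.
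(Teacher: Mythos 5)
Your proposal is correct and takes essentially the same route as the paper, which states the corollary as a direct numerical specialization of Lemma~\ref{lem:bounds_neg_drift_thm7}: substitute $m=10$, recognize the remaining quantities as the Poisson tails $\sum_{i=11}^{\infty}\frac{i-10}{(i-1)!}<4\cdot 10^{-7}$ and $\sum_{i=11}^{\infty}\frac{i(i-10)}{(i-1)!}<4\cdot 10^{-6}$, and bound $\frac{1}{60}+\frac{2}{3}<\frac{7}{10}$, with all inequalities applied in the direction that only weakens the lower bound on $\Ex{\Delta(t)\mid \event{D}_2}$. A small bonus of your tail-based reading is that it sidesteps a sign/coefficient typo in the closed form stated in Lemma~\ref{lem:bounds_neg_drift_thm7} (the term $2e-5me$ should read $5e-2me$, as one sees from $\sum_{i\geq 1} i(i-m)/(i-1)!=(5-2m)e$), so your verified geometric dominations give exactly the constants claimed in the corollary.
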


We are now going to prove the upper bound by deriving the expected positive drift outweighing the negative drift given by Lemma~\ref{lem:bounds_neg_drift_thm7}.
	
\textbf{Case 1:}
	We first consider the case $r(t) \geq v(t)$. Due to Lemma~\ref{lem:bound_crit_leav} and Equation~\eqref{eq:proof_thm7_st} we obtain
	\[
	s(t) = r(t) + c^+(t) + c^-(t) \leq 4r(t) + v(t) \leq 5 r(t),
	\]
	thus the algorithm has a probability of at least $1/5$ for choosing a redundant leaf followed by choosing a deletion with probability $1/3$. Since the deletion of a redundant leaf without any additional operations does not change the fitness this contributes to the event $\event{E}$. Hence, we obtain for the event $\event{D}_1$
	\[
		\Ex{\Delta (t) \mid \event{D}_1 ,\, \event{E}} ~ \Prob{\event{E}} \geq \frac{1}{15}.
	\]
	Additionally, the drift conditional on $\event{D}_1$ is always positive, which yields
	\begin{equation*}
		\Ex{\Delta (t) \mid \event{D}_1} \geq \Ex{\Delta (t) \mid \event{D}_1 ,\, \event{E}} ~ \Prob{\event{E}} \geq \frac{1}{15}.
	\end{equation*}
	
	The drift conditional on $\event{D}_2$ is given by Lemma~\ref{lem:bounds_neg_drift_thm7}. We observe, that the positive drift of $1/15$ outweighs the negative drift for the choice of $m=10$ given by Corollary~\ref{cor:bc_neg_drift}.
	Overall, we obtain a constant drift in the case of $r(t) \geq v(t)$ due to the law of total expectation
	\begin{align}
		\label{eq:bc_drift_case1}
		\nonumber
		\Ex{\Delta (t) } &\geq \Ex{\Delta (t) \mid \event{D}_1} ~ \Prob{\event{D}_1} + \Ex{\Delta (t) \mid \event{D}_2} ~ \Prob{\event{D}_2} \geq \frac{1}{15e} - \frac{1}{e}\left(1 - \frac{1}{e}\right) \left(4 \cdot 10^{-7} \right) \\
		&\geq \frac{1}{e} \left( \frac{1}{15} - 4 \cdot 10^{-7} \right) 
		\geq \frac{3}{50e}.
	\end{align}

	\textbf{Case 2:}
	Suppose $r(t) < v(t)$ and $s(t) \leq n/2$. In particular, we have for at least $n/2$ many $i \leq n$ that there is neither $x_i$ nor $\nonvar_i$ present in $t$. The probability to choose $\var_i$ is at least $n/4$ and the probability that the algorithm chooses an insertion is $1/3$. This insertion will yield a fitness increase of $m$ and since the location of the newly inserted literal is unimportant we obtain
	\[
		\Ex{\Delta(t) \mid \event{D}_1} ~ \Prob{\event{D}_1} \geq \frac{m}{12e}.
	\]
	For the expected drift in the case $\event{D}_2$ holds we apply again the bound given by Lemma~\ref{lem:bounds_neg_drift_thm7}. Analogue to Case 1 we observe, that the positive drift outweighs the negative drift for the choice of $m=10$, which yields the following constant drift
	\begin{align*}
		\Ex{\Delta(t)} &\geq \frac{1}{e} \left( \frac{10}{12} - 4 \cdot 10^{-7} \right) > \frac{8}{10e}.
	\end{align*}
	
	\textbf{Case 3:}
	Consider now the case that $r(t) < v(t)$ and $s(t) > n/2$.
	In particular, the tree can contain at most $5n$ leaves due to
	\[
	s(t)\leq 4r(t) + v(t) < 5v(t) \leq 5n,
	\]
	which enables us to bound the probability that an operation chooses a specific leaf $v$ as
	\[
	\frac{1}{5n} \leq \Prob{\mbox{choose leaf }v} \leq \frac{2}{n}.
	\]
	
	Let $A$ be the set of $i$, such that there is neither $\var_i$ nor $\nonvar_i$ in $t$, and let $B$ be the set of $i$, such that there is exactly one $\var_i$ and no $\nonvar_i$ in $t$. Recall that $R(t)$ is the set of redundant leaves in $t$. For every $i$ in $A$ let $\event{A}_i$ be the event that the algorithm adds $\var_i$ somewhere in $t$. For every $j$ in $R(t)$ let $\event{R}_j(t)$ be the event, that the algorithm deletes $j$. Finally, let $\event{A}'$ be the event that one of the $A_i$ holds, and $\event{R}'$ the event that one of the $\event{R}_j(t)$ holds.
	
	Conditional on $\event{D}_1$ we observe for every event $\event{A}_i$ a drift of $m$. For each event $\event{R}_j(t)$ conditional on $\event{D}_1$ we observe a drift of $1$ since the amount of redundant leaves decreases by exactly $1$. Hence,
	\begin{align*}
		\Ex{\Delta (t) \mid \event{A}_i,\, \event{D}_1} &= m , \\
		\Ex{\Delta (t) \mid \event{R}_j(t),\, \event{D}_1} &= 1 .
	\end{align*}
	
	Regarding the probability for these events we observe that for $\event{A}_i$ the algorithm chooses with probability $1/3$ to add a leaf and with probability $1/(2n)$ it chooses $\var_i$ for this. Furthermore, the position of the new leaf $\var_i$ is unimportant, hence
	\begin{equation*}
		\Prob{\event{A}_i \mid \event{D}_1} \geq \frac{1}{6n} .
	\end{equation*}
	Regarding the probability of $\event{R}_j(t)$, with probability at least $1/(5n)$ the algorithm chooses the leaf $j$ and with probability $1/3$ the algorithm deletes $j$. This yields
	\begin{equation*}
		\Prob{\event{R}_j(t) \mid \event{D}_1} \geq \frac{1}{15 n} .
	\end{equation*}
	
	In order to sum the events in $\event{A}'$ and $\event{R}'$, we need to bound the cardinality of the two sets $A$ and $R(t)$. For this purpose we will need the above defined set $B$. First we note that the cardinality of $B$ is at most $v(t)$. In addition
	\begin{equation}\label{eq:proof_thm7_bound_ARr}
	|A|+|R(t)| \geq r(t)
	\end{equation}
	holds since $R(t)$ is the set of all redundant leaves. Furthermore, we observe that for any variable $j$, which is not in $B$ or $A$, there has to exist at least one redundant leaf $\var _j$ or $\nonvar_j$. Since every redundant leaf is included in $R(t)$ we obtain $|A| + |R(t)| + |B| \geq n$
	and subsequently 
	\begin{equation}\label{eq:proof_thm7_bound_ARnv}
	|A| + |R(t)| \geq n - v(t) .
	\end{equation}
	Furthermore, due to Lemma~\ref{lem:bound_crit_leav} we deduce
	\begin{align}\label{eq:proof_thm7_bound_ABst}
		s(t) - v(t) &\leq r(t) + c^+(t) + c^-(t) - v(t) 
		%\nonumber
		\leq 4r(t) \leq 4( |A| + |R(t)|),
	\end{align}
	where the last inequality is due to \eqref{eq:proof_thm7_bound_ARr}. This inequality \eqref{eq:proof_thm7_bound_ABst} in conjunction with \eqref{eq:proof_thm7_bound_ARnv} yields
	\begin{equation}\label{eq:proof_thm7_bound_ARg}
	(m+4)(|A| + |R(t)|) \geq  m (n - v(t)) + s(t) - v(t)  = g(t).
	\end{equation}
	
	We obtain the expected drift conditional on the event $\event{D}_1$ as for $m\geq1$
	\begin{align*}
		\Ex{\Delta (t) \mid \event{D}_1} 
		&\geq \Ex{\Delta (t) \mid (\event{A}' \vee \event{R}'),\, \event{D}_1 } ~ \Prob{\event{A}' \vee \event{R}' \mid \event{D}_1} \\
		&= \sum_{i \in A} \Ex{\Delta (t) \mid \event{A}_i,\, \event{D}_1} ~ \Prob{\event{A}_i ,\, \event{D}_1} + \!\! \sum_{j \in R(t)} \!\!\! \Ex{\Delta (t) \mid \event{R}_j(t),\, \event{D}_1} ~ \Prob{\event{R}_j(t) \mid \event{D}_1} \\
		&\geq |A| \frac{m}{6n} + |R(t)| \frac{1}{15 n} \geq \left( |A| + |R(t)| \right) \frac{1}{15 n}  \geq \frac{ g(t)}{15(m+4) n},
	\end{align*}
	where the last inequality is due to \eqref{eq:proof_thm7_bound_ARg}.
	Concerning the expected drift conditional on $\event{D}_2$, the condition for the second bound given by Lemma~\ref{lem:bounds_neg_drift_thm7} is satisfied in this case. Again, we observe that the positive drift outweighs the negative drift for $m=10$ given by Corollary~\ref{cor:bc_neg_drift}, which justifies the choice of $m=10$ we are setting from here on. In fact, we could choose any integer $m \geq 5$ in order for the positive drift to outweigh the negative.
	Summarizing the events $\event{D}_1$ and $\event{D}_2$ we obtain the expected drift
	\begin{align}
		\label{eq:bc_drift_case3}
		\nonumber
		\Ex{\Delta(t)} &\geq \Ex{\Delta(t) \mid \event{D}_1} ~ \Prob{\event{D}_1} + \Ex{\Delta(t) \mid \event{D}_2} ~ \Prob{\event{D}_2} \\
		%\nonumber
		&\geq \frac{g(t)}{e n } \left( \frac{ 1}{210 } - \left(1 - \frac{1}{e }\right) \frac{7 }{10 } \cdot 4 \cdot 10^{-6} \right) 
		%&= \frac{g(t)}{e n } \left( \frac{ 1}{210 } - \left(1 - \frac{1}{e }\right) \frac{7 }{10 } \left( \sum_{i=1}^{10} \frac{i(10-i)}{(i-1)!} - 15e \right) \right) \\
		> \frac{ g(t)}{250 e n } .
	\end{align}
	
	Summarizing the derived expected drifts \eqref{eq:bc_drift_case1} and \eqref{eq:bc_drift_case3}, we observe a multiplicative drift in the case of
	\[
	\frac{ g(t)}{250e n} \leq \frac{3}{50e} ,
	\]
	which simplifies to $g(t) \leq 15n$. If $g(t) > 15n$, we observe a constant drift. This constant drift is at least $3/50e$ since the expected drift for Case 2 is always bigger than the one for Case 1.
	
	We now apply the variable drift theorem (Theorem~\ref{thm:variableDrift}) with $h(x)= \min \{3/(50e) , \\ \,  1x / (250e n)\}$, $X_0 = \tinit + 10n$ and $X_{\min} =1$, which yields
	\begin{align*}
		\Ex{T &\mid g(t)=0}  \leq \frac{1}{h(1)} + \int_{1}^{\tinit + 10n} \frac{1}{h(x)} ~dx\\
		& = 250en + 250en \int_{1}^{15n} \frac{1}{x} ~dx + \frac{50e}{3} \int_{15n+1}^{\tinit + 10n} 1 ~dx\\		
		&= 250e n \left(1 + \log (15n) \right) + \frac{50e}{3} \left(\tinit -5n  -1 \right) < 250en\log (15e n ) + \frac{50e}{3} \tinit.
	\end{align*}
	
	This establishes the theorem.

\section{Results Without Bloat Control}
\label{sec:majority}
In this section we show the following theorems.

\begin{theorem}\label{thm:majority_lbound}
	The \oneonegp \emph{without bloat control} (choosing $k=1$ or $k = 1 + \Pois(1)$) on MAJORITY takes 
	$\Omega(\tinit\log \tinit)$ iterations in expectation for $n=1$. For general $n\geq 1$ it takes 
	$\Omega(\tinit + n\log n)$ iterations in expectation.	
\end{theorem}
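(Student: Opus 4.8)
The plan is to prove the three asserted lower bounds by fixing a worst-case initial \gptree and bounding the drift of well-chosen potentials. Throughout, let the initial \gptree $t_0$ have $\tinit$ leaves, all labelled $\nonvar_1$ (only the multiset of leaf labels matters, not the shape); then variables $2,\dots,n$ are unexpressed, and variable $1$ is unexpressed with a deficit of $\tinit$ negative literals over positive ones. Since there is no bloat control, every mutation that does not decrease the \majority fitness is accepted; in particular, for $n=1$ the fitness stays $0$ until the optimum is reached, so every offspring is accepted up to that point.

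For general $n$ we obtain $\Omega(n\log n)$ by a coupon-collector argument: for each $i\in\{2,\dots,n\}$ the first elementary HVL-Prime operation that makes the number of $\var_i$-leaves positive must be an \emph{insert}-$\var_i$ or a \emph{substitute}-to-$\var_i$, and in any single operation this has the history-independent probability $\tfrac1{3n}$; since reaching the optimum requires an $\var_i$ for every $i$, the number of operations performed is stochastically at least the completion time of a coupon-collector process on $n-1$ coupons with per-step success probability $\tfrac1{3n}$, hence $\Omega(n\log n)$ in expectation, and since a generation performs $\BigO(1)$ operations in expectation (for both $k=1$ and $k=1+\Pois(1)$), so is the number of generations. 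We obtain $\Omega(\tinit)$ by additive drift (Theorem~\ref{thm:additiveDrift}, in its lower-bound direction, applicable since the per-generation change of the potential has bounded expectation and sub-exponential tails) applied to $\alpha(t)=(\#\nonvar_1(t))-(\#\var_1(t))$, stopped on reaching $0$; it is $\tinit$ at $t_0$ and must hit $0$ before variable $1$ is expressed. The only computation needed is that, per elementary operation, the expected decrease of $(\#\nonvar_1)-(\#\var_1)$ equals exactly $\tfrac{2}{3}\cdot\tfrac{(\#\nonvar_1)-(\#\var_1)}{s(t)}\le\tfrac23$ (a deletion and a substitution each contribute $\tfrac{(\#\nonvar_1)-(\#\var_1)}{s(t)}$ in expectation, an insertion contributes $0$, each chosen with probability $\tfrac13$); since $k$ is independent of the operations with $\EE[k]=\BigO(1)$, and selection can only slow the decrease of $\alpha$, the per-generation drift of $\alpha$ is $\BigO(1)$, and additive drift gives $\Omega(\tinit)$. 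Combining the two bounds via $\max\{a,b\}\ge(a+b)/2$ yields $\Omega(\tinit+n\log n)$.

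For $n=1$ we prove the sharper $\Omega(\tinit\log\tinit)$ with the same quantity $\alpha(t)=(\#\nonvar_1(t))-(\#\var_1(t))$, now exploiting that the computation above reads $\EE[\text{decrease per operation}]=\tfrac{2}{3}\cdot\tfrac{\alpha(t)}{s(t)}$: the drift of $\alpha$ is \emph{multiplicative} with rate $\Theta(1/s(t))$, not merely $\BigO(1)$. For $n=1$ no offspring is rejected before the optimum, so $\alpha$ follows its mutation drift, and the size $s(t)$ performs a martingale with sub-exponential increments; hence $s(t)$ stays in $[\tinit/2,2\tinit]$ for the first $\Theta(\tinit\log\tinit)$ generations with probability $1-\LittleO(1)$, making the effective drift rate $\delta=\Theta(1/\tinit)$ on that time scale. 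After truncating the super-polynomially rare generations with very large $k$ so that $|\alpha(Z_t)-\alpha(Z_{t+1})|\le\kappa$ for some $\kappa=\mathrm{polylog}(\tinit)$ (no truncation is needed when $k=1$, where $\kappa=2$), we apply Theorem~\ref{thm:Multi_Drift_Bounded_Step_Size} with $s_0=\tinit$ and $s_{\min}=\tinit^{3/4}$ (so that $\kappa^2/(s_{\min}^2-\kappa^2)=\LittleO(\delta)$ while $\ln(s_0/s_{\min})=\Theta(\log\tinit)$); it then shows that $\alpha$ does not drop below $s_{\min}$, let alone to $0$, before $\Theta((\log\tinit)/\delta)=\Theta(\tinit\log\tinit)$ generations, which lower-bounds the optimization time.

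The main obstacle is this last step: Theorem~\ref{thm:Multi_Drift_Bounded_Step_Size} requires the multiplicative-drift and bounded-step hypotheses to hold uniformly over the state space of a Markov chain, whereas for our chain they hold only while $s(t)=\Theta(\tinit)$ and $k$ is small. Naively capping $\alpha$ at $s_{\min}$ once $s$ leaves its typical band violates the bounded-step requirement (the cap can be a $\Theta(\tinit)$ jump), while splicing in an auxiliary slow-drift process to bring $\alpha$ down gradually double-counts the run time and loses the bound. We expect to resolve this by coupling to a process on an enlarged state space that follows the true dynamics as long as the good event (size in band, $k$ small, optimum not yet found) persists, and is engineered so that its hitting time of $\{\alpha\le s_{\min}\}$ still lower-bounds the true optimization time in expectation—feasible because the good event has probability $1-\LittleO(1)$ on the relevant time scale. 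Verifying the drift and step-size hypotheses for this coupled chain, together with the bookkeeping of the $\BigO(1/s(t))$ lower-order terms hidden in the per-operation drift (the within-generation changes of $\alpha$ and of $s$), is the technical heart of the proof.
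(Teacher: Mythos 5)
Your proposal follows essentially the same route as the paper: the all-$\nonvar_1$ initial tree, coupon collecting plus an operation-counting/additive-drift argument for $\Omega(\tinit + n\log n)$, and for $n=1$ the deficit potential $\#\nonvar_1-\#\var_1$ with multiplicative drift $\Theta(g(t)/s(t))$, truncation of rare large-$k$ iterations to obtain a logarithmic step-size bound, a size lower bound $s(t)\ge \tinit/2$, and an application of Theorem~\ref{thm:Multi_Drift_Bounded_Step_Size} with $s_{\min}$ polynomially close to $\tinit$. The ``main obstacle'' you flag at the end is resolved in the paper without any coupling construction: it simply conditions on the high-probability good events (no iteration among the first $\tinit^2$ has more than $15\ln\tinit$ operations, and the size does not halve within the relevant horizon by a gambler's-ruin argument) and then uses the law of total expectation, $\EE[\tau]\ge \tfrac12\,\EE[\tau\mid\overline{\event{A}}]$, to transfer the conditional drift-theorem bound to the unconditional expectation.
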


\begin{theorem}\label{thm:majority}
The \oneonegp \emph{without bloat control} (choosing $k=1$ or $k = 1 + \Pois(1)$) on MAJORITY takes $\BigO(\tinit\log \tinit  + n \log^3 n)$ iterations in expectation.
\end{theorem}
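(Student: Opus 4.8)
The plan is to reduce everything to one drift statement and invoke the variable drift theorem (Theorem~\ref{thm:variableDrift}); the whole difficulty is that, without bloat control, a fitness-improving mutation can drag in many redundant leaves, so the tree size $s(t)$ enters the drift as an unknown that must first be tamed. I would use a potential $g(t) = M\,(n-v(t)) + \sum_{i \text{ unexpressed}} w_i(t)$ for a large constant $M$, where $w_i(t)$ counts the single-leaf operations needed to express variable $i$ in isolation (essentially $(\#\nonvar_i - \#\var_i)_+$ with a $+1$ correction when $\#\var_i = 0$); then $g(t)=0$ exactly at the optima of the \emph{without}-bloat-control problem and $g(Z_0) = O(\tinit + n)$. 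Since an accepted move never decreases the number of expressed variables (even for $k = 1+\Pois(1)$, because $f(t') \ge f(t)$ literally means $v(t')\ge v(t)$), the weighted term only decreases. Following the template of the proof of Theorem~\ref{thm:bloatControl}, I would split on $\event{D}_1$ (exactly one HVL operation, probability $1/e$) and $\event{D}_2$ ($k\ge 2$), reusing Lemma~\ref{lem:bounds_neg_drift_thm7} and Corollary~\ref{cor:bc_neg_drift} almost verbatim to bound the negative drift of $\event{D}_2$ by a quantity dominated, for $M$ a large constant, by the positive $\event{D}_1$-drift. For $\event{D}_1$ I would run the same three-case split: (i) many redundant leaves ($r(t)\gtrsim v(t)$, hence $s(t)=O(r(t))$ by Lemma~\ref{lem:bound_crit_leav}): deleting a redundant leaf has probability $\Omega(1)$, constant drift; (ii) $\Omega(n)$ variables with no literal present, or $\Omega(s(t))$ critical-negative leaves: inserting a missing positive literal (resp.\ deleting a fixing negative literal) has probability $\Omega(1)$, constant drift; (iii) the polishing case, where the drift is $\Omega(g(t)/s(t))$ from deleting one of the remaining $\Omega(g(t))$ excess negatives and $\Omega(g(t)/n)$ from inserting a positive literal for one of $\Omega(g(t)/M)$ still-missing variables. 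Note that the insertion probabilities do not involve $s(t)$; only the deletion term does, which is why a bound on $s(t)$ is needed.

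The remaining input is a bound $s(t) = O(\tinit + n\,\mathrm{polylog}\,n)$ holding throughout the run with high probability, and this is the genuine obstacle. The structural key is again Lemma~\ref{lem:bound_crit_leav}, which gives $s(t)\le 4r(t)+v(t)$: whenever $s(t)\gg n$ a constant fraction of the leaves are redundant, so a deletion is accepted (and shrinks the tree) with probability $\Omega(1)$ while an insertion grows it by at most $1$; hence $s(t)$ has non-positive drift in this far regime and cannot grow past $O(\tinit)$ while it is still $\Omega(\tinit)$. In the near regime $s(t)=O(n\,\mathrm{polylog}\,n)$ the size can only creep upward at rate $O(1)$ per iteration, and only while the algorithm sits in ``lean'' configurations (few redundant leaves — exactly the configurations still missing variables, since a variable with $1\le\#\var_i<\#\nonvar_i$ contributes redundant leaves). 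I would bound the number of iterations spent in lean configurations by a coupon-collector argument on the missing variables combined with Lemma~\ref{lem:twostates} (with $A$ the lean states where $s(t)$ may grow and $B$ the saturated states where a pending deletion is likely), so that the total upward displacement of $s(t)$ over the whole run is $O(n\,\mathrm{polylog}\,n)$. Closing this feedback loop (``more bloat $\Rightarrow$ slower progress $\Rightarrow$ more time to accumulate bloat'') is where one loses at least a logarithmic factor, so one does not obtain a clean $O(n\log n)$ bloat bound but only $s(t) = O(\tinit + n\log^2 n)$.

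Finally, conditioned on the size bound the drift analysis yields $\EE[g(Z_t)-g(Z_{t+1})\mid Z_t] \ge h(g(Z_t))$ with $h(x) = \min\{c_1,\ c_2 x/S_{\mathrm{max}}\}$ for constants $c_1,c_2>0$, $S_{\mathrm{max}} = O(\tinit + n\log^2 n)$, and $g(Z_0)=O(\tinit+n)$, so Theorem~\ref{thm:variableDrift} gives expected time $O(S_{\mathrm{max}}\log S_{\mathrm{max}} + g(Z_0)) = O(\tinit\log\tinit + n\log^3 n)$; the cross terms $\tinit\log n$ and $n\log^2 n\cdot\log\tinit$ are absorbed by distinguishing $\tinit\le n$ from $\tinit>n$. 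A Markov/restart argument caps the number of iterations one ever needs to consider at $O(\tinit\log\tinit + n\log^3 n)$ up to constants, so a union bound absorbs the failure probability of the bloat bound. The argument is insensitive to $k=1$ versus $k=1+\Pois(1)$ because the $\event{D}_2$-contribution was already controlled; for $k=1$ it simplifies sharply, as no accepted move adds a redundant leaf, $s(t)$ is essentially non-increasing, and the analysis collapses to a pure multiplicative-drift computation matching the $\Omega(\tinit\log\tinit)$ lower bound for $n=1$.
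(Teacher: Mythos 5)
Your high-level architecture (bound the bloat, then run a multiplicative/variable drift argument on the unexpressed variables, then use restarts) matches the paper's, and the drift part for unexpressed variables is essentially the paper's Lemma~\ref{lem:MAwhp}. But the heart of the matter is the bloat bound, and your argument for it has a genuine gap. Without bloat control there is no reason for $s(t)$ to have non-positive drift when $s(t)\gg n$: deleting a redundant leaf is accepted, but so is (almost) every insertion, so the size performs an essentially unbiased random walk whose only bias comes from \emph{rejections} -- and the bias is \emph{upward}, because the offspring that get rejected are disproportionately the shorter ones (e.g.\ deleting one of the $m\geq 1$ copies of $\var_i$ when $\#\var_i=\#\nonvar_i=m$), whereas rejected insertions only occur with probability $O(\#\{\text{tied variables}\}/n)$ per insertion. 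So the claim ``$s(t)$ has non-positive drift in the far regime and cannot grow past $O(\tinit)$'' is unsupported; controlling exactly this upward bias is what the paper's Theorem~\ref{thm:lengthMA} is about. Relatedly, your identification of the states where the size can grow with ``lean'' configurations (few redundant leaves, variables still missing) is backwards: the dangerous rounds are the \emph{critical} ones, in which some already-expressed variable $i$ has small surplus $V(t,i)$ (typically a tie, $V(t,i)=0$); such a variable contributes many redundant leaves and has nothing to do with missing variables, so a coupon-collector bound on the missing variables does not limit how long ties persist or how often they recur. The same misunderstanding surfaces in your closing remark that for $k=1$ ``no accepted move adds a redundant leaf'': without bloat control, inserting a redundant literal leaves the fitness unchanged and is accepted, so bloat must be controlled for $k=1$ as well.

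The missing idea is an occupation-time argument for the surplus processes $V(t,i)$ near zero. One has to show that, although the drift of $V(t,i)$ in $i$-rounds may even be slightly negative (of order $vn/\tinit$), once $V(t,i)$ leaves $0$ it needs $\Omega(\sqrt{\Tmin/n})$ $i$-rounds to return, and moreover a constant share of this expectation comes from returns of length $\Omega(\Tmin/n)$ -- a lower bound in the \emph{weak}-drift regime with tail control, which the paper proves by martingale/optional-stopping arguments (Theorem~\ref{lem:weakdrift}); plugging this into Lemma~\ref{lem:twostates} (where, as the paper notes, an expectation bound on the return time alone would \emph{not} suffice) shows that only an $O(\sqrt{n/\Tmin})$-fraction of $i$-rounds are critical, hence only $O(r_0\sqrt{n/\Tmin})$ rounds in $r_0=O(\tinit\log\tinit)$ rounds can push the size up, which is where the threshold $\Tmin\gtrsim n\log^2 n$ and the final $n\log^3 n$ term come from. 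You do invoke Lemma~\ref{lem:twostates}, but with the roles of $A$ and $B$ attached to the wrong dichotomy (lean vs.\ saturated trees rather than $V(t,i)=0$ vs.\ $V(t,i)>0$) and without any lower bound on the time spent in $B$, so the feedback loop you acknowledge (more bloat $\Rightarrow$ weaker drift $\Rightarrow$ more time to bloat) is not actually closed, and the asserted bound $s(t)=O(\tinit+n\log^2 n)$ remains unproven.
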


\subsection{Proof of the Lower Bound}

Regarding the proof of Theorem~\ref{thm:majority_lbound}, let $\tinit$ be large. Let $t_0$ be a \gptree which contains $\tinit$ leaves labeled $\nonvar_1$ and no other leaves. From a simple coupon collector's argument we get a lower bound of $\Omega(n \log n)$ for the run time to insert each $\var_i$. It remains to bound the time the algorithm needs to express the variable $1$. 

In order to derive the bound for general $n\geq 1$ we observe, that the algorithm does in expectation $2$ operations in each iteration since $\Ex{1+\Pois(1)}=2$. Hence, the algorithm needs in expectation at least $\tinit /2$ iterations to express the first variable yielding the desired result.

Regarding the bound for the case $n=1$ let $t$ be a \gptree, let $I_1(t)$ be the number of literals $\var_1$ in $t$ and $I'_1(t)$ be the number of literals $\nonvar_1$ in $t$. We associate with $t$ the potential function $g(t)$ by
\[
g(t)=I'_1(t) - I_1(t).
\]
In order to express the variable $1$, the potential $g(t)$ has to become non-negative at one point. In particular, starting with $g(t_0) = \tinit$, the potential has to reach a value of at most $\tinit^{2/3}$. Let $\tau$ denote the number of iterations until the algorithm encounters for the first time a \gptree $t$ with $g(t) \leq \tinit^{2/3}$. We are going to bound the expected value of $\tau$ starting with $t_0$, since this will yield a lower bound for the expected number of iterations until the variable $1$ is expressed.

Let $\event{A}_i$ be the event, that the algorithm performs more than $15\ln (\tinit)$ operations in the $i$-th iteration. For a better readability we define $z$ to be $15\ln (\tinit)$. Regarding the probability of $\event{A}_i$ we obtain due to the Poisson-distributed number of operations
\begin{align*}
\Prob{\event{A}_i} = \sum_{i=z}^{\infty} \frac{1}{e (i-1)!} .
\end{align*}
Let $p_i$ be the probability, that a $\Pois(1)$ distributed random variable is equal to $i$. We derive
\[
p_{i+1} = p_i \frac{1}{i+1} \leq p_i \frac{1}{2}.
\]
Since $\event{A}_i$ is $\Pois(1)$-distributed, this yields
\[
\Prob{\event{A}_i} \leq p_z \sum_{i=0}^{\infty} \frac{1}{2^i} = \frac{2}{ez!}.
\]
By the Stirling bound $n! \geq e (n/e)^n$ we obtain
\[
\Prob{\event{A}_i} \leq \frac{e^z}{ez^z} \leq \frac{\tinit ^{15}}{z^z} \leq \tinit ^{-15},
\]
where the last inequality comes from $z^z \geq e^{2z}$, which holds for $\tinit \geq 2$.

Let $\event{A}$ be the event that in $\tinit ^2$ iterations the algorithm performs at least once more than $z$ operations in a single iterations. By the union bound we obtain for the probability of $\event{A}$
\[
\Prob{\event{A}} = \Pr\left[\bigcup_{i=1}^{\tinit^2} \event{A}_i \right] \leq \sum_{i=1}^{\tinit^2} \Prob{\event{A}_i} \leq \tinit^{-13}.
\]
Hence, w.h.p.\ the algorithm will not encounter the event $\event{A}$. By the law of total expectation we deduce
\[
\Ex{\tau} = \Ex{\tau \mid \event{A}} ~ \Prob{\event{A}} + \Ex{\tau \mid \overline{\event{A}}} ~ \Prob{\overline{\event{A}}} \geq \Ex{\tau \mid \overline{\event{A}}} \frac{1}{2}.
\]
It remains to bound the expected value of $\tau$ under the constraint of $\overline{\event{A}}$.

Let $t'$ be the random variable describing the best-so-far solution in the iteration after $t$. We are going to bound the drift, i.e. the expected change $g(t)-g(t')$, which we denote by $\Delta(t)$. We recall that $g(t)=I'_1(t) - I_1(t)$, where $I'_1(t)$ is the number of literals $\nonvar_1$ and $I_1(t)$ is the number of literals $\var_1$. If the algorithm chooses an insertion, the probability to insert $\var_1$ is the same as the probability to insert $\nonvar_1$. Therefore, an insertion will only contribute $0$ to the expected drift. The same holds for the literals \emph{introduced} by a substitution. However, for literals \emph{deleted} by a deletion or substitution the probability to choose a literal $\var_1$ or $\nonvar_1$ is of importance contrary to an insertion.

Let $\event{B}$ be the event, that the algorithm chooses at least once a literal $\var_1$ for a substitution or deletion in this iteration. The probability of $\event{B}$ is at least the probability for the algorithm to do exactly one operation: a deletion or substitution of a literal $\var_1$. Let $s(t)$ be the amount of leaves of $t$ (the \emph{size}). We deduce
\[
\Prob{\event{B}} \geq \frac{2}{3e} \, \frac{I_1(t)}{s(t)}.
\] 
Furthermore, the expected negative drift of $g(t)$ can be bounded by this event $\event{B}$, which yields
\[
\Ex{\Delta \mid \event{B}} = -1.
\]

Regarding the positive drift, let $\event{C}_i$ be the event, that in this iteration the algorithm chooses to do $i$ operations, which are either substitutions or deletions of literals $\nonvar_1$. Again, the algorithm chooses with probability $1/3$ to do a substitution. Additionally, the algorithm chooses to do $i$ operations with probability $p_{i-1}$ with $p_i$ as defined above. However, the probability to choose a literal $\nonvar_1$ changes with each operation. Each deletion of a literal $\nonvar_1$ reduces $s(t)$ and $I'_1$ by $1$. Each substitution of a literal $\nonvar_1$ reduces $s(t)$ by $1$ and $I'_1$ stays the same. Therefore, we can bound the probability for a substitution by at most the probability of a deletion. This yields for $I'_1(t) < s(t)$ 
\[
\Prob{\event{C}_i} \leq \frac{2}{3^i}  \, p_{i-1}  \, \frac{I'_1(t)! (s(t)-i)!}{s(t)!(I'_1(t)-i)!} \leq \frac{2}{3^i} \, p_{i-1} \, \frac{I'_1(t)}{2 s(t)}.
\]
Hence, we obtain the expected drift for $\overline{\event{B}}$
\begin{align*}
\Ex{\Delta (t) \mid \overline{\event{B}}} ~ \Prob{\overline{\event{B}}}& \leq \frac{I'_1(t)}{e s(t)} \sum_{i=1}^{\infty} \frac{i}{3^i (i-1)!} = \frac{ 4I'_1(t)}{9 e^{2/3} s(t)}.
\end{align*}
Summarizing, we obtain by the law of total expectation
\[
E(\Delta (t)) \leq \frac{ 4 I'_1(t)}{9 e^{2/3} s(t)} - \frac{2I_1(t)}{3es(t)} \leq  \frac{2g(t)}{3e s(t)} .
\]

In order to bound the size $s(t)$ we observe that following a standard gambler's ruin argument within $\LittleO(\tinit^{1.5})$ iterations the size will not shrink by a factor bigger than $1/2$.
Therefore, we obtain $s(t) \geq 1/2 ~\tinit$.
Due to the step size bound of $15\ln (\tinit) < \tinit^{2/3}$ we can apply Theorem~\ref{thm:Multi_Drift_Bounded_Step_Size} and derive
\begin{align*}
\Ex{\tau \mid \overline{\event{A}}, \, X_0=\tinit} &\geq \frac{1 + \ln (\tinit) - \ln (\tinit ^{1/2})}{\frac{2}{3e\tinit} + \frac{(15\ln(\tinit))^2}{\tinit^{4/3} - (15\ln(\tinit))^2}}.
\end{align*}
In order to simplify this bound we observe $\ln(\tinit) \leq 3 \tinit ^{1/3}$, which yields
\[
\frac{(15\ln(\tinit))^2}{\tinit^{4/3} - (15\ln(\tinit))^2} 
\leq \frac{(15\ln(\tinit))^2}{\tinit^{4/3} - (45 \tinit^{1/3})^2}
\leq \frac{1}{2 \tinit} .
\]
Therefore, we obtain
\[
\Ex{\tau} \geq \frac{3e \, \tinit \ln (\tinit)}{8+12e}
\]
establishing the theorem.
\subsection{Proof of the Upper Bound}

\subsubsection{Outline}\label{sec:upperboundoutline}

Since the proof of Theorem~\ref{thm:majority} is long and involved, we first give an outline of the proof. The key ingredient is a bound on the bloat, i.e., on the speed with which the tree grows. Roughly speaking, we will show in Theorem~\ref{thm:lengthMA} that if $\tinit \geq n\log^2 n$, then the size of the tree grows at most by a constant factor in $\BigO(\tinit \log \tinit)$ rounds. 

Before we elaborate on the bloat, let us first sketch how this implies the upper bound. Consider any $\var_i$ that is not expressed and let $V'(t_r,i) \coloneq \#\{\nonvar_i\text{-literals}\}-\#\{\var_i\text{-literals}\}\geq 1$. (For this outline we neglect the case that there are neither $\nonvar_i$ nor $\var_i$ in the string.) Then the probability of deleting or relabeling a $\nonvar_i$ is larger than deleting or relabeling a $\var_i$, while they have the same probability to be inserted. Computing precisely, denoting $t_r$ the \gptree in round $r$, we get a drift
\begin{equation}\label{outlineeq:MA}
\EE[V'(t_{r+1},i)-V'(t_{r},i) \mid V(t_{r},i) = v] \leq -\frac{v}{3e\Tmax}
\end{equation}
for the $V'(t_{r},i)$, where $\Tmax \in \BigO(\tinit)$ is the maximal length of the string. Using a multiplicative drift theorem, Theorem~\ref{thm:multiplicativeDrift}, after $\BigO(\tinit \log \tinit)$ rounds we have $V'(t_{r},i) = 0$ with very high probability. By a union bound over all $i$, with high probability there is no $i$ left after $\BigO(\tinit \log \tinit)$ rounds for which $V'(t_{r},i)<0$. This proves the theorem modulo the statement on the bloat.

Regarding the bloat, we note that in expectation the offspring has the same size as the parent and the size of the tree does not change significantly by such unbiased fluctuations. However, in some situations bigger offsprings are more likely to be accepted or shorter offsprings are more likely to be rejected. This results in a positive drift for the size, which we need to bound. Note that the biased drift is caused purely by the selection process. We will show that offsprings are rarely rejected and bound the drift of $s(t_r)$ by (essentially) the probability that the offspring is rejected.

%For an index $i\in [n]$ we say that $i$ is \emph{touched} by some mutation, if the mutation inserts, delete or changes a $\var_i$ or $\nonvar_i$ variable or if it changes a variable into $\var_i$ or $\nonvar_i$. We call a round an \emph{$i$-round} if at least one of the mutations in this round touches $i$. First we show in Lemma~\ref{lem:touched} that the probability to see more than $k$ mutations in one round drops exponentially, even for rounds that touch $i$.

Similar as before, for an expressed variable $\var_i$ we let $V(t_r,i) \coloneq \#\{\var_i\text{-literals}\}-\#\{\nonvar_i\text{-literals}\}\geq 0$. An important insight is that the offspring can only be rejected if there is some expressed $\var_i$ such that at least $V(t_r,i)+1$ mutations touch $i$, i.e., they touch $\var_i$-literals or $\nonvar_i$-literals.\footnote{Some borders cases are neglected in this statement.} We want to show that this does not happen frequently. The probability to touch $\var_i$-literals or $\nonvar_i$-literals at least $k$ times falls geometrically in $k$, as we show in Lemma~\ref{lem:touched}. So for this outline we will restrict to the most dominant case $V(t_r,i)=0$.

Assume that we are in a situation where the size of the tree has grown at most by a constant factor. Similar as before, we may bound the drift of $V(t_r,i)$ in rounds that touch $i$ by 
\begin{equation}\label{outlineeq:drift}
\EE[V(t_{r},i)- V(t_{r+1},i) \mid V(t_r,i) = v, \text{$i$ touched in round $r$}] \leq \frac{Cvn}{\tinit}
\end{equation}
for a suitable constant $C>0$. The factor $n$ appears because we condition on $i$ being touched in round $r$, which happens with probability $\Omega(1/n)$.

Equation~\eqref{outlineeq:drift} tells us that the drift may be positive, but that it is relatively weak. In particular, for $v \leq N\coloneq\sqrt{\tinit/n}$, the drift is at most $\BigO(1/N)$. We prove that under such circumstances the expected return time to $0$ is large. More precisely, it can be shown with martingale theory (Theorem~\ref{lem:weakdrift}) that the expected number of rounds that touch $i$ to reach $V(t_r,i)=0$ from any starting configuration is at least $\Omega(N)$.\footnote{Interestingly, we also show that a substantial part of this expectation comes from return times of size $\Omega(N^2)$, which will be important to obtain tail bounds later on.} In particular, after $V(t_r,i)$ becomes positive for the first time, it needs in expectation $\Omega(N)$ rounds that touch $i$ to return to $0$. On the other hand, it only needs $\BigO(1)$ rounds that touch $i$ to leave $0$ again. Hence, $V(t_r,i)$ is only at $0$ in an expected $\BigO(1/N)$-fraction of all rounds that touch $i$.\footnote{This statement is more subtle than it may seem, and it is only true because the return times have the right tail distribution.} Thus the drift of $s(t_r)$ is also $\BigO(1/N)$.

In particular, if $\tinit \geq n\log^2 n$ then in $r_0 \in \BigO(\tinit \log \tinit)$ rounds the drift increases the size of the \gptree in expectation by at most $r_0/N \in \BigO(\tinit)$. Hence, we expect the size to grow by at most a constant factor. In fact, we provide strong tail bounds showing that it is rather unlikely to grow by more than a constant factor. The exact statement can be found in Theorem~\ref{thm:lengthMA}.

\subsubsection{Preparations}

We now turn to the formal proof of Theorem~\ref{thm:majority}.

\paragraph{Notation.} We start with some notation and technical lemmas. For a variable $i\in [n]$, we say that $i$ is \emph{touched} by some mutation, if the mutation inserts, delete or changes a $\var_i$ or $\nonvar_i$ variable, or if it changes a variable into $\var_i$ or $\nonvar_i$. We say that a mutation touches $i$ \emph{twice} if it relabels a $\var_i$-literal into $\nonvar_i$ or vice versa. Note that a relabeling operation has only probability $\BigO(1/n)$ to touch a literal twice. We call a round an \emph{$i$-round} if at least one of the mutations in this round touches $i$. Finally, we say that $i$ is \emph{touched $s$ times} in a round if it is touched exactly $s$ times by the mutations of this round (counted with multiplicity $2$ for mutations that touch $i$ twice). 

For a GP-tree $t$, let
$$
V(t,i) \coloneq 
\begin{cases}
-1,				&\mbox{no $\var_i$ or $\nonvar_i$ appear in the tree;}\\
-z,				&\mbox{there are $z>0$ more $\nonvar_i$ than $\var_i$;}\\
z,				&\mbox{$\var_i$ is expressed, and there are $z \geq 0$ more $\var_i$ than $\nonvar_i$.}
\end{cases}
$$
In particular, $i$ is expressed if and only if $V(t,i) \geq 0$. Note that $V(t,i)=-1$ may occur either if $\var_i$ and $\nonvar_i$ do not appear at all, or if exactly one more $\nonvar_i$ than $\var_i$ appears. Both cases have in common that $i$ will be expressed after a single insertion of $\var_i$. %We call an index $i \in [n]$ \emph{critical} if $V(t,i) \in \{0,1\}$.

Note that a mutation that touches $i$ once can change $V(t,i)$ by at most $1$, with one exception: if $V(t,i) =1$ and there is only a single positive $\var_i$-literal, then $V(t,i)$ may drop to $-1$ by deleting this literal. Conversely, $V(t,i)$ can jump from $-1$ to $1$ by the inverse operation. In general, if $i$ is touched at most $s$ times and $V(t,i) >s$ then $V(t,i)$ can change at most by $s$; it can change sign only if $|V(t,i)| \leq s$. We say that a variable $i$ is \emph{critical} in a round if $V(t,i) \geq 0$, and $i$ is touched at least $V(t,i)$ times in this round; we call the variable \emph{non-critical} otherwise. Moreover, we say that a variable is \emph{positive critical} if it is critical and $V(t,i)$ is strictly positive. We say that a round is (positive) critical if there is at least one (positive) critical variable in this round. Note that in a non-critical round, the fitness of the \gptree cannot decrease. 

\paragraph{Many Mutations.}
We conclude our preparations with a lemma stating that it is exponentially unlikely to have many mutations, even if we condition on some variable to be touched.
\begin{lemma}\label{lem:touched}
There are constants $C,\delta >0$ and $n_0 \in \N$ such that the following is true for every $n\geq n_0$, every \gptree $t$ with $T \geq 2n$ leaves%\merk{remove this condition?}
, and every $\kappa\geq 2$. Let $i \in [n]$, and let $k$ denote the number of mutations in the next round. Then:
\begin{enumerate}
\item \label{lem:touched_1} $\Pr[k \geq \kappa] \leq e^{-\delta \kappa}$.
\item \label{lem:touched_2}$\Pr[k = 1 \mid i \text{ touched}] \geq \delta$.
\item \label{lem:touched_3}$\Pr[k \geq \kappa \mid i \text{ touched}] \leq e^{-\delta \kappa}$.
\item \label{lem:touched_4}$\EE[k \mid i \text{ touched}] \leq C$.
\end{enumerate}
\end{lemma}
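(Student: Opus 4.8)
The plan is to treat the four parts in order, deriving each from a concrete computation about the number $k$ of mutations in a round (which is $1$ or $1+\Pois(1)$) together with the probability that a single mutation touches a fixed variable $i$.

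\textbf{Part~\ref{lem:touched_1}.} If $k\equiv 1$ the bound is trivial for $\delta\leq \ln 2$ and $\kappa\geq 2$. If $k=1+\Pois(1)$, then $\Pr[k\geq\kappa]=\Pr[\Pois(1)\geq\kappa-1]=\sum_{j\geq\kappa-1}\frac{1}{e\,j!}$. Using the ratio estimate $p_{j+1}/p_j = 1/(j+1)\leq 1/2$ for $j\geq 1$ (exactly as in the proof of Theorem~\ref{thm:majority_lbound}), this tail is at most $\frac{2}{e(\kappa-1)!}$, and by the Stirling bound $(\kappa-1)!\geq e((\kappa-1)/e)^{\kappa-1}$ this is exponentially small in $\kappa$; pick $\delta$ accordingly.

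\textbf{Parts~\ref{lem:touched_2}--\ref{lem:touched_4}.} The heart of the matter is to control the conditional distribution of $k$ given that $i$ is touched. First I would estimate, for a \gptree $t$ with $T\geq 2n$ leaves, the probability $\pi_1$ that a \emph{single} mutation touches $i$: an insert touches $i$ with probability $\tfrac13\cdot\tfrac{2}{2n}=\tfrac{1}{3n}$ (choosing a literal from $\{\var_i,\nonvar_i\}$), a substitution touches $i$ with probability at most $\tfrac13\big(\tfrac{\#\{\var_i,\nonvar_i\}}{T}+\tfrac{2}{2n}\big)$ and a deletion with probability $\tfrac13\cdot\tfrac{\#\{\var_i,\nonvar_i\}}{T}$, so $\pi_1=\Theta(1/n)$ in the worst case but could be as large as $\Theta(1)$ if $i$ is very frequent; what matters is only that $\pi_1\leq c$ for a constant $c<1$ once $n\geq n_0$ is large enough (indeed $\pi_1\leq \tfrac13+\tfrac{2}{3n}+\tfrac13<1$ for all configurations). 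For a round with $k$ mutations, the probability $p_k\coloneqq\Pr[i\text{ touched}\mid k]$ satisfies $\pi_1\leq p_k\leq k\pi_1$ by a union bound and monotonicity. Now Bayes gives, writing $a_k\coloneqq\Pr[k\text{ mutations}]$,
\[
\Pr[k\mid i\text{ touched}] \;=\; \frac{a_k\,p_k}{\sum_{j}a_j\,p_j}\;\leq\;\frac{a_k\,k\,\pi_1}{a_1\,\pi_1}\;=\;\frac{k\,a_k}{a_1}.
\]
For $k\equiv 1$ everything is immediate; for $k=1+\Pois(1)$ we have $a_1=1/e$ and $a_k=\tfrac{1}{e(k-1)!}$, so $\Pr[k\mid i\text{ touched}]\leq k/(k-1)!$. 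Part~\ref{lem:touched_2} follows since $\Pr[k=1\mid i\text{ touched}]\geq a_1\pi_1/\sum_j a_j p_j\geq a_1\pi_1/(\pi_1\,\EE[k\text{ as a round count}])=a_1/\EE[\text{round count}]$, which is a constant (here $\EE[\text{round count}]\leq 2$), giving a valid $\delta$. Part~\ref{lem:touched_4} follows from $\EE[k\mid i\text{ touched}]\leq\sum_{k\geq1}k\cdot\tfrac{k a_k}{a_1}=e\sum_{k\geq1}\tfrac{k^2}{(k-1)!}=\BigO(1)$. Part~\ref{lem:touched_3} follows from the same geometric-decay argument as Part~\ref{lem:touched_1}, now applied to $\sum_{k\geq\kappa}\tfrac{k a_k}{a_1}=e\sum_{k\geq\kappa}\tfrac{k}{(k-1)!}$, which is again exponentially small in $\kappa$; shrink $\delta$ if necessary so that a single $\delta$ works for all four parts.

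\textbf{Main obstacle.} The only genuinely delicate point is making sure the Bayes step is clean: one must lower-bound the \emph{denominator} $\sum_j a_j p_j$ by something of order $\pi_1$ (which is easy, just keep the $j=1$ term) and upper-bound the numerator $a_k p_k$ using $p_k\leq k\pi_1$, so that the possibly tiny factor $\pi_1$ cancels and leaves a bound independent of how rare touching $i$ is. Once that cancellation is in place, everything reduces to the elementary fact that $\sum_k k^s/(k-1)!<\infty$ with geometrically decaying tails, which is handled exactly as in the lower-bound proof. A minor care point is the ``touch twice'' multiplicity for relabelings, but since a relabeling touches $i$ twice only with probability $\BigO(1/n^2)$ within a mutation that already touches $i$, it changes $p_k$ and $\EE[k\mid\cdot]$ by a negligible additive $\BigO(1/n)$ and does not affect any of the bounds for $n\geq n_0$.
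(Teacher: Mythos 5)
Your overall skeleton (Bayes with the touch probability cancelling, geometric decay of the Poisson weights, deducing \ref{lem:touched_4} from \ref{lem:touched_3}) is the same as the paper's, and your treatment of part \ref{lem:touched_1} is fine. The gap is in the one line you dismiss as ``a union bound and monotonicity'': the inequality $p_k=\Pr[i\text{ touched}\mid k]\leq k\pi_1$, where $\pi_1$ is the touch probability of a single mutation \emph{computed on the original tree $t$}. Within one round the $k$ mutations are applied sequentially to a changing tree: earlier mutations may insert or relabel leaves into $\var_i/\nonvar_i$ (raising the number of $i$-literals above its initial value) and deletions shrink the tree (raising the relative frequency of $i$-literals), so the touch probability of the $j$-th mutation is not bounded by $\pi_1$ and the naive union bound does not apply. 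This is exactly the difficulty the paper flags (``Unfortunately, that is not true for subsequent mutations in the same round, which makes the proof considerably more complicated'') and both your numerator bound and your denominator bound $\sum_j a_j p_j\leq \pi_1\,\EE[k]$ in part \ref{lem:touched_2} rest on it.

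The statement you need is true up to constants, but establishing it is the actual content of the lemma. The paper's route: set $x=\max\{\#\{i\text{-literals}\}/T,\,1/n\}$, condition on $i$ \emph{not yet touched} (which freezes the number of $i$-literals at its initial value), and restrict to $k\leq 1/x\leq n\leq T/2$ so that after at most $k$ operations the tree still has at least $T/2$ leaves; this gives a conditional per-mutation touch probability at most $3x$, hence $\Pr[i\text{ touched}\mid k]\leq 6kx$, while the regime $k>1/x$ is absorbed by the exponentially small Poisson tail (and $\tfrac1x e^{-\delta/x}$ is bounded). Your proposal never controls either the growth of the $i$-literal count or the shrinkage of the tree inside a round, so as written the central inequality is unproven; once you add the conditioning argument and the case split in $k$, the rest of your computation goes through essentially as in the paper.
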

\begin{proof}
Note that all statements are trivial if the \oneonegp uses $k=1$ deterministically. So for the rest of the proof we will assume that $k$ is $1+\Pois(1)$-distributed. We will use the well known inequality
\begin{align}\label{eq:Poissonbound}
\Pr[\Pois(\lambda) \geq x] \leq e^{-\lambda}\left(\frac{e\lambda}{x}\right)^x
\end{align}
for the Poisson distribution~\cite{BookMitUp}. In our case ($\lambda =1$, $x=\kappa-1$), and using $e^{-1} \leq 1$, we can simplify to
\begin{equation}\label{eq:Poisson}
\Pr[\Pois(1) \geq \kappa-1] \leq \left(\frac{e}{\kappa-1}\right)^{\kappa-1}.
\end{equation}

\noindent\ref{lem:touched_1}: First consider $\kappa \geq 4$. Then, using $\kappa -1 \geq \kappa/2$ we get from \eqref{eq:Poisson}:
\begin{align*}
\Pr[k \geq \kappa] & =  \Pr[\Pois(1) \geq \kappa-1]  \leq (e/3)^{\kappa/2} = e^{\log(e/3)\kappa/2}.
\end{align*}
Thus \ref{lem:touched_1} is satisfied for $\kappa \geq 4$ with $\delta \coloneq \log(e/3)/2$. By making $\delta$ smaller if necessary, we can ensure that \ref{lem:touched_1} is also satisfied for $\kappa \in \{2,3\}$., which proves this property.

\noindent\ref{lem:touched_2} and \ref{lem:touched_3}: Let $T = s(t)$ be the size of $t$ (the number of leaves). Additionally, we define the parameter
\[
x \coloneq \max\left\{\frac{\#\{\text{i-literals in $t$}\}}{T}, \frac{1}{n}\right\}.
\]
Note that the next mutation has probability at most $2x$ to touch $i$. Unfortunately, that is not true for subsequent mutations in the same round, which makes the proof considerably more complicated. 
%Then every mutation (insertion, deletion, or mutation) has probability at most $2x$ to touch $i$. By a union bound, the probability that a round touches $i$ is at most
%\begin{align}\label{eq:k1lower}
%\Pr[\text{$i$ touched}] & \leq \EE[k] \cdot \Pr[$i$ touched by a fixed mutation] \leq 4x.
%\end{align}
We claim
\begin{align}\label{eq:k1lower}
\Pr[k=1 \text{ and $i$ touched}] & \geq \frac{x}{3e}.
\end{align}
To see the claim, first note that $\Pr[k=1] = 1/e$ by definition of the Poisson distribution. First, consider the case that $x = 1/n$. Then we have $\Pr[k=1 \text{ and $\var_i$ or $\nonvar_i$ inserted}] = 1/(3en)$, which implies~\eqref{eq:k1lower}. In the other case, the probability that a deletion operation picks a $\var_i$ or $\nonvar_i$ is $x$, so $\Pr[k=1 \text{ and $\var_i$ or $\nonvar_i$ inserted}] = x/(3e)$, which also implies~\eqref{eq:k1lower}. This proves~\eqref{eq:k1lower} in all cases.

We first prove the simpler case of large $x$; more precisely, let $x \geq 1/4$. With probability $1/e$ there is only one mutation and with probability at least $x/3 \geq 1/12$ this mutation deletes a $\var_i$ or $\nonvar_i$-literal. Hence,
\[
\Pr[k =1 \text{ and $i$ touched}] \geq \frac{1}{12}.
\]
This already implies \ref{lem:touched_2}, because 
\[
\Pr[k=1 \mid i \text{ touched}] \geq \Pr[k =1 \text{ and $i$ touched}] \geq \frac{1}{12e}.
\] 
Regarding \ref{lem:touched_3} it suffices to observe that
\begin{align}\label{eq:largex}
\Pr[k\geq \kappa \mid i \text{ touched}] & = \frac{\Pr[k \geq \kappa \text{ and $i$ touched}]}{\Pr[i \text{ touched}]} \nonumber \\
& \leq \frac{\Pr[k \geq \kappa]}{\Pr[k =1 \text{ and $i$ touched}]} \stackrel{1.}{\leq} 12e\cdot e^{-\delta \kappa},
\end{align}
which implies \ref{lem:touched_3} by absorbing the factor $12e$ into the exponential.

%\noindent \textbf{Case 1}. There are at most $T/n$ literals of the form $\var_i$ or $\nonvar_i$ in $t$. Then
%\begin{align}\label{eq:k1lower}
%\Pr[k=1 \text{ and $i$ touched}] & \geq \Pr[k=1 \text{ and $\var_i$ inserted}] = \frac{1}{2en}.
%\end{align}
%On the other hand, any insertion and deletion has probability at most $1/n$ to touch $i$, and a relabelling has probability at most $2/n$ to touch $i$. Therefore we also get a similar upper bound,
%\begin{align}\label{eq:k1upper}
%\Pr[k=1 \text{ and $i$ touched}] & = \Pr[k=1]\Pr[i \text{ touched} \mid k=1] \nonumber \\ & \leq \frac{2}{en}.
%\end{align}

The case for smaller $x$ basically runs along the same lines, but will be much more involved. In particular, in~\eqref{eq:largex} we cannot use the trivial bounds in the second line. So assume from now on $x < 1/4$ and thus at most one fourth of the literals in $t$ are $i$-literals. In the following we will bound the probability to have $k > 1$ mutations such that at least one of them touches $i$. The probability to have $k=\kappa$ mutations is $\Pr[\Pois(1) = \kappa-1]$. We will first assume $k \leq 1/x$. Note for later reference that $k \leq 1/x \leq n \leq T/2$ in this situation.

We fix some value $k \leq 1/x$. Let us refer to the mutations by $M_1,\ldots,M_k$ and let $\kappa_i \coloneq \min\{1 \leq \kappa \leq k \mid M_\kappa \text{ touches }i\}$ be the index of the first mutation that touches $i$. If none of $M_1,\ldots,M_k$ touches $i$ then we set $\kappa_i \coloneq \infty$. We claim that for all $k\leq 1/x$ and all $1 \leq \kappa \leq k$,
\begin{align}\label{eq:kappai}
\Pr[\kappa_i \geq \kappa+1  \mid k, \kappa_i \geq \kappa] \geq 1 - 3x \geq e^{-6x},
\end{align}
where the last inequality holds since $x < 1/4$.

In order to see the the first inequality of~\eqref{eq:kappai} we distinguish two cases. If $x=1/n$, then the number of $i$-literals in $t$ is at most $Tx = T/n$. Since we condition on $\kappa_i \geq \kappa$, the number of $i$-literals is still at most $T/n$ after the first $\kappa-1$ operations. The number of leaves after $\kappa-1 < n$ operations is at least $T-n \geq T/2$. Hence, the probability to pick one of these leaves for deletion or relabeling is at most $(2/3) (T/n) / (T/2) < 2/n$. On the other hand, the probability to insert an $i$-literal or to relabel a leaf with $\var_i$ or $\nonvar_i$ is at most $1/n$. By the union bound, the probability to touch $i$ is at most $3/n$. This proves~\eqref{eq:kappai} if $x=1/n$.

The other case is very similar only involving different numbers. The number of $i$-literals in $t$ is $Tx$. Since $k \leq 1/x \leq T/2$, after $\kappa \leq k$ operations the size of the remaining tree is at least $T/2$. Therefore, the probability that $M_{\kappa}$ picks an $i$-literal for deletion or relabeling is at most $(2/3) xT /(T/2) \leq 2x$. On the other hand, the probability to insert an $i$-literal or to relabel a leaf with $\var_i$ or $\nonvar_i$ is at most $1/n \leq x$. By the union bound, the probability to touch $i$ is at most $3x$. This proves~\eqref{eq:kappai} if $x=\#\{i\text{-literals}\}/T$.

We can expand~\eqref{eq:kappai} to obtain the probability of $\kappa_i = \infty$. For $2 \leq k \leq 1/x$,
\begin{align*}
\Pr[\kappa_i = \infty \mid k] & = \prod_{i=1}^k \Pr[\kappa_i \geq \kappa+1  \mid k, \kappa_i \geq \kappa] \geq e^{-6kx},
\end{align*}
and consequently 
\begin{align*}
\Pr[i \text{ touched} \mid k] & = 1- \Pr[\kappa_i = \infty \mid k] \leq 1 - e^{-6kx} \leq 6kx.
\end{align*}
For $k > 1/x$ we will use the bound $\Pr[i \text{ touched} \mid k ] \leq 1$. To ease notation, we will assume in our formulas that $1/x$ is an integer. Then we may bound 
\begin{align*}
\hspace{-0em}\Pr[k \geq 2 \text{ and $i$ touched}] &\leq  \sum_{\kappa=2}^{1/x} \Pr[k= \kappa] \Pr[i \text{ touched} \mid k = \kappa] + \hspace{-.6em}\sum_{\kappa = 1+1/x}^{\infty} \hspace{-.6em}\Pr[k= \kappa] \\
& \stackrel{1.}{\leq} \sum_{\kappa=2}^{1/x} e^{-\delta \kappa}6\kappa x  +  \hspace{-.6em}\sum_{\kappa = 1+1/x}^{\infty} \hspace{-.6em} e^{-\delta \kappa}
\leq x \sum_{\kappa=2}^\infty (6 \kappa +\tfrac{1}{x}e^{-\delta/x})e^{-\delta \kappa} \\ 
&\leq Cx
\end{align*}
for a suitable constant $C>0$, since the function $\tfrac{1}{x}e^{-\delta/x}$ is upper bounded by a constant for $x \in (0,1]$. Together with~\eqref{eq:k1lower}, we get
\begin{align*}
\frac{1}{\Pr[k=1 \mid i \text{ touched}]} &= 1 + \frac{\Pr[k\geq 2 \text{ and $i$ touched}]}{\Pr[k=1 \text{ and $i$ touched}]}\\
& \leq 1+ \frac{Cx}{x/(3e)} = 1+3eC.
\end{align*}
This proves \ref{lem:touched_2} for $\delta \coloneq 1/(1+3Ce)$. For \ref{lem:touched_3} we compute similar as before
\begin{align*}
\Pr[k \geq \kappa \text{ and $i$ touched}] & \leq  \sum_{\kappa'=\kappa}^{1/x} \Pr[k= \kappa'] \Pr[i \text{ touched} \mid k = \kappa'] + \hspace{-1.5em}\sum_{\kappa' = \max\{\kappa,1+1/x\}}^{\infty} \hspace{-2em}\Pr[k= \kappa'] \\
& \leq \sum_{\kappa'=\kappa}^{1/x} e^{-\delta \kappa'}6\kappa'x  +  \sum_{\kappa' = \max\{\kappa,1+1/x\}}^{\infty} e^{-\delta \kappa'}\\
&\leq xe^{-\delta \kappa/2} \sum_{\kappa'=1}^\infty (6 \kappa'+ \tfrac{1}{x}e^{-\delta/x})e^{-\delta \kappa'/2}  \leq Cxe^{-\delta\kappa/2}
\end{align*}
for a suitable constant $C>0$. Therefore, as before, 
\begin{align*}
\frac{1}{\Pr[k\geq \kappa \mid i \text{ touched}]} &= 1 + \frac{\Pr[k <  \kappa \text{ and $i$ touched}]}{\Pr[k\geq \kappa \text{ and $i$ touched}]} \geq 1 + \frac{\Pr[k =1 \text{ and $i$ touched}]}{\Pr[k\geq \kappa \text{ and $i$ touched}]} \\
& \geq 1+ \frac{x/(3e)}{Cxe^{-\delta\kappa/2}} \geq \frac{1}{3eC}e^{\delta\kappa/2}.
\end{align*}
This proves \ref{lem:touched_3}, since we may decrease $\delta$ in order to swallow the constant factor $3eC$ by the term $e^{\delta\kappa/2}$.

\noindent\ref{lem:touched_4}: This follows immediately from \ref{lem:touched_3}, because
\begin{align*}
\EE[k \mid i \text{ touched}] & = \sum_{\kappa \geq 1} \Pr[k \geq \kappa \mid i \text{ touched}]  \leq 1 + \sum_{\kappa \geq 2} e^{-\delta \kappa},
\end{align*}
and the latter sum is bounded by an absolute constant.
\end{proof}

\subsubsection{Bloat Estimation}

The main part of the proof is to study how the size of the \gptree increases. We show that it increases by only a little more than a constant factor within roughly $\tinit \log\tinit$ rounds if $\tinit \in \omega(n\log^2 n)$.
\begin{theorem}\label{thm:lengthMA}
There is $\eps >0 $ such that the following holds. Let $f = f(n) \in \omega(1)$ be any growing function with $f(n) \in \LittleO(n)$. Let $\Tmin \coloneq \max\{\tinit, f(n) \, n\log^2n\}$. Then for sufficiently large $n$, with probability at least $1-\exp(-\eps\sqrt{f(n)})$, within the next $r_0\coloneq\eps f(n) \Tmin \log \Tmin$ rounds the tree has never more than $\Tmax \coloneq \sqrt{f(n)}\Tmin$ leaves. %
\end{theorem}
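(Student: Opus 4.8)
The plan is to track the size $s(t_r)$ of the current \gptree as a supermartingale-like process and to control the positive drift contributed by the selection bias. As explained in the outline (Section~\ref{sec:upperboundoutline}), mutations are size-unbiased in expectation, so the only source of growth is that in a \emph{critical} round a short offspring may be rejected, leaving a (possibly larger) parent in place. Concretely, I would first argue that if the offspring is accepted then $\EE[s(t_{r+1}) - s(t_r) \mid \text{accepted}, \text{round touches the critical variables in a fixed way}] $ is $0$ up to a term bounded by the (conditional) expected number of mutations, which is $\BigO(1)$ by Lemma~\ref{lem:touched}\ref{lem:touched_4}; and that a rejection can only cost extra size proportional to the number of mutations $k$ in that round, whose tail is geometric by Lemma~\ref{lem:touched}\ref{lem:touched_1}. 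Thus the drift of $s(t_r)$ is, up to constants, the probability that round $r$ is \emph{positive critical} and causes a rejection, times a $\BigO(1)$ factor. The whole game is to bound the total number of such rejection-causing rounds in $r_0$ rounds.

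The key combinatorial input, following the outline, is that for a given expressed variable $i$ with $V(t_r,i)=v\geq 0$, a rejection through $i$ requires $i$ to be touched at least $v+1$ times in round $r$; and conditioned on $i$ being touched at all, the number of times it is touched has a geometric tail (Lemma~\ref{lem:touched}\ref{lem:touched_3}), so the per-$i$-round probability of a rejection through $i$ falls geometrically in $v$. Hence the dominant contribution comes from variables currently at $V(t_r,i)=0$ (the ``critical at value $0$'' case). For such a variable, I would set $N \coloneq \sqrt{\Tmin/n}$ (note $N \in \omega(\log n)$ by the choice of $\Tmin$) and invoke the weak-drift machinery: by the analogue of~\eqref{outlineeq:drift}, as long as $s(t_r) \leq \Tmax = \sqrt{f(n)}\,\Tmin$ the drift of $V(t_r,i)$ in $i$-rounds is at most $\BigO(1/N)$ while $V(t_r,i)\leq N$, the step sizes are geometrically bounded (Lemma~\ref{lem:touched}), and after leaving $0$ it re-increases with constant probability. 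So Theorem~\ref{lem:weakdrift} applies with this $N$: once $V(t_r,i)$ leaves $0$ it takes $\Omega(N)$ $i$-rounds in expectation to return, and — crucially — with probability $\Omega(1/N)$ it takes $\Omega(N^2)$ $i$-rounds. The latter tail is exactly the hypothesis of Lemma~\ref{lem:twostates} (with the two states $A = \{V(t_r,i)=0\}$, $B=\{V(t_r,i)>0\}$, $\delta$ a constant, $b = \Omega(N)$, $s = \Omega(N^2)$): it yields that, among the first $r$ $i$-rounds, the number spent at $V=0$ is $\BigO(r/(b\delta)) = \BigO(r/N)$ in expectation, with an exponential tail $e^{-\Omega(r/N^2)}$.

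From here the argument is: each \gptree has $\leq \Tmax/2 \in \BigO(f(n)\Tmin)$ literals, so over $r_0 = \eps f(n)\Tmin\log\Tmin$ rounds the total number of $i$-rounds summed over all $i$ is $\BigO(r_0 \cdot \EE[k])=\BigO(r_0)$, and for each individual $i$ the number of $i$-rounds in $r_0$ rounds is, with overwhelming probability, at most $\BigO(r_0 \cdot x_i)$ where $x_i$ is the ($i$-literal density, capped at $1/n$); summing, $\sum_i (\#\ i\text{-rounds}) = \BigO(r_0)$, and each $i$ contributes $\#\{\text{rejection rounds through }i\}$ which we have just bounded by $\BigO((\#\ i\text{-rounds})/N)$ in expectation with exponential tails. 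Summing over $i$, the total number of rejection-causing rounds in $r_0$ rounds is $\BigO(r_0/N)$ in expectation, and since each costs $\BigO(1)$ expected size (with geometric tails via Lemma~\ref{lem:touched}\ref{lem:touched_1}), the total growth is $\BigO(r_0/N) = \BigO(\eps f(n)\Tmin \log\Tmin / \sqrt{\Tmin/n}) = \BigO(\eps f(n)\sqrt{n\Tmin}\log\Tmin)$, which is $\LittleO(\Tmin) \cdot \sqrt{f(n)}$-type small once one checks the parameters with $\Tmin \geq f(n)n\log^2 n$; choosing $\eps$ small makes this at most $(\sqrt{f(n)}-1)\Tmin$, so the size stays below $\Tmax$. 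The tail probability $\exp(-\eps\sqrt{f(n)})$ comes from combining: (a) a union bound over the $n$ variables of the $e^{-\Omega(r_0 x_i/N^2)}$ tails from Lemma~\ref{lem:twostates}, using $r_0 x_i/N^2 \gtrsim f(n)\log\Tmin \cdot (n/\Tmin)\cdot(\text{literal count})$ which is large; (b) the geometric tails on the number of mutations per round; (c) a Chernoff/Azuma bound for the accumulated size increments, phrased so that one unlikely deviation is needed. Finally one runs the argument on a sequence of ``epochs'' and uses a standard stopping-time/restart trick so that the conditioning ``$s(t_r)\leq \Tmax$'' in the drift bound~\eqref{outlineeq:drift} is self-consistent — i.e.\ we only ever need the weak-drift bound while the size is still small, and we show it never gets large.

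The main obstacle is the self-referential conditioning: the weak-drift bound on $V(t_r,i)$ needed to control growth is only valid while $s(t_r) \leq \Tmax$, so one cannot simply apply Theorem~\ref{lem:weakdrift} and Lemma~\ref{lem:twostates} globally. The clean fix is to define the stopping time $T^* = \min\{r : s(t_r) > \Tmax\}$, run all drift/occupation estimates for the stopped process $s(t_{r\wedge T^*})$, $V(t_{r\wedge T^*},i)$ (for which the hypotheses hold unconditionally), conclude that with probability $1-\exp(-\eps\sqrt{f(n)})$ the stopped size never exceeds, say, $\tfrac12\Tmax$ within $r_0$ rounds, and then note this forces $T^* > r_0$ on that event, so the stopped and unstopped processes agree. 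A secondary technical nuisance is handling variables with $V(t_r,i)\geq 1$: these contribute geometrically less per the geometric tail in Lemma~\ref{lem:touched}\ref{lem:touched_3}, so summing the geometric series $\sum_{v\geq 1}$ of their contributions only changes the constants; and handling the borderline sign-change cases ($V=1$ with a single $\var_i$, $V=-1$), which are absorbed into the constants exactly as flagged in the footnotes of the outline.
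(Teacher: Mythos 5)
Your plan follows the paper's own route in its essentials: charge the growth of $s(t_r)$ to (positive) critical rounds, use the weak-drift theorem (Theorem~\ref{lem:weakdrift} with $N=\sqrt{\Tmin/n}$) together with the two-state occupation lemma (Lemma~\ref{lem:twostates}) to show such rounds form only an $\BigO(1/N)$-fraction of the $i$-rounds, and finish with Chernoff and union bounds (the paper formalizes the size-domination step by an explicit coupled process $S_r=\Tmin+\sum_j(X_j'+X_j)$, but that is the same idea). However, there is a genuine gap in how you make the drift hypotheses self-consistent. The bound of order $vn/s(t_r)\le \BigO(1/N)$ on the drift of $V(t_r,i)$ requires a \emph{lower} bound $s(t_r)\geq \Tmin$ — a small tree makes the pull of $V(t_r,i)$ towards $0$ \emph{stronger}, not weaker — whereas you condition on the upper bound $s(t_r)\leq\Tmax$, which is in fact never needed in the bloat estimation. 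Consequently your stopping time $T^*=\min\{r : s(t_r)>\Tmax\}$ fixes a non-issue, and the claim that the hypotheses of Theorem~\ref{lem:weakdrift} and Lemma~\ref{lem:twostates} "hold unconditionally" for the stopped process is false: they fail whenever the tree is smaller than $\Tmin$, which is the situation from round~$0$ whenever $\tinit \ll f(n)\,n\log^2 n$, and can also occur later by shrinking. The paper handles exactly this by proving the statement under the standing assumption $s(t_r)\geq\Tmin$ and restarting the argument at the next time the size re-exceeds $\Tmin$ whenever it dips below (dipping below is harmless since $\Tmax$ is measured relative to $\Tmin$); some device of this kind is missing from your plan and the missing direction is the one that matters.

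A second, smaller gap is the treatment of levels $v\geq 1$. Dismissing them because "the per-$i$-round rejection probability falls geometrically in $v$" is not enough: at $v=1$ the factor $e^{-\Omega(v)}$ is merely a constant (touching $i$ twice in an $i$-round is not rare in any $N$- or $n$-dependent sense), so if the walk spent a constant fraction of its $i$-rounds at level $1$, the level-$1$ rejections alone could contribute $\Omega(E_i)$, i.e.\ order $r_0$ in total, vastly exceeding $\Tmax$. What is needed — and what the paper proves — is the occupation bound $E_{i,v}\in\BigO(E_i/N)$ for \emph{every} level $0\leq v\leq \eps N$, obtained by applying Lemma~\ref{lem:twostates} level by level (your machinery supports this with no new ideas); only then does multiplying by $e^{-\Omega(v)}$ and summing the geometric series yield $\BigO(E_i/N)$ rejection rounds through $i$. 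Two further minor points: in the final arithmetic you should use $\Tmin\geq \tfrac12 f(n)\,n\log^2\Tmin$ (as the paper does) rather than $\Tmin\geq f(n)\,n\log^2 n$, so that the factor $\log\Tmin$ cancels even when $\tinit$ is superpolynomial in $n$; and for the tail of Lemma~\ref{lem:twostates} you need a high-probability \emph{lower} bound $E_i=\Omega(r_0/n)$ (from insertions alone, via Chernoff), which your sketch gestures at but should state explicitly.
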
\

The proof of Theorem~\ref{thm:lengthMA} is the most technical part of the proof and this whole subsection is devoted to it. First, we provide an outline of the basic ideas, adding some actual numbers to the general outline presented in Section~\ref{sec:upperboundoutline}. We will couple the size of the GP tree to a different process $S = (S_r)_{r\geq 0}$ on $\N$ which is easier to analyze. The key idea is that we only have a non-trivial drift in rounds in which the offspring is rejected. As we will see later, this event does not happen often. Formally, we define $S$ by a sum $S_r = \Tmin + \sum_{j=1}^r (X_j' +X_j)$, where $X_j'$ are independent random variables with zero drift, and $X_j$ are only non-zero in critical rounds. 

The most difficult part is to bound the contribution of the $X_j$, i.e, to show that most rounds are non-critical. To this end, we will show that the random variables $V(t,i)$, once they are non-negative, follow a random walk as described in Theorem~\ref{lem:weakdrift}, with parameter $N \coloneq \sqrt{\Tmin/n} \geq \sqrt{f(n)}\log \Tmin$. For the purpose of this outline we consider only rounds in which at most one variable $i\in [n]$ with $V(t,i) =0$ is critical. This (almost) covers the case when the number $k$ of mutations in a round is constantly one, but similar arguments transfer to the case when $k$ is $1+\Pois(1)$-distributed. Whenever $i$ is touched in such a round then $V(t,i)$ has probability $\Omega(1)$ to increase, so the state $V(t,i)=0$ will only persist for $\BigO(1)$ rounds that touch $i$. On the other hand, after being increased, it needs in expectation $\Omega(N)$ $i$-rounds to return to zero. Intuitively, this means that in a random $i$-round, the probability to encounter $V(t,i)=0$ is $\BigO(1/N)$. Note that this intuition is not quite correct, but we can use Lemma~\ref{lem:twostates} for the formal argument. Since each round touches only $\BigO(1)$ variables, and each of them has only probability $\BigO(1/N)$ to be critical, there are only $\BigO(r_0/N) \in \BigO(\eps\sqrt{f(n)}\Tmin)$ critical rounds within $r_0$ rounds. Thus the size of the \gptree grows only roughly by a constant factor in $\Tmin \log \Tmin$ rounds.

\begin{proof}[Proof of Theorem~\ref{thm:lengthMA}] 
We will prove the theorem under the assumption that the size of the \gptree never falls below $\Tmin$. This is justified because we can track the process until either $r_0$ rounds have passed or the size of the \gptree falls below $\Tmin$ in some round $r\leq r_0$. In the former case we are done, in the latter case we apply the same argument again starting in the next round in which the size of the \gptree exceeds $\Tmin$.\footnote{We are slightly cheating here, because for $k\sim1+\Pois(1)$, the size of the \gptree may jump to something strictly larger than $\Tmin$ in one step. However, our proof also works if we start with any \gptree of size at most $2\Tmin$, and the probability to increase the size of the \gptree by more than $\Tmin$ in one step is negligibly small.} 

Let $t$ be the \gptree in round $j$, let $k$ be the number of mutations in this round, and let $t'$ be the tree resulting from these mutation. We set $X_{j+1}' \coloneq s(t')|-s(t)$, and
\begin{equation}\label{eq:defofX_r}
X_{j+1} \coloneq 
\begin{cases}
k,			&\mbox{if round $j$ is positive critical;}\\
0,			&\mbox{otherwise.}
\end{cases}
\end{equation}
As mentioned in the outline, we define $S_r \coloneq \Tmin + \sum_{j=1}^r (X_j' +X_j)$. We first show that the size of the \gptree after $r$ rounds is at most $S_r$.

The fitness of $t'$ can only be smaller than the fitness of $t$ if there is at least one index $i$ for which $V(t,i)$ changes from non-negative to negative, which can only happen in positive critical rounds. In particular, in the second case of~\eqref{eq:defofX_r} we have $f(t') \geq f(t)$, and hence the \gptree $t'$ is accepted. Thus, in this case we have $S_{r+1}-S_r = X_{r+1}' + X_{r+1} = s(t')-s(t)$, so $S_j$ and the size of the \gptree both change by the same amount. For the first case of~\eqref{eq:defofX_r}, we have $S_{r+1}-S_r = k + s(t')-s(t) \geq \max\{0, s(t')-s(t)\}$. Since the size of the \gptree changes either by $s(t')-s(t)$ (if $t'$ is accepted) or by $0$ (if $t'$ is rejected), the increase of $S_r$ is at least the increase of the size of the \gptree. Since this is true for all cases, the size of the \gptree is at most $S_r$, as claimed. We will derive upper bounds on $S_r$ in the following.

In order to bound $S_r = \sum_{j=1}^r (X_j+X_j')$ we will prove separately that each of the bounds $\sum_{j=1}^r X_j' \leq \Tmax/3$ and $\sum_{j=1}^r X_j \leq \Tmax/3$ holds with probability at least $1-\exp\{-\Omega(\sqrt{f(n)})\}$. By the union bound, it will follow that \emph{both} bounds together hold with probability at least  $1-\exp\{-\Omega(\sqrt{f(n)})\}$. The two bounds will imply that the size of the \gptree is at most $\Tmin + 2 \Tmax/3 \leq \Tmax$, thus proving the theorem. Recall that we need to consider the range $1\leq r \leq r_0 = f(n) \eps\Tmin \log \Tmin$. 

%\subsubsection{Bounding $X_j'$}
First we bound $X_j'$.
 
\noindent For $\sum_{j=1}^r X_j'$, note that each $X_j'$ is the sum of $k$ Bernoulli-type random variables (with values $+1$ for insertion, $-1$ for deletion, and $0$ for relabeling), where $k$ is either constantly $1$ or $1+\Pois(1)$-distributed, depending on the algorithm. Let us denote by $K_r$ the total number of Bernoulli-type variables (i.e., the total number of mutations in $r$ rounds). In the case where we always choose $k=1$, we have trivially $K_r=r$. In the case $k \sim 1+\Pois(1)$ we have $K_r \sim r+\Pois(r)$ since the sum of independent Poisson distributed random variables is again Poisson distributed. Since $\Pois(r)$ is dominated by $\Pois(r_0)$, we have
\[
\Pr[K_r \geq 3 r_0] \leq \Pr[\Pois(r_0) \geq 2r_0] \stackrel{\eqref{eq:Poissonbound}}{\leq} \frac{e^{-r_0}(er_0)^{2r_0}}{(2r_0)^{2r_0}} = \left(\frac{e}{4}\right)^{r_0}
\]
for each $r \leq r_0$. Note that this estimate holds also for the case that all $k$ are one, because then the probability on the left is zero. Taking a union bound over all $1\leq r \leq r_0$ we see that with exponentially high probability\footnote{that means with probability $1-e^{-\Omega(r_0)}$.} $K_r \leq 3r_0$ also holds uniformly for all $1\leq r \leq r_0$. For each mutation the probability of insertion, deletion, and substitution is $1/3$ each, i.e., each of the $K_r$ Bernoulli-type random variables contributes $+1$, $-1$, or $0$, with probability $1/3$ each. %\footnote{This would not be true if the \gptree was empty because then a deletion would fail. So here we use the assumption that the size of the \gptree is at least $\Tmin$.} 
Thus we may use the Chernoff bound, Theorem~\ref{thm:Chernoff}, to infer that with sufficiently high probability $\sum_{j=1}^r X_j' \leq r_0^{3/4} < \Tmax/3$ holds uniformly for all $1\leq r \leq r_0$. In particular, this probability is $1-\exp\large\{-\Omega(\sqrt{f(n)})\large\}$.

%\subsubsection{Bounding $X_j$: Setup}

%Next we consider bounding $X_j$.

It remains to bound $\sum_{j=1}^r X_j$. Recall that $X_j$ is either zero or the the number of mutations applied in the $j$-th round. Therefore, the sum is non-decreasing in $r$ and it suffices to bound the sum for $r=r_0$. And the same bound will follow for all $r \leq r_0$. %We will bound its expectation, and then apply Markov's inequality.

We fix some $i \in [n]$ and consider the random walk of the variable $V(t_r,i)$. Recall that we assume the size of the \gptree $t_r$ to be at least $\Tmin$. Since $V(t_r,i)$ can only change in $i$-rounds, it makes sense to study the random walk by only considering $i$-rounds. We will apply Theorem~\ref{lem:weakdrift} with $N \coloneq \sqrt{\Tmin/n}$ to this random walk. To this end, in the following paragraphs we prove that the random walk that $V(t_r,i)$ performs in $i$-rounds satisfies the conditions of Theorem~\ref{lem:weakdrift}.

%\subsubsection{Bounding $X_j$: Computing the Drift}

Now we are ready to compute the drift of $X_j$.

\noindent Let us first consider $v\geq 1$, and compute the drift 
\[
\Delta_{v,i} \coloneq \EE[V(t_{r+1},i)- V(t_r,i) \mid V(t_r,i) = v, r \text{ is $i$-round}].
\] 
We mind the reader to not confuse this drift with the drift of $S_r$, which is a very different concept. The notation $\Delta_{v,i}$ is slightly abusive because the drift does depend on $t_r$ too. However, we will derive lower bounds on the drift which are independent of $t_r$, thus justifying the abuse of notation. 
In fact, we will compute the drift of
\[
\Delta_{v,i}' \coloneq \EE[V(t_{r}',i)- V(t_r,i) \mid V(t_r,i) = v, r \text{ is $i$-round}],
\] 
where $t_r'$ is the offspring of $t_r$. In other words, we ignore whether the offspring is accepted or not. Note that this can only decrease the drift, since a mutation that causes $t_r'$ to be rejected can not increase $V(t_r,i)$. Hence, any lower bound on $\Delta_{v,i}'$ is also a lower bound on $\Delta_{v,i}$.

Let $\event{E}_r$ be the event that $r$ is an $i$-round. Note that
\begin{equation}\label{eq:E_r}
\Pr[\event{E}_r] \in \Omega(1/n), 
\end{equation}
since we always have probability $1/(3n)$ to touch $i$ with an insertion.

Consider any round $r$ conditioned on $\event{E}_r$ and let $M$ be a mutation in round $r$. If $M$ does not touch $i$, then $M$ does not change $V(t_r,i)$ and the contribution to the drift is zero. Next we consider the case that $M$ is an insertion of either $\var_i$ or $\nonvar_i$. Both cases are equally likely and the case that $M$ is an insertion contributes zero to the drift. By the same argument, the cases that $M$ relabels a non-$i$-literal into $\var_i$ or into $\nonvar_i$ cancel out and together contribute zero to the drift. 

Next consider deletions of $\var_i$ or $\nonvar_i$. This case is not symmetric, since there are $v \geq 1$ more $\var_i$ than $\nonvar_i$. Assume that the number of $\var_i$ is $x+v$, while the number of $\nonvar_i$ is $x$, for some $x\geq 0$. Consider the first $x$ occurrences of $\var_i$. Then the probability that a deletion $M$ picks one of these first $\var_i$ equals the probability that $M$ picks one of the $\nonvar_i$. As before, both cases are equally likely. Therefore, the contribution to the drift from either picking one of the first $x$ occurrences of $\var_i$ or any occurrence of $\nonvar_i$, cancel out. For the remaining $v$ literals $\var_i$ the unconditional probability that a deletion picks one of them is $v/|t_r| \leq v/\Tmin$, where $|t_r| \geq \Tmin$ is the current size of the \gptree. Thus the conditional probability (on $\event{E}_r$) to pick one of them is at most $\BigO(vn/\Tmin)$ by~\eqref{eq:E_r}. Since the conditional expected number of deletions is $\EE[\#\text{ deletions} \mid \event{E}_r] \in\BigO(1)$ by~Lemma~\ref{lem:touched}, the deletions contribute $- \BigO(vn/\Tmin)$ to the drift $\Delta_{v,i}$. By the same argument we also get a contribution of $- \BigO(vn/\Tmin)$ for relabelings of $\var_i$-literals or $\nonvar_i$-literals. 

Summarizing, the only cases contributing to $\Delta_{v,i}'$ are deletions and relabeling of $i$-literals, and they contribute not less than $-\BigO(vn/\Tmin)$, which is $-\BigO(\sqrt{n/\Tmin})$ for $v \leq N = \sqrt{\Tmin/n}$. All other cases contribute zero to $\Delta_{v,i}'$. Therefore, the random walk of $V(t_r,i)$ (where we only consider rounds which touch $i$) satisfies the first condition of Theorem~\ref{lem:weakdrift} with $N = \sqrt{\Tmin/n}$.

%\subsubsection{Bounding $X_j$: Step Sizes and Initial Increase}

Now we consider the step size and the initial increase of $X_j$.

The second condition (small steps) of Theorem~\ref{lem:weakdrift} follows from Lemma~\ref{lem:touched}. Finally, for the third condition (initial increase) we show that for every $v \leq N$, where $N = \sqrt{\Tmin/n}$ and every $n$ sufficiently large, with probability at least $\delta$ the next non-stationary step increases $V(t_r,i)$ by exactly one. Note that by Lemma~\ref{lem:touched}, an $i$-round has probability $\Omega(1)$ to have exactly one mutation. Now we distinguish two cases: if there are less than $s(t_r)/n$ occurrences of $\var_i$ then the probability to touch $i$ in any way is $\BigO(1/n)$ and the probability of inserting an $\var_i$-literal is $\Omega(1/n)$. Hence, conditioned on touching $i$, with probability $\Omega(1)$ the only mutation in this round is an insertion of $\var_i$. 

For the other case, assume there are more than $s(t_r)/n \geq \Tmin/n \in \omega(1)$ occurrences of $i$-literals. Additionally, assume that $v \leq \sqrt{\Tmin/n} < (1/3) s(t_r)/n$, where the last inequality holds for $n$ large enough since then $\Tmin/n$ is large enough. Then $\nonvar_i$ occurs at least half as often as $\var_i$, and thus the probability of deleting or relabeling a $\nonvar_i$-literal is at least half as big as the probability to delete or relabel an $\var_i$-literal. Therefore, a mutation that touches $i$ is with probability $\Omega(1)$ a deletion of $\nonvar_i$. So in both cases the first mutation that touches $i$ increases $V(t_r,i)$ with probability $\Omega(1)$. This proves that the third condition of Theorem~\ref{lem:weakdrift} is satisfied. 

%\paragraph{Bounding $X_j$: Putting Everything Together}

We can now put everything together regarding the behavior of $X_j$.

\noindent So far, we have shown that $V(t_r,i)$ performs a random walk that satisfies the conditions of Theorem~\ref{lem:weakdrift}. Hence, for $0 < v < \eps' N = \eps'\sqrt{\Tmin/n}$ the expected hitting time of $\{[0,1,\ldots,v]\}$ when starting at any value larger than $v$ is $\Omega(\sqrt{\Tmin/n})$, for a suitable constant $\eps'>0$. Moreover, with probability $\Omega(1/N)$ the hitting time is at least $\Omega(N^2)$.

Now we have all ingredients to bound the expected number of positive critical rounds. We fix a variable $i$ and some $v \geq 0$ and aim to bound the number of rounds, in which $V(t_r,i) = v$ and $i$ is a critical variable. For $v \geq \eps' N \geq \eps'\sqrt{f(n)} \, \log \Tmin $, with probability at least $1-e^{-\Omega(N)} \geq 1-\exp\{-\Omega(\sqrt{f(n)})\}/\Tmin$ this does not happen in a specific round by Lemma~\ref{lem:touched}. By a union bound, with probability $1-\exp\{-\Omega(\sqrt{f(n)})\}$ it never happens for any variable $i$ and any of $r_0$ rounds, with room to spare. So we may assume $0 \leq v < \eps N$. We use Lemma~\ref{lem:twostates} to estimate how many $i$-rounds occur with $V(t_r,i) = v$ before for the first time $V(t_r,i) > v$. For this purpose we check the conditions of Lemma~\ref{lem:twostates}. In each $i$-round with $V(t_r,i) = v$, with probability $\Omega(1)$ the value of $V(t_r,i) = v$ increases strictly by Lemma~\ref{lem:touched}. On the other hand, once $V(t_r,i) > v$ it takes in expectation at least $\Omega(\sqrt{\Tmin/n})$ $i$-rounds before the interval $[0,1,\ldots,v]$ is hit again, and it takes at least $\Omega(\Tmin/n)$ $i$-rounds with probability at least $\Omega(\sqrt{n/\Tmin})$. Thus we are in the situation of Lemma~\ref{lem:twostates} with $\delta \in \Omega(1)$ and $s= \Theta(\sqrt{\Tmin/n})$.

Let $E_i$ denote the number of $i$-rounds and let $E_{i,v}$ be the number of $i$-rounds with $V(t_r,i) = v$. Note that we can only apply Lemma~\ref{lem:twostates} if $E_i \geq s$. However, in each round we have probability at least $1/(3n)$ to insert an $i$-literal. Hence, $\EE[E_i] \geq r_0/(3n) \in \Omega(f(n)\log n)$. In particular, by the Chernoff bound, Theorem~\ref{thm:Chernoff}, $\Pr[E_i < r_0/(6n)] \leq e^{-\Omega(f(n) \log n)} \ll (1/n) e^{-\Omega(f(n))}$. Hence, after a union bound over all $i$, we observe that with probability $1-e^{-\Omega(f(n))}$ we have $E_i \geq r_0/(6n)$ for all $1\leq i \leq n$, and we will assume this henceforth. In particular, $E_i \geq r_0/(6n) \geq s$. Thus we may apply Lemma~\ref{lem:twostates} with $r= E_i$ and obtain
\begin{align*}
\EE[E_{i,v}] \leq C  \sqrt{\frac{n}{\Tmin}} \EE[E_i]
\end{align*}
for a suitable constant $C>0$. Moreover, by the tail bound in Lemma~\ref{lem:twostates}, 
\begin{align}\label{eq:tailboundEi}
\Pr\left[E_{i,v} \leq 2C \sqrt{\frac{n}{\Tmin}} E_i \right] & \geq 1- e^{-r_0/(12ns)} \in 1- e^{-\Omega(\sqrt{f(n)}\log \Tmin)} \nonumber \\
& \geq 1- \frac{1}{nN}e^{-\Omega(\sqrt{f(n)})}.
\end{align}
By a union bound over all $i$ and $v$ we see that with probability $1-\exp\{-\Omega(\sqrt{f(n)})\}$ the bound $E_{i,v} \leq 2C \sqrt{n/\Tmin} E_i$ from~\eqref{eq:tailboundEi} holds for all $1\leq i\leq n$ and all $1\leq v \leq \sqrt{N}$. So again we may assume this from now on.

An $i$-round with $V(t_r,i) = v$ has probability $e^{-\Omega(v)}$ for $i$ to be critical by Lemma~\ref{lem:touched}. Therefore, the expected number of critical rounds within the first $r_0$ rounds is at most
\begin{align}\label{eq:expcritical}
\EE[\#\{\text{critical rounds}\}] & \leq \sum_{\substack{i \in [n] \\0 \leq v \leq \eps N}} e^{-\Omega(v)}\cdot \EE[E_{i,v}] \in \BigO\left(\sqrt{\frac{n}{\Tmin}}\right) \sum_{i \in [n]}\EE[E_i].
\end{align}
The bound $e^{-\Omega(v)}$ that an $i$-round with $V(t_r,i) = v$ is critical holds independently of all previous rounds. Therefore, as before we can use the Chernoff bound to amend~\eqref{eq:expcritical} by the corresponding tail bound and obtain with probability at least $1- e^{-\Omega(\sqrt{f(n)})}$ that
\begin{align}\label{eq:tailboundcritical}
\#\{\text{critical rounds}\} \leq C'\sqrt{\frac{n}{\Tmin}} \sum_{i \in [n]}E_i
\end{align}
for a suitable constant $C'>0$.

We bound the sum further by observing that in each round only $\BigO(1)$ literals are touched in expectation and the number of touched literal drops at least exponentially. Therefore, $\sum_{i \in [n]}\EE[E_i] \in \BigO(r_0)$ and by standard concentration bounds~\cite[Theorem11]{kotzing2016concentration} with probability $1- \exp\{-\Omega(\sqrt{f(n)})\}$ the expectation is not exceeded by more than a constant factor. Moreover, by assumption we have $\Tmin \geq f(n)n\log^2 n$, which implies $\Tmin \geq (1/2) f(n) n \log^2 \Tmin$ for sufficiently large $n$. 
Hence, with probability $1- \exp\{-\Omega(\sqrt{f(n)})\}$
\begin{align*}
\#\{\text{critical rounds}\} & \in \BigO\left(r_0\sqrt{\frac{n}{\Tmin}}\right) \in \BigO\left(\frac{r_0}{\sqrt{f(n)}\log\Tmin}\right)\\
&\leq \tfrac{1}{12} \sqrt{f(n)}\Tmin,
\end{align*}
where the last step follows from $r_0 = f(n)\eps\Tmin \log \Tmin$ if $\eps>0$ is sufficiently small. 
Since $X_j$ is zero in non-critical rounds and is bounded by $1+\Pois(1)$ in critical rounds, as before we may use~\cite[Theorem11]{kotzing2016concentration} to get the following tail bound.
\[
\Pr\left[\sum_{j=1}^{r_0} X_j \leq \tfrac13 \sqrt{f(n)}\Tmin\right] \in 1- e^{-\Omega(\sqrt{f(n)})}.
\]
Thus we have shown that with sufficiently large probability $\sum_{j=1}^{r_0} X_j \leq \tfrac{1}{3} \sqrt{f(n)}\Tmin = \Tmax /3$. This proves the desired bound on $S_r$ and thus concludes the proof of Theorem~\ref{thm:lengthMA}.
\end{proof}

\subsubsection{Run Time Bound}

%\subsubsection{Tail Bounds}

For technical reasons, we first need to prove a rather technical statement that holds with high probability. 

\begin{lemma}\label{lem:MAwhp}
There is $\eps >0$ such that the following holds for any growing function $f(n) \in \omega(1)$ with $f(n) \in \LittleO(n)$. Let $\Tmin \coloneq \max\{\tinit, f(n) n\log^2 n\}$. If $n$ is sufficiently large, then for any starting tree, with probability at least $1-\exp\large\{-f(n)^{1/4}\large\}$ the \oneonegp without bloat control on \majority finds a global optimum within $r_0 \coloneq \eps f(n) \Tmin \log \Tmin$ rounds, and the size of the \gptree never exceeds $\Tmax = \sqrt{f(n)}\Tmin$.
\end{lemma}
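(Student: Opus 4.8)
The plan is to assemble the lemma from two ingredients: the bloat estimate of Theorem~\ref{thm:lengthMA}, which bounds the size of the \gptree during the first $r_0 = \eps f(n)\Tmin\log\Tmin$ rounds, and a multiplicative-drift argument that controls the progress of the fitness while the tree stays small. First I would invoke Theorem~\ref{thm:lengthMA} with the given $f$, obtaining the event $\event{B}$ that within the first $r_0$ rounds the \gptree never has more than $\Tmax = \sqrt{f(n)}\Tmin$ leaves; this event has probability at least $1 - \exp\{-\Omega(\sqrt{f(n)})\}$. Since all the drift estimates below use only the bound $s(t_r) \leq \Tmax$, it suffices to prove that, conditioned on $\event{B}$ (equivalently, for the process stopped when the size first exceeds $\Tmax$), the optimum is reached within $r_0$ rounds except with probability $\exp\{-\Omega(\sqrt{f(n)})\}$; a final union bound and a re-choice of $\eps$ then give the claimed $\exp\{-f(n)^{1/4}\}$ for $n$ large.

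For the progress argument I would track, for each $i \in [n]$ that is not yet expressed, the potential $V'(t_r,i) = \#\{\nonvar_i\text{-literals}\} - \#\{\var_i\text{-literals}\}$, extended by the convention that it equals $1$ when $\var_i$ and $\nonvar_i$ are both absent, so that $V'(t_r,i)$ is the number of $\var_i$-insertions needed to express $i$ and equals $0$ exactly when $i$ is expressed. As sketched in Section~\ref{sec:upperboundoutline}, insertions of $\var_i$ and $\nonvar_i$ cancel in the drift, as do relabelings of non-$i$-literals into $\var_i$ or $\nonvar_i$, while among the $i$-literals there are $V'(t_r,i)$ surplus $\nonvar_i$ whose deletion or relabeling biases the walk towards $0$; this yields $\EE[V'(t_{r+1},i) \mid t_r] \leq (1 - \tfrac{1}{3e\Tmax})\,V'(t_r,i)$ on $\event{B}$, and for the boundary value $V'(t_r,i) = 1$ an even stronger drift of order $1/n$, since a single insertion of $\var_i$ (probability $\Omega(1/n)$) already expresses $i$. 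Theorem~\ref{thm:multiplicativeDrift} with $\delta = 1/(3e\Tmax)$, $s_{\min} = 1$ and $V'(t_0,i) \leq \tinit \leq \Tmin$ then gives, for the first time $T_i$ at which $i$ is expressed, $\Pr[T_i > r_0] \leq \exp\{-\Omega(\sqrt{f(n)}\log\Tmin)\}$ once $\eps$ has been fixed and $n$ is large; a union bound over the $n \leq \Tmin$ variables shows that with probability $1 - \exp\{-\Omega(\sqrt{f(n)})\}$ every variable is expressed at least once during the first $r_0$ rounds.

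The step I expect to be the main obstacle is to upgrade ``every variable is expressed at least once'' to ``all variables are expressed simultaneously within $r_0$ rounds'', because the potentials $V'(t_r,i)$ are not monotone: an adverse mutation can push an already-expressed variable back into the unexpressed region, and such an offspring need not be rejected (it is accepted whenever another variable becomes expressed in the same round, keeping the fitness constant). I would handle this using that the fitness is non-decreasing along the run, so that it is enough to show the fitness reaches $n$ within $r_0$ rounds, together with the observation that an accepted round which un-expresses a variable must express another one; hence the total number of un-expression events within $r_0$ rounds is at most the number of critical rounds, which is $\BigO(\sqrt{f(n)}\Tmin)$ with probability $1-\exp\{-\Omega(\sqrt{f(n)})\}$ by the analysis already carried out in the proof of Theorem~\ref{thm:lengthMA}. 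Each such event restarts a short multiplicative-drift phase, at a value that is $\BigO(1)$ with overwhelming probability by Lemma~\ref{lem:touched}, whose duration has the same exponential tail as above; combining this with the occupation-time estimate of Lemma~\ref{lem:twostates} (used exactly as in the bloat proof, with the $\Omega(\sqrt{\Tmin/n})$ return-time lower bounds) shows that with probability $1-\exp\{-\Omega(\sqrt{f(n)})\}$ no variable is left unexpressed at round $r_0$. Intersecting the events from the three paragraphs yields the bound $1-\exp\{-f(n)^{1/4}\}$, and on this event the size never exceeds $\Tmax$ and the optimum is reached within $r_0$ rounds. The bookkeeping of making these tail probabilities compose cleanly, and the care needed because $\Tmax$ must be large enough relative to $n$ (which is precisely what the choice $\Tmin = \max\{\tinit, f(n)n\log^2 n\}$ guarantees), is the bulk of the technical work; the per-variable drift computation and the invocations of Theorems~\ref{thm:lengthMA} and~\ref{thm:multiplicativeDrift} are routine.
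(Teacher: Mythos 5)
Your first two steps match the paper's proof: the size bound comes from Theorem~\ref{thm:lengthMA} (applied to the process stopped when the size first exceeds $\Tmax$), and the per-variable multiplicative drift for $V'(t_r,i)$ with $\delta=1/(3e\Tmax)$ indeed shows, via Theorem~\ref{thm:multiplicativeDrift} and a union bound, that every variable becomes expressed at least once within $r_0$ rounds with the required tail. The gap is in your third step, which is exactly the step you flagged as the main obstacle. Knowing that (a) every variable is expressed at least once, (b) the number of un-expression events is at most the number of critical rounds, hence $\BigO(\sqrt{f(n)}\,\Tmin)$ with high probability, and (c) after each un-expression the variable returns to the expressed region quickly, does \emph{not} imply that the fitness reaches $n$ within $r_0$ rounds. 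A re-expression need not increase the fitness: it can be a swap in which another variable becomes unexpressed in the same accepted round, and your bound (b) allows up to $\Theta(\sqrt{f(n)}\,\Tmin)$ such swaps --- vastly more than the $\BigO(n)$ needed to keep some variable unexpressed throughout the run while (a) and (c) both hold; moreover the last un-expression may occur arbitrarily close to round $r_0$, so nothing in (a)--(c) rules out an unexpressed variable at the end. Lemma~\ref{lem:twostates} does not fill this hole either: it bounds how \emph{often} the walk sits at the boundary (which is what the bloat proof needs), not the probability that a boundary visit produces a strict fitness increase.

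What is missing is the paper's ``$i$-fixing'' mechanism. When $V(t_r,i)=0$, an $i$-round consists with probability $\Omega(1)$ of a single mutation that inserts a $\var_i$-literal or deletes a $\nonvar_i$-literal; such a round is accepted, strictly increases the number of expressed variables, and this gain is permanent in the following token sense: since the fitness never decreases, an expressed variable can only become unexpressed if another becomes expressed in the same round, so one may swap the roles of $i$ and $j$ and the count of ``fixed'' variables never drops. The paper then iterates per variable: drift to the boundary within $r_{\mathrm{init}}=\BigO(\Tmax(k+\log \Tmax))$ rounds, at the boundary each $i$-round is fixing with constant probability, a failed attempt throws $V'$ back to a value $\BigO(k)$ (Lemma~\ref{lem:touched}) from which drift returns it within $r_{\mathrm{return}}$ rounds, and $\BigO(\log k)$ repetitions plus $\BigO(n\log k)$ rounds of waiting for $i$-rounds suffice with probability $1-e^{-\Omega(k)}$; a union bound over $i$ and the choice $k=f(n)^{1/3}\log\Tmin/(\log f(n)+\log\log\Tmin)$ give $r_{\mathrm{total}}\leq r_0$ and failure probability $\exp\{-f(n)^{1/4}\}$. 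Without an argument of this kind that forces strict, irreversible fitness increases, your proof establishes bounded bloat and repeated re-expression but does not show that the global optimum is reached within $r_0$ rounds.
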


\begin{proof}
We already know by Theorem~\ref{thm:lengthMA} that with probability $1-\exp\{-\Omega(\sqrt{f(n)})\}$ the size of the \gptree does not exceed $\Tmax$ within $r_0$ rounds. We fix a variable $i$, which is not expressed at the beginning, and consider $V'(t_r,i) \coloneq \max\{-V(t_r,i),0\}$. We claim that $V'(t_r,i)$ has a multiplicative drift,
\begin{equation}\label{eq:MA}
\EE[V'(t_{r},i)-V'(t_{r+1},i) \mid V'(t_{r},i) = v] \geq \frac{v}{3e\Tmax},
\end{equation}
for all $v\geq 0$, as long as $i$ is not expressed.
In order to prove~\eqref{eq:MA} we first consider insertions. It is equally likely to insert $\var_i$ (which decreases $V'(t_{r},i)$) and $\nonvar_i$ (which increases $V'(t_{r},i)$). Moreover, whenever the offspring is accepted after inserting $\nonvar_i$, it is also accepted after inserting $\var_i$. Therefore, the contribution to the drift from insertions is at least zero. Analogously, relabeling a non-$i$-literal into an $i$-literal contributes at least zero to the drift. For deletions, with probability at least $1/(3e)$ we have exactly one mutation, and this mutation is a deletion. In this case, the probability to delete a $\nonvar_i$-literal is exactly by $v/s(t_r) \geq v/\Tmax$ larger than the probability to delete an $\var_i$-literal. Since we always accept deleting a single $\nonvar_i$-literal, this case contributes no less than $-{v}/{(3e\Tmax)}$ to the drift. For all the other cases (several deletions, relabeling of one or several $i$-literals), it is always more likely to pick a $\nonvar_i$-literal for deletion/relabeling than a $\var_i$-literal and it is more likely to accept the offspring if a $\nonvar_i$-literal is deleted/relabeled. Therefore, these remaining cases contribute at least zero to the drift. This proves~\eqref{eq:MA}.

We next show that for $V(t_r,i)=0$ in the next $i$-round with probability $\Omega(1)$ the literal $x_i$ is expressed in the offspring and no other literal becomes unexpressed.  We call such a round \emph{$i$-fixing}. Note that the number of expressed literals can never decrease, so $x_i$ can only become unexpressed if a literal $x_j$ becomes expressed in the same round. In this case we can just swap the roles of $i$ and $j$ for the remainder of the argument. So we may assume that after an $i$-fixing round the literal $x_i$ stays expressed forever. Then it suffices to show that for every $i$, if $i$ is not expressed for a sufficient number of rounds, then there is an $i$-fixing round.

Note that a sufficient condition for an $i$-fixing round is that there is only a single mutation which inserts a new $\var_i$-literal or deletes a $\nonvar_i$-literal. The probability to insert a new $\var_i$-literal equals the probability to insert a new $\nonvar_i$-literal, to create a $\var_i$-literal by relabeling or to create a $\nonvar_i$-literal by relabeling. On the other hand, the probability to delete a $\nonvar_i$-literal equals the probability to delete a $\var_i$-literal (since $V(t_{r},i)=0$), to relabel an $\var_i$-literal and to relabel a $\nonvar_i$-literal. Thus, the probability that an $i$-round with only a single mutation is $i$-fixing is at least $1/3$. Moreover, an $i$-round has probability $\Omega(1)$ to consist of a single mutation by Lemma~\ref{lem:touched}. This proves that for $V(t_r,i)=0$ the next $i$-round has probability $\Omega(1)$ to be $i$-fixing.

By the Multiplicative Drift Theorem%~\ref{thm:multiplicativeDriftUpper}
~\ref{thm:multiplicativeDrift}, $V'(t_{r},i)$ reaches $0$ after at most $r_{\text{init}} \coloneq 3e \Tmax (k+\log \Tmax)$ steps with probability at least $1-e^{-k}$, for a parameter $k>0$ that we fix later. Moreover, once at $0$ the next $i$-round is $i$-fixing with probability $\Omega(1)$. If it is not $i$-fixing, then $V'(t_{r},i)$ may jumps from $0$ to a positive value. This value will be at most $k$ with probability at least $1-e^{-\Omega(k)}$ by Lemma~\ref{lem:touched}, and again by the Multiplicative Drift Theorem $V'(t_{r},i)$ will return to $0$ after $r_{\text{return}}\coloneq3e \Tmax (k+\log \log k+\BigO(1)))$ steps with probability at least $1-e^{-\Omega(k)}$. Assume this pattern repeats up to $C\log k$ times, for a sufficiently large constant $C>0$. Then the probability that there is an $i$-fixing round with $V'(t_r,i)=0$ is at least $1-e^{-\Omega(k)}$. It remains to estimate the number of rounds spent in the state $V'(t_{r},i)=0$. Since each round has probability at least $1/(3n)$ to be an $i$-round, among any $r_{\text{fix}}\coloneq 6Cn\log k$ rounds there will be at least $C\log k$ $i$-rounds with probability at least $1-e^{-\Omega(k)}$. In particular, if we spend $6Cn\log k$ rounds in the state $V'(t_{r},i)=0$, then with probability at least $1-e^{-\Omega(k)}$ at least $C\log k$ of them will be $i$-rounds. By a union bound, the probability that there is an $i$-fixing round with $V'(t_r,i)=0$ within $r_{\text{total}} \coloneq r_{\text{init}} + C \log k  r_{\text{return}} + r_{\text{fix}}$ rounds is $1-\BigO(e^{-\Omega(k)}\log k) \geq 1-e^{-\Omega(k)}$, where the latter bound holds if $k$ is sufficiently large.

By a union bound over all $i$, with probability $1-ne^{-\Omega(k)}$ all indices will be fixed after at most $r_{\text{total}} \in \BigO(\Tmax k \log k)$ steps. Choosing $k = f^{1/3}\log \Tmin/(\log f(n) + \log\log \Tmin)$ gives $ne^{-\Omega(k)} \leq \exp\large\{-f(n)^{1/4}\large\}$ and $r_{\text{total}} \leq r_0$, both with room to spare. This proves the lemma.
\end{proof}

%\subsubsection{Expectation}

Finally we are ready to prove Theorem~\ref{thm:majority}.

%\begin{theorem}[Theorem~\ref{thm:majority}]
%For any starting tree with $\tinit$ leaves, the expected optimization time of the \oneonegp without bloat control on \majority is  $\BigO(\tinit\log \tinit + n\log^3 n)$.
%\end{theorem}
\begin{proof}[Proof of Theorem~\ref{thm:majority}]
The theorem essentially follows from Lemma~\ref{lem:MAwhp} by using restarts. Let $f(n) \in \omega(1)$ be a growing function such that $f(n) \leq n$. We define a sequence $(T_i)_{i\geq 0}$ recursively by $T_0 \coloneq \Tmin = \max\{\tinit, n\log^2 n\}$ and $T_{i+1} \coloneq \sqrt{f(n)} T_i$. Moreover, we define $r_i \coloneq \eps f(n) T_i \log T_i$, where $\eps >0$ is the constant from Lemma~\ref{lem:MAwhp}. Note that $T_i$ and $r_i$ are chosen such that when we start with any \gptree of size $T_i$, then with probability at least $1-\exp\large\{-f(n)^{1/4}\large\}$ a global optimum is found within the next $r_{i+1}$ rounds without exceeding size $T_{i+1}$.

By Lemma~\ref{lem:MAwhp} there is a high chance to find an optimum in $r_0$ rounds without increasing the size of the \gptree too much. In this case, the optimization time is at most $r_0$. For the other case, the probability that either the global optimum is not found or the size of the \gptree exceeds $T_1$ is at most $p\coloneq \exp\large\{-f(n)^{1/4}\large\}$. Let $t_1$ be the \gptree at the first point in time where something goes wrong. I.e., we set $t_1$ to be the first \gptree of size larger than $T_1$, if this happens within the first $r_0$ rounds; otherwise we set $t_1$ to be the \gptree after $r_0$ rounds. In either case, $t_1$ is a \gptree of size at most $T_1$. Then we do a restart, i.e., we apply Lemma~\ref{lem:MAwhp} again with $t_1$ as the starting tree. Similar as before, there is a high chance to find an optimum in $r_1$ rounds without blowing up the \gptree too much. Otherwise (with probability at most $p$), we define $t_2$ to be the first \gptree with size at least $T_2$, if such a tree exists before round $r_0+r_1$; otherwise, we let $t_2$ be the tree at time $r_0+r_1$. Repeating this argument, the expected optimization time $T_{\text{opt}}$ is at most
\begin{align*}
\EE[T_{\text{opt}}] & \leq r_0 + p\left(r_1 + p\left(r_2 +p\left(\ldots \right)\right) \right) 
 = \sum_{i=0}^\infty p^i r_i = \eps f(n) \sum_{i=0}^\infty p^i T_i \log T_i
\end{align*}
By the recursive definition we see that $T_i = f(n)^{i/2} \Tmin$. In particular, using that $p\sqrt{f(n)} < 1/2$ for sufficiently large $n$ we obtain
\begin{align*}
\EE[T_{\text{opt}}] & \leq \eps f(n) \sum_{i=0}^\infty 2^{-i} \Tmin \log \left(f(n)^{i/2}\Tmin\right) \\
& = \eps f(n)\Tmin  \left(\log(\Tmin)\sum_{i=0}^\infty 2^{-i} + \log\left(f(n)\right)\sum_{i=0}^\infty 2^{-i}\frac{i}{2} \right)\\
& \stackrel{f(n) < n < \Tmin}{\leq} 3 \eps f(n) \Tmin \log \Tmin.
\end{align*}
This shows that for every arbitrarily slowly growing function $f(n)$ we have $\EE[T_{\text{opt}}] \leq 3 \eps f(n) \Tmin \log \Tmin$. We claim that we may replace the function $f(n)$ by a constant, i.e., that $\EE[T_{\text{opt}}]  \leq 3\eps C \Tmin \log \Tmin$ for a suitable constant $C>0$. Assume otherwise for the sake of contradiction, i.e., assume that for every constant $C>0$ there are arbitrarily large $n_C$ and \gptrees $t_C$ of size $T_C$ such that $\EE[T_{\text{opt}} \mid t_{\text{init}} = t_C] > 3\eps C T_C \log T_C$. Then we choose a growing sequence $C_i$ (for instance $C_i=i$). Since for each $C_i$ there are arbitrarily large counterexamples $n_{C_i}, t_{C_i}$, we may choose a growing sequence $n_{C_1} <n_{C_2}<n_{C_3}< \ldots$ of counterexamples. Now we define $f(n) \coloneq \min\{i \mid n_{C_i} > n\} \in \omega(1)$ and obtain a contradiction, since we have an infinite sequence of counterexamples for which $\EE[T_{\text{opt}}] > 3 \eps f(n) \Tmin \log \Tmin$. Hence we have shown for a suitable constant $C>0$ that $\EE[T_{\text{opt}}]  \leq 3\eps C \Tmin \log \Tmin$. This proves the theorem, since $\Tmin \log\Tmin \in \Theta(\max\{\tinit \log\tinit, n\log^3 n\})$.
\end{proof}

\section{Conclusion}
\label{sec:conclusion}
We considered a simple mutational genetic programming algorithm, the \oneonegp, and studied the two simple problems \order and \majority. It turns out that for these optimization is efficient in spite of the possibility of bloat: except for logarithmic factors, all run times are linear. However, bloat and the variable length representations were not easily analyzed, but required rather deep insights into the optimization process and the growth of the \gptrees.

For optimization preferring smaller \gptrees we observed a very efficient optimization behavior: whenever there is a significant number of redundant leaves, these leaves are being pruned. Whenever only few redundant leaves are present, the algorithm easily increases the fitness of the \gptree.

For optimization without consideration of the size of the \gptrees, we were able to show that the extent of bloat is not too excessive during the optimization process, meaning that the tree is only larger by multiplicative polylogarithmic factors. While such factors are not a major obstacle for a theoretical analysis, a solution which is not even linear in the optimal solution might not be desirable from a practical point of view. For actually obtaining small solutions, some kind bloat control should be used.

From our analysis we witnessed an interesting option for bloat control: by changing the probabilities such that deletions are more likely than insertions we would observe in the presented drift equations a bias towards shorter solutions. Overall, this would lead to faster optimization.

\bibliographystyle{plain}

%\bibliography{GPBounds_main}

\end{document}